\newtheorem{theorem}{Theorem}
\newtheorem{lemma}[theorem]{Lemma}
\newtheorem{assumption}{Assumption}
\theoremstyle{definition}
\newtheorem{remark}{Remark}
\newcommand{\rbr}[1]{\left(#1\right)}
\newcommand{\sbr}[1]{\left[#1\right]}
\newcommand{\R}{\mathbb{R}}
\newcommand{\N}{\mathbb{N}}
\newcommand{\mF}{\mathcal{F}}
\newcommand{\mB}{\mathcal{B}}
\newcommand{\mC}{\mathcal{C}}
\newcommand{\mH}{\mathcal{H}}
\newcommand{\mS}{\mathcal{S}}
\newcommand{\mO}{\mathcal{O}}
\newcommand{\mN}{\mathcal{N}}
\newcommand{\mX}{\mathcal{X}}
\newcommand{\Ep}{\mathbb{E}}
\renewcommand{\Pr}{\mathbb{P}}
\renewcommand{\hat}{\widehat}
\renewcommand{\tilde}{\widetilde}
\newcommand{\argmax}{\operatornamewithlimits{argmax}}
\def\bX{\mathbf{X}}
\def\bK{\mathbf{K}}
\def\ba{\bm{a}}
\def\bk{\bm{k}}
\def\bI{\bm{I}}
\def\bx{\bm{x}}
\def\by{\bm{y}}
\newcommand{\secref}[1]{Section~\ref{#1}}
\newcommand{\appref}[1]{Appendix~\ref{#1}}
\newcommand{\algoref}[1]{Algorithm~\ref{#1}}
\newcommand{\asmpref}[1]{Assumption~\ref{#1}}
\newcommand{\thmref}[1]{Theorem~\ref{#1}}
\newcommand{\lemref}[1]{Lemma~\ref{#1}}
\newcommand{\figref}[1]{Figure~\ref{#1}}
\newcommand{\tabref}[1]{Table~\ref{#1}}
\title{Near-Optimal Algorithm for Non-Stationary Kernelized Bandits}
\date{}
\author[1]{Shogo Iwazaki}
\author[2,3]{Shion Takeno}
\affil[1]{MI-6 Ltd.}
\affil[2]{Nagoya University}
\affil[3]{RIKEN AIP}
\affil[1]{\texttt{{shogo.iwazaki@gmail.com}}}
\affil[2]{\texttt{{takeno.shion.m6@f.mail.nagoya-u.ac.jp}}}
\begin{document}
\maketitle

\begin{abstract}
This paper studies a non-stationary kernelized bandit (KB) problem, also called time-varying Bayesian optimization, where one seeks to minimize the regret under an unknown reward function that varies over time.
In particular, we focus on a near-optimal algorithm whose regret upper bound matches the regret lower bound.
For this goal, we show the first algorithm-independent regret lower bound for non-stationary KB with squared exponential and Mat\'ern kernels, which reveals that an existing optimization-based KB algorithm with slight modification is near-optimal.
However, this existing algorithm suffers from feasibility issues due to its huge computational cost.
Therefore, we propose a novel near-optimal algorithm called restarting phased elimination with random permutation (R-PERP), which bypasses the huge computational cost.
A technical key point is the simple permutation procedures of query candidates, which enable us to derive a novel tighter confidence bound tailored to the non-stationary problems.
\end{abstract}

\section{Introduction}
\label{sec:intro}

Kernelized bandit (KB) problem~\citep{srinivas10gaussian}, also called Gaussian process bandit or Bayesian optimization, is one of the important sequential decision-making problems where one seeks to minimize the regret under an unknown reward function via sequentially acquiring function evaluations.
As the name suggests, in the KB problem, the underlying reward function is assumed to be an element of reproducing kernel Hilbert space (RKHS) induced by a known fixed kernel function.
KB has been applied in many applications, such as materials discovery~\citep{ueno2016combo}, drug discovery~\citep{korovina2020-chemBO}, and robotics~\citep{berkenkamp2023bayesian}.
In addition, the near-optimal KB algorithms, whose regret upper bound matches the regret lower bound derived in \citet{scarlett2017lower}, have been shown~\citep{camilleri2021high,salgia2021domain,li2022gaussian,salgiarandom}.

Non-stationary KB~\citep{bogunovic2016time} considers the optimization under a non-stationary environment; that is, the reward function may change over time within some RKHS.
This modification is crucial in many practical applications where an objective function varies over time, such as financial markets~\citep{heaton1999stock} and recommender systems~\citep{Hariri2015adapting}.
For example, \cite{zhou2021no,deng2022weighted} have proposed upper confidence bound (UCB)-based algorithms for the non-stationary KB problem and derived the upper bound of the cumulative regret.
Recently, \citet{hong2023optimization} have proposed an optimization-based KB (OPKB\footnote{\citet{hong2023optimization} originally uses the terminology OPKB only for the algorithm under the stationary KB problems. For simplicity, in this paper, we use the terminology OPKB for the general KB algorithms constructed on optimization-based procedures.}) algorithm, which achieves a tighter regret upper bound than that of \citet{zhou2021no,deng2022weighted}.
This result implies that the OPKB algorithm is near-optimal for a linear kernel.

However, there are still two open problems with the non-stationary KB problem.
First, although the OPKB algorithm is near-optimal for the linear kernel, the optimality for squared exponential (SE) and the Mat\'ern kernels has not been revealed.
Since SE and Mat\'ern kernels are widely used in practice and of interest in theory~\citep{shahriari2016taking}, revealing the optimality for these two kernels is valuable.
Second, although the OPKB algorithm achieves the known-best regret upper bound, the OPKB suffers from feasibility issues when the cardinality of the input set is huge.
The OPKB algorithm requires running two costly procedures: (i) the construction of an explicit feature map of the kernel and (ii) the optimization whose dimension is $|\mX|$, where $\mX$ is the input set of the reward function.
%
In particular, the procedure (ii) requires $\mO(|\mX|^3)$ computation for every steps in the $|\mX|$-dimensional optimization.
Therefore, when the input set $\mX$ is huge, even running the OPKB algorithm can be unrealistic.

\begin{table*}[t]
    \centering
    \caption{The comparison of existing and our algorithms for regrets and computational costs under a finite input set $\mX \subset \R^d$. We denote $V_T$ as the upper bound of the total variation of the sequence of underlying reward functions (precise definition is in \asmpref{asmp:nons}) and $M$ as the total iteration to solve $|\mX|$-dimensional optimization problem in the OPKB. 
    For the regret upper bound described in the table, we assume that $V_T$ satisfies $V_T > c$, where $c > 0$ is any fixed constant.
    }
    \label{tab:comp_alg}
    \begin{tabular}{c|c|c|c}
        Algorithm & Regret (SE) & Regret (Mat\'ern) &
             Computational cost 
             at step $t \leq T$ \footnotemark \footnotemark 
         \\ \hline
        R/SW-GP-UCB & $\tilde{\mO}(T^{\frac{3}{4}} V_T^{\frac{1}{4}})$ & $\tilde{\mO}(T^{\frac{12\nu + 13d}{16\nu + 8d}} V_T^{\frac{1}{4}}) $ & $\mO(|\mX|t^2)$ \\
        WGP-UCB & $\tilde{\mO}(T^{\frac{3}{4}} V_T^{\frac{1}{4}})$ & 
        $\tilde{\mO}(T^{\frac{12\nu + 13d}{16\nu + 8d}} V_T^{\frac{1}{4}}) $ & $\mO(|\mX|t^2)$ \\
        OPKB & $\tilde{\mO}(T^{\frac{2}{3}} V_T^{\frac{1}{3}})$ 
        & $\tilde{\mO}(T^{\frac{4\nu + 3d}{6\nu + 3d}} V_T^{\frac{1}{3}})$ & $\mO(M|\mX|^3)$ \\
        R-PREP (Ours) & $\tilde{\mO}(T^{\frac{2}{3}} V_T^{\frac{1}{3}})$ & 
$\tilde{\mO}(T^{\frac{2\nu + d}{3\nu + d}} V_T^{\frac{\nu}{3\nu + d}})$ & $\mO(|\mX|t^2)$ \\ \hline
        Lower bounds (Ours) & $\tilde{\Omega}(T^{\frac{2}{3}} V_T^{\frac{1}{3}})$ & $\Omega(T^{\frac{2\nu + d}{3\nu + d}} V_T^{\frac{\nu}{3\nu + d}})$
        &
    \end{tabular}
\end{table*}
\footnotetext[2]{Strictly speaking, the dependence of $t$ can become smaller in R/SW-GP-UCB and our R-PERP algorithms since these algorithms discard past training data of surrogate models at some intervals. Furthermore, OPKB and R-PERP choose query points using batch-based calculations; therefore, several time steps are skipped, and the computational costs of such time steps are zero.}
\footnotetext[3]{Note that KB problem usually focuses on the regime of $T \ll |\mX|$.}

This paper tackles the above two open problems.
Our contributions are summarized as follows:
\begin{itemize}
    \item We show the first algorithm-independent lower bounds for the non-stationary KB problem for SE and Mat\'ern kernel. Our results shows that any algorithms suffer $\tilde{\Omega}(V_T^{\frac{1}{3}} T^{\frac{2}{3}})$ and $\tilde{\Omega}(V_T^{\frac{\nu}{3\nu + d}} T^{\frac{2\nu + d}{3\nu + d}})$ worst-case regret for SE and Mat\'ern kernel, respectively. Here, $V_T$ denotes the upper bound of the total variation of the sequence of underlying reward 
functions. (See \asmpref{asmp:nons} for the formal definition.)
    \item Based on our lower bound, we confirm that the existing OPKB algorithm achieves nearly optimal regret for the SE kernel. In addition, we show that the OPKB algorithm with a slight modification achieves near-optimal regret for the Mat\'ern kernels under known $V_T$.
    \item We further propose a novel phased elimination (PE)~\citep{li2022gaussian} based algorithm, which bypasses the computationally hard procedures of OPKB. Our proposed algorithm, restarting PE with random permutation (R-PERP), combines the existing PE-based algorithm with simple permutation procedures of query candidates. 
    The key to our regret analysis is the derivation of the tighter confidence bound (CB) tailored to the non-stationary setting based on such a permutation procedure.
\end{itemize}
In \tabref{tab:comp_alg}, we summarized the computational cost and the regret of the existing and our algorithms for SE and Mat\'ern kernels.

\paragraph{Limitations}
The main limitation of this paper is that our near-optimal guarantees of OPKB and R-PERP rely on the prior knowledge of the upper bound $V_T$. However, it is worth noting that, even if $V_T$ is unknown, our R-PERP algorithm achieves tighter regret than existing non-stationary KB algorithms \citep{zhou2021no,deng2022weighted} except for OPKB (see, \appref{sec:unknown_vt}). Specifically, except for OPKB, R-PERP is the only algorithm whose regret upper bound is always sub-linear in Mat\'ern kernels for fixed $V_T$.
Furthermore, in contrast to our proposed R-PERP algorithm, OPKB is hard to apply the problem whose input set $\mathcal{X}$ is huge; therefore, the proposal of R-PERP is also valuable for situations whose $V_T$ is unknown.

\subsection{Related Works}
The KB problems under stationary environments are 
extensively studied, and several algorithms are 
proposed, including Gaussian process UCB (GP-UCB) and Thompson sampling (TS)-based algorithm~\citep{srinivas10gaussian,chowdhury2017kernelized}.
Furthermore, \citet{scarlett2017lower} derive the 
regret lower bound of stationary KB for SE and Mat\'ern kernels and 
shows that existing regret upper bounds of UCB or TS-based algorithms are strictly sub-optimal for Mat\'ern kernel. Recently, several works have tackled constructing an algorithm whose regret nearly matches lower bounds~\citep{camilleri2021high,salgia2021domain,li2022gaussian,salgiarandom}.

As for the non-stationary KB problem, \citet{bogunovic2016time} first propose the UCB-based algorithms, called resetting GP-UCB (R-GP-UCB) and time-varying GP-UCB (TV-GP-UCB), which is based on the restart and reset strategy and the smoothly forgetting strategy for past observations, respectively. 
The analysis of \citet{bogunovic2016time} is based on Bayesian assumption whose reward functions follow some Gaussian process. 
The frequentist analysis of R-GP-UCB was later shown in \citet{zhou2021no}. 
\citet{zhou2021no} further proposed sliding window GP-UCB (SW-GP-UCB), which uses the training data in sliding window.
\citet{deng2022weighted} have proposed 
another type of UCB-based algorithm called weighted GP-UCB (WGP-UCB), which is based on the modified version of the GP whose past observed outputs are weighted in a time-dependent manner.
The regret of these UCB-based algorithms has been shown to become $\mathcal{O}(\gamma_T^{\frac{7}{8}} T^{\frac{3}{4}} V_T^{\frac{1}{4}})$, where $\gamma_T$ is the maximum information gain, which represents the complexity of the problem (precise definition is described in \secref{sec:prelim}). 
Recently, the OPKB algorithm proposed by \citet{hong2023optimization} has been shown to achieve $\tilde{\mO}(\gamma_T^{\frac{1}{3}} T^{\frac{2}{3}} V_T^{\frac{1}{3}})$ regret, which nearly matches the lower bound for the linear kernel~\citep{cheung2019learning}. 
As mentioned above, the computational cost of OPKB may 
become huge when the learner uses SE or Mat\'ern kernels. UCB-based algorithms do not have such computational issues, while the regret guarantees are worse than that of OPKB. 

\section{Preliminaries}
\label{sec:prelim}
\paragraph{Problem Setting}
We consider the reward maximization problem under a non-stationary environment. Let $f_t: \mX \rightarrow \R$ 
be an unknown reward function at step $t$. The input domain 
$\mX$ is a compact subset of $\R^d$. The learner 
sequentially chooses the input $\bx_t$ at step $t$; then, 
the environment reveals the noisy observation $y_t \coloneqq f_t(\bx_t) + \epsilon_t$, where $\epsilon_t$ is a zero-mean noise random variable.
In this paper, we assume the functions $(f_t)_{t \in {\N_+}}$ 
are determined by the environment \emph{obliviously}; namely, 
all functions $(f_t)_{t \in {\N_+}}$ are fixed by the environment before the learner chooses the first input $\bx_1$. In the aforementioned setup, the learner's goal is to minimize the following cumulative regret $R_T$:
\begin{equation}
    R_T = \sum_{t \in [T]} f_t(\bx_t^\ast) - f_t(\bx_t),
\end{equation}
where $\bx_t^\ast \in \argmax_{\bx \in \mathcal{X}}f_t(\bx)$ 
and $[T] = \{1, \ldots, T\}$.

For our theory, we make the following assumptions.
\begin{assumption}[Assumption for noise]
    \label{asmp:noise}
    The noise sequence $(\epsilon_t)_{t \in \N_+}$ is mutually independent. Furthermore, the noise $\epsilon_t$ is $\rho$-sub-Gaussian random variable for any $t \in \N_+$; namely, $\Ep[\exp(\eta \epsilon_t)] \leq \exp(\eta^2 \rho^2/2)$
    for all $\eta \in \R$.
\end{assumption}

\begin{assumption}[Assumption for reward functions]
    \label{asmp:func}
    Each function $f_t$ is an element of known RKHS with the bounded Hilbert norm.
    Let $k: \mX \times \mX \rightarrow \R$ 
    and $\mH_k$ be a known positive definite kernel 
    and its corresponding RKHS. Then, we assume that $f_t \in \mH_k$
    and $\|f_t\|_{\mH_k} \leq B < \infty$ hold for any $t \in \N_+$,
    where $\|\cdot\|_{\mH_k}$ is the Hilbert norm on $\mH_k$.
    Furthermore, suppose that $k(\bx, \bx) \leq 1$ holds for all $\bx \in \mX$.
\end{assumption}

\begin{assumption}[Assumption for non-stationarity]
    \label{asmp:nons}
    The total drift $\sum_{t=2}^T \|f_t - f_{t-1}\|_{\infty}$ of functions $(f_t)_{t \in [T]}$ is bounded as $\sum_{t=2}^T \|f_t - f_{t-1}\|_{\infty} \leq V_T$, where $\|f_t - f_{t-1}\|_{\infty} = \sup_{\bx \in \mX}|f_t(\bx) - f_{t-1}(\bx)|$.
\end{assumption}

Assumptions~\ref{asmp:noise} and \ref{asmp:func}
are standard for ordinary KB literature~\citep{srinivas10gaussian,chowdhury2017kernelized,vakili2021optimal}. 
Specifically, as for \asmpref{asmp:func}, we focus on the following 
SE kernel $k_{\text{SE}}$ and Mat\'ern kernel 
$k_{\text{Mat\'ern}}$:
\begin{align}
    k_{\text{SE}}(\bx, \tilde{\bx}) &= \exp \rbr{-\frac{\|\bx - \tilde{\bx}\|_2^2}{2 \ell^2}}, \\
    k_{\text{Mat\'ern}}(\bx, \tilde{\bx}) &= \frac{2^{1-\nu}}{\Gamma(\nu)} \rbr{\frac{\sqrt{2\nu} \|\bx - \tilde{\bx}\|_2}{\ell}} J_{\nu}\rbr{\frac{\sqrt{2\nu} \|\bx - \tilde{\bx}\|_2}{\ell}},
\end{align}
where $\ell > 0$ and $\nu > 0$ denotes the length-scale and smoothness parameter, respectively. Furthermore, $\Gamma(\cdot)$ and $J_\nu(\cdot)$ 
are the Gamma and modified Bessel functions, respectively.
\asmpref{asmp:nons} is also standard for non-stationary KB~\citep{zhou2021no,deng2022weighted,hong2023optimization}, and the increasing speed of the upper bound of the total variation $V_T$ has an important role in characterizing both regret lower and upper bounds.

\paragraph{Gaussian Process Model}
Gaussian process (GP) model~\citep{Rasmussen2005-Gaussian} is a useful tool for estimating 
the underlying function while quantifying the uncertainty of its estimate; thus, GP is commonly used to construct an algorithm for the KB problem. Here, let us consider the Bayesian modeling 
of $f$ whose prior is $\mathcal{GP}(0, k)$, where 
$\mathcal{GP}(0, k)$ denotes the mean-zero GP with covariance 
function $k$. 
Given the input data $\bX_t \coloneqq (\bx_1, \ldots, \bx_t)^{\top}$ and the corresponding outputs $\by_t \coloneqq (y_1, \ldots, y_t)^{\top}$, 
the posterior is again the GP, whose posterior mean $\mu(\bx; \bX_t, \by_t)$ and variance $\sigma^2(\bx; \bX_t)$ of $f(\bx)$ are defined as follows:
\begin{align}
    \mu(\bx; \bX_t, \by_t) &= \bk(\bx, \bX_t)^{\top} (\bK(\bX_t, \bX_t) + \lambda \bI_t)^{-1} \by_t, \\
    \begin{split}
    \sigma^2(\bx; \bX_t) &= k(\bx, \bx) 
    - \bk(\bx, \bX_t)^{\top} (\bK(\bX_t, \bX_t) + \lambda \bI_t)^{-1} \bk(\bx, \bX_t), 
    \end{split}
\end{align}
where $\lambda > 0$ is a noise variance parameter, and 
$\bk(\bx, \bX_t) \coloneqq [k(\bx, \bx_i)]_{i \in [t]} \in \R^t$ and $\bK(\bX_t, \bX_t) \coloneqq [k(\bx_i, \bx_j)]_{i,j \in [t]} \in \R^{t \times t}$ are the kernel vector and matrix, respectively. Furthermore, $\bI_t \in \R^{t \times t}$ denotes the identity matrix.

Finally, we define the kernel-dependent quantity $\gamma_T$ as 
$\gamma_T = \frac{1}{2} \sup_{\bX_t} \ln{\det{(\bI_t + \lambda^{-1} \bK(\bX_t, \bX_t))}}$. If we suppose that $f$ follows GP, 
$\gamma_T$ is equal to the maximum amount of 
the information gain of $f$ up to $T$ observations~\citep{srinivas10gaussian}; thus, the quantity $\gamma_T$ is called \emph{maximum information gain} (MIG). MIG characterizes the complexity of the KB problem, and its upper bound increases sub-linearly in several commonly used kernels. For example, $\gamma_T = O(\ln^{d+1} T)$ in SE kernel, and $\gamma_T = O(T^{\frac{d}{2\nu + d}} \ln^{\frac{2\nu}{2\nu + d}} T)$
in Mat\'ern kernel with $\nu > 1/2$~\citep{vakili2021information}.

\paragraph{Phased Elimination}
PE~\citep{li2022gaussian,bogunovic2022robust}, referred to as batched pure exploration in \citet{li2022gaussian}, is a near-optimal KB algorithm in the stationary environment for the SE and Mat\'ern kernels.
PE divides the entire time horizons $T$ 
into some \emph{batches} whose sizes are exponentially increasing.
In each batch, PE performs the (non-adaptive) maximum variance reduction algorithm~\citep{vakili2021optimal} in \emph{potential maximizers}, which is a candidate input whose UCB exceeds the maximum of lower CB.
By the above procedures, PE achieves the regret upper bound $\tilde{\mO}(\sqrt{T \gamma_T})$ while the regret upper bound of classical GP-UCB and TS is $\tilde{\mO}(\gamma_T\sqrt{T})$.
We combine this PE algorithm, the restart and reset strategy, and the random permutation of the query candidates for the non-stationary KB problem.

\section{Lower Bound for Non-Stationary Kernelized Bandits}
\label{sec:lower_bound}
Our first main result is the following \thmref{thm:lb}, which 
shows the algorithm-independent lower bound of the non-stationary KB problem 
with SE or Mat\'ern kernel.

\begin{theorem}[Lower bound]
\label{thm:lb}
Fix any $T \in \mathbb{N}_+$, $V_T > 0$, $B > 0$, and $\rho > 0$. Furthermore, assume $\mX = [0, 1]^d$, and 
$(\epsilon_t)_{t \in \mathbb{N}_+}$ is the noise sequence of independent Gaussian random variables $\epsilon_t \sim \mN(0, \rho^2)$ for all $t \in \mathbb{N}_+$.
\begin{itemize}
    \item Suppose $k = k_{\mathrm{SE}}$, $\tilde{C}_{\mathrm{SE}} T^{-\frac{1}{2}} \ln^{\frac{d}{4}} T \leq V_T < \tilde{C}_{\mathrm{SE}} T \ln^{\frac{d}{4}} T$, and $V_T \leq \overline{C} T^{\overline{c}}$; then, for any algorithm, there exists 
    reward functions $(f_t)_{t \in \mathbb{N}_+}$ such that Assumptions~\ref{asmp:func}, \ref{asmp:nons} and $\Ep[R_T] \geq C_{\mathrm{SE}} T^{\frac{2}{3}} V_T^{\frac{1}{3}} \ln^{\frac{d}{6}} T$ hold for sufficiently large $T$. Here, $\overline{C} > 0$ and $\overline{c} \in (0, 1)$ are any absolute constants.
    \item Suppose $k = k_{\mathrm{Mat\acute{e}rn}}$ and $\tilde{C}_{\mathrm{Mat}} T^{-\frac{\nu}{2\nu + d}} \leq V_T \leq 2^{\frac{3\nu + d}{2\nu + d}} \tilde{C}_{\mathrm{Mat}} T$; then, for any algorithm, there exists 
    reward functions $(f_t)_{t \in \mathbb{N}_+}$ such that \asmpref{asmp:func}, \ref{asmp:nons} and $\Ep[R_T] \geq C_{\mathrm{Mat}} T^{\frac{2\nu + d}{3\nu + d}} V_T^{\frac{\nu}{3\nu + d}}$ hold.
\end{itemize}
Here, $\tilde{C}_{\mathrm{SE}}, C_{\mathrm{SE}}, \tilde{C}_{\mathrm{Mat}}, C_{\mathrm{Mat}} > 0$ are constants that only depend on $\ell$, $\nu$, $B$, $\rho$, $\overline{C}$, $\overline{c}$, and $d$.
\end{theorem}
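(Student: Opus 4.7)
The plan is to reduce to the stationary lower bound of \citet{scarlett2017lower} via a standard block-decomposition argument. I partition the horizon $[T]$ into $K = \lfloor T/H \rfloor$ consecutive blocks $B_1, \ldots, B_K$ each of length $H$, with $H$ to be chosen at the end so as to balance per-block hardness against the global total-variation budget. Within each block the reward function is held constant at some $f^{(i)} \in \mH_k$, so non-stationarity occurs only at block boundaries, and the total variation of the constructed sequence equals $\sum_{i=2}^K \|f^{(i)} - f^{(i-1)}\|_\infty$.

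Inside each block I embed Scarlett's stationary hard family $\mathcal{F}_H \subset \mH_k$. For the SE kernel, $\mathcal{F}_H$ consists of $\sim (\ln H)^{d/2}$ well-separated ``shifted bump'' functions, each with $\mH_k$-norm at most $B$ and maximum value $\epsilon_H \asymp \sqrt{(\ln H)^{d/2}/H}$; any two elements satisfy $\|f - f'\|_\infty \leq 2\epsilon_H$. Scarlett's Fano-style argument applied to $H$ Gaussian-noise samples shows that when $f^{(i)} \sim \mathrm{Unif}(\mathcal{F}_H)$, any algorithm must incur expected regret at least $c\,H\epsilon_H = \Omega(\sqrt{H (\ln H)^{d/2}})$ inside that block. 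The Mat\'ern construction is analogous with $\epsilon_H \asymp H^{-\nu/(2\nu+d)}$ and per-block regret $\Omega(H^{(\nu+d)/(2\nu+d)})$. Drawing each $f^{(i)}$ independently from $\mathrm{Unif}(\mathcal{F}_H)$, the resulting sequence has total variation at most $2(K-1)\epsilon_H \leq 2(T/H)\epsilon_H$, which I force to be $\leq V_T$ by constraining $H$.

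Because the hypotheses $(f^{(i)})_i$ are mutually independent, the per-block lower bounds add: conditionally on the algorithm's history $\mathcal{H}_{i-1}$ from the first $i-1$ blocks, the posterior of $f^{(i)}$ is still uniform on $\mathcal{F}_H$, so Scarlett's single-block argument applies verbatim to $\Ep[R_{B_i} \mid \mathcal{H}_{i-1}]$. Summing over $i$ and taking expectations yields $\Ep[R_T] \gtrsim K \cdot H \epsilon_H = T \epsilon_H$. Optimizing $H$ against the drift constraint $(T/H)\epsilon_H \lesssim V_T$ gives, for the SE kernel, $H \asymp (T/V_T)^{2/3}(\ln T)^{d/6}$ and hence $\Ep[R_T] \gtrsim T^{2/3}V_T^{1/3}(\ln T)^{d/6}$; the analogous Mat\'ern calculation yields $H \asymp (T/V_T)^{(2\nu+d)/(3\nu+d)}$ and $\Ep[R_T] \gtrsim T^{(2\nu+d)/(3\nu+d)} V_T^{\nu/(3\nu+d)}$. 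The two-sided conditions on $V_T$ in the theorem statement are exactly the feasibility range $1 \lesssim H \lesssim T$ of this optimization, and the polynomial growth bound $V_T \leq \overline{C} T^{\overline{c}}$ in the SE case ensures $\ln H \asymp \ln T$ throughout so that log factors can be absorbed into a single constant.

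The main obstacle is rigorously discharging the per-block reduction in the presence of across-block history. The standard worry that the algorithm might ``transfer'' useful information is dispelled by the independence of the $f^{(i)}$'s: the conditional KL divergence produced inside block $i$ by the algorithm's queries depends only on samples taken within $B_i$, so the Fano step of Scarlett's proof passes through conditional on $\mathcal{H}_{i-1}$ with the same constant as in the stationary problem. The remaining bookkeeping is to check that the constructed sequence jointly satisfies $\|f^{(i)}\|_{\mH_k} \leq B$ on $\mX = [0,1]^d$ under $\mN(0,\rho^2)$ noise; these properties are inherited directly from Scarlett's choice of the bump scale and are built into the definition of $\mathcal{F}_H$, so Assumptions \ref{asmp:func} and \ref{asmp:noise} hold automatically.
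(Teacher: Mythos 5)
Your proposal is correct and takes essentially the same route as the paper: partition $[T]$ into length-$H$ blocks, plant Scarlett's stationary hard family in each block, bound the total variation by the number of block boundaries times the sup-norm diameter of that family, and optimize $H$ against the budget $V_T$ (with the two-sided conditions on $V_T$ serving exactly as the feasibility range for $H$). Your explicit device of drawing each $f^{(i)}$ independently and uniformly so that the conditional posterior given the past history stays uniform is precisely the averaging argument that the paper's Lemma~\ref{lem:lb_funcs} encapsulates, so the two proofs differ only in presentation (and in an immaterial constant, $2\varepsilon$ versus $4\varepsilon$, for the diameter of the hard family).
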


The full proof of \thmref{thm:lb} is shown in \appref{sec:proof_lb}. 
\paragraph{Proof Sketch}
The proof of \thmref{thm:lb} is derived by combining 
the idea of the lower bound of non-stationary linear bandits~\citep{cheung2019learning}
with the existing lower bound of stationary KB~\citep{scarlett2017lower}.
By following the idea of \citet{cheung2019learning}, 
we first separate the time step set $[T]$ into the $\lceil T/H \rceil$
intervals whose lengths are $H \in [T]$ (except for the last interval).
Next, we consider to assign the function $f$ that achieves
the existing lower bound of stationary KB~\citep{scarlett2017lower} 
for each interval. From this construction of $(f_t)_{t \in \mathbb{N}_+}$, $\Omega(\lceil T/H \rceil \mathrm{LB}(H))$ regret incurs even if the algorithm knows the interval length $H$. Here, $\mathrm{LB}(H)$ denotes the lower bound of stationary KB for total step size $H$. For example, $\mathrm{LB}(H) = \tilde{\Omega}(\sqrt{H})$ for $k = k_{\mathrm{SE}}$ and $\mathrm{LB}(H) = \tilde{\Omega}(H^{\frac{\nu + d}{2\nu + d}})$ for $k = k_{\text{Mat\'ern}}$.
The remaining interest is to choose the length $H$ as small as possible to maximize $\lceil T/H \rceil \mathrm{LB}(H)$ under $\sum_{t=2}^T \|f_t - f_{t-1}\|_{\infty} \leq V_T$. 
To choose such $H$, we leverage the precise characterization of the functions that achieve the lower bound provided by \citet{scarlett2017lower}.
Fortunately, with a slight modification of the proof of \citet{scarlett2017lower}, 
we can obtain the following lemma that connects the total variation of the functions 
with lower bounds.

\begin{lemma}
    \label{lem:lb_funcs}
    Suppose the same conditions as those of \thmref{thm:lb}. 
    Furthermore, for any $T \in \mathbb{N}_+$, set $\varepsilon_{\mathrm{SE}}(T) = C_{\mathrm{SE}}^{(1)}\sqrt{(\ln T)^{\frac{d}{2}}/T}$
    and $\varepsilon_{\mathrm{Mat\acute{e}rn}}(T) = C_{\mathrm{Mat}}^{(1)}T^{- \frac{\nu}{2\nu + d}}$, 
    where $C_{\mathrm{SE}}^{(1)}$, $C_{\mathrm{Mat}}^{(1)} > 0$ are constants that only depends on $\ell$, $\nu$, $\rho$, $B$, and $d$.
    Then, there exists the function set $\mF \subset \mH_{k}$ 
    such that $\forall f \in \mF, \|f\|_{\mH_k} \leq B$ and the following hold:
    \begin{itemize}
        \item For any 
        algorithm, there exist $f \in \mF$ such that 
        $\mathbb{E}[R_T] \geq T \varepsilon(T)$
        under $f_t = f$ for all $t \in \mathbb{N}_+$. Here, we 
        set $\varepsilon(T) = \varepsilon_{\mathrm{SE}}(T)$ and 
        $\varepsilon(T) = \varepsilon_{\mathrm{Mat\acute{e}rn}}(T)$
        for $k = k_{\mathrm{SE}}$ and $k = k_{\mathrm{Mat\acute{e}rn}}$, respectively.
        \item For any $f, \tilde{f} \in \mF$, $\|f - \tilde{f}\|_{\infty} \leq 4 \varepsilon(T)$.
    \end{itemize}
\end{lemma}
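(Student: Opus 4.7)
My plan is to import the needle-in-haystack construction of \citet{scarlett2017lower} and to observe that the function family it produces already satisfies both the required regret lower bound and the $L^{\infty}$-closeness condition; the second bullet is not stated explicitly in \citet{scarlett2017lower} but falls out of the construction essentially for free, so only a small amount of bookkeeping is needed on top of the existing stationary lower bound.

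Concretely, I will fix a base ``bump'' $\varphi : \R^d \to \R$ (a smooth compactly supported function for the Mat\'ern case, or an appropriately chosen kernel translate for the SE case) and pack $M = \Theta(w^{-d})$ rescaled translates $\varphi^{(i)}(\bx) = \varphi((\bx - \bx_i)/w)$ on a grid of spacing $w$ inside $\mX = [0,1]^d$, whose pairwise overlaps are either zero (Mat\'ern) or exponentially negligible (SE). I will set $\mF \coloneqq \{\, h \varphi^{(i)} : i \in [M]\,\}$ and adopt Scarlett's scalings $(w, h) = (\Theta((\ln T)^{-1/2}),\ \Theta(\varepsilon_{\mathrm{SE}}(T)))$ for $k = k_{\mathrm{SE}}$ and $(w, h) = (\Theta(T^{-1/(2\nu+d)}),\ \Theta(\varepsilon_{\mathrm{Mat\acute{e}rn}}(T)))$ for $k = k_{\mathrm{Mat\acute{e}rn}}$. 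For these scalings, Scarlett shows $\|h \varphi^{(i)}\|_{\mH_k} \leq B$ for every $i$, which verifies $\mF \subset \mH_k$ with the required Hilbert-norm bound.

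The first bullet is then exactly Scarlett's lower bound: for any algorithm, Fano's inequality applied to the hypothesis class $\mF$ (each hypothesis being a distinct needle) together with the per-query KL-divergence bound $O(h^2/\rho^2)$ forces a constant fraction of the $T$ queries to be placed outside the true needle's neighborhood, each incurring instantaneous regret $\Theta(h)$. Absorbing absolute constants into $C_{\mathrm{SE}}^{(1)}$ and $C_{\mathrm{Mat}}^{(1)}$ yields $\mathbb{E}[R_T] \geq T\, \varepsilon(T)$. For the second bullet I will use that, by construction, $\|f\|_\infty \leq h$ for every $f \in \mF$, so that $\|f - \tilde{f}\|_\infty \leq 2h$ for any $f, \tilde{f} \in \mF$; choosing the absolute constant inside $h$ so that $h \leq 2\, \varepsilon(T)$ gives the claim $\|f - \tilde{f}\|_\infty \leq 4\, \varepsilon(T)$.

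The main obstacle is purely bookkeeping rather than any new idea: I must track the absolute constant in Scarlett's scaling of $h$ so that, simultaneously, (i) $\|h \varphi^{(i)}\|_{\mH_k} \leq B$, (ii) $h \leq 2\, \varepsilon(T)$ so that the $L^\infty$ bullet holds, and (iii) the regret lower bound $T h \geq T\, \varepsilon(T)$ survives. Constraints (i) and (iii) are already reconciled by Scarlett, and (ii) only constrains the absolute constant in $h$, so all three can be made compatible by shrinking the constant inside $h$ (and, if needed, the constant inside $w$) by a bounded amount. The only mild subtlety is the SE case, where the needles are not compactly supported; there I will rely on the exponential spatial decay of $k_{\mathrm{SE}}$ to argue that the tails of neighboring needles contribute negligibly to $\|f - \tilde{f}\|_\infty$, preserving the $2h$ bound up to a constant factor that can be absorbed into the final $4$.
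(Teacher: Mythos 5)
Your proposal is correct and follows essentially the same route as the paper: both reuse the finite needle family from \citet{scarlett2017lower} (with its scalings of width and height), note that the averaged-over-$\mF$ form of Scarlett's argument already yields a hard instance inside $\mF$ itself, and obtain the second bullet from the fact that each needle has sup-norm $O(\varepsilon(T))$ plus the triangle inequality. One small remark: since each $f\in\mF$ is a single needle, $\|f-\tilde f\|_\infty\leq\|f\|_\infty+\|\tilde f\|_\infty$ already gives the $4\varepsilon(T)$ bound without any argument about overlapping tails in the SE case, so that final worry in your plan is unnecessary.
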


From \lemref{lem:lb_funcs}, the total drift $\sum_{t=2}^T \|f_t - f_{t-1}\|_{\infty}$ is bounded from above 
by $4 \lceil T/H \rceil \varepsilon(H)$ under the aforementioned construction of $(f_t)_{t \in \mathbb{N}_+}$.
Finally, by selecting the smallest $H$ such that $4 \lceil T/H \rceil \varepsilon(H) \leq V_T$ holds, the lower bounds 
$\Omega(\lceil T/H \rceil \mathrm{LB}(H))$ matches 
the results of \thmref{thm:lb}.

\paragraph{Comparison with the Lower Bounds for Stationary KB}
For standard stationary KB problems, \citet{scarlett2017lower} shows 
$\tilde{\Omega}(\sqrt{T})$ and $\Omega(T^{\frac{\nu + d}{2\nu + d}})$ lower bounds 
in SE and Mat\'ern family of kernels, respectively. By comparing 
our lower bounds, we can confirm that the learner suffers from at least $T^{\frac{1}{6}}$ and $T^{\frac{\nu^2}{(3\nu + d)(2\nu + d)}}$ additional polynomial factors at the cost of 
non-stationarity for SE and Mat\'ern family of kernels, respectively.
Note that the degeneration of the worst-case lower bounds is also observed in the existing finite-armed and linear bandit problems~\citep{besbes2014stochastic,cheung2019learning}. 
Our lower bounds show that such degeneration also occurs in KB problems.


\section{Modified Near-Optimal OPKB Algorithm for Mat\'ern Kernel}

\citet{hong2023optimization} have proposed adapting OPKB (ADA-OPKB) for the non-stationary environment without prior information of $V_T$. 
Since $V_t$ is unknown, ADA-OPKB leverages the adaptive scheduling of 
restart-reset procedures.
Then, the ADA-OPKB algorithm \citep{hong2023optimization} achieves $\tilde{O}(\gamma_T^{1/3} T^{2/3} V_T^{1/3})$ regret.
Therefore, the regret upper bounds for the SE and Mat\'ern kernels are $\tilde{O}(T^{\frac{2}{3}} V_T^{\frac{1}{3}})$ and $\tilde{O}(T^{\frac{4\nu + 3d}{6\nu + 3d}} V_T^{\frac{1}{3}})$, respectively.
Hence, for the SE kernel, the ADA-OPKB is near-optimal even though $V_T$ is unknown.
However, for the Mat\'ern kernels, the regret upper bound is worse compared with our $\Omega(T^{\frac{2\nu + d}{3\nu + d}} V_T^{\frac{\nu}{3\nu + d}})$ lower bound.
Therefore, no near-optimal regret upper bound for the Mat\'ern kernel is known.

In contrast to the unknown $V_T$ setting where \citet{hong2023optimization} focuses on, this paper focuses on the known $V_T$ setting.
In the known $V_T$ setting, by combining OPKB procedures with a restart-reset strategy whose reset interval is carefully chosen by depending on $d$, $\nu$, and $V_T$, we can achieve near-optimal $\tilde{O}(T^{\frac{2\nu + d}{3\nu + d}} V_T^{\frac{\nu}{3\nu + d}})$ regret for the Mat\'ern kernel:
%
\begin{theorem}[The modified version of OPKB algorithm with the restart-reset strategy.]
\label{thm:mod_opkb_main}
Assume that the underlying kernel is Mat\'ern kernel with smoothness parameter $\nu > 1/2$.
Furthermore, suppose that Assumptions~\ref{asmp:noise}--\ref{asmp:nons} and $V_T \geq T^{-\frac{\nu}{2\nu +d}} \ln^{\frac{\nu + d}{2\nu + d}} T$ hold. 
Then, if we set the restarting interval $H$ as $H = \lceil T^{\frac{2\nu + d}{3\nu + d}} V_T^{-\frac{2\nu + d}{3\nu + d}} \ln^{\frac{4\nu + d}{6\nu+2d}} T\rceil$, 
the OPKB algorithm (Algorithm 2 in \citet{hong2023optimization}) with the restart-reset strategy 
achieve $R_T = \tilde{\mO} (V_T^{\frac{\nu}{3\nu+d}} T^{\frac{2\nu + d}{3\nu + d}})$ with probability at least $1 - \delta$.
\end{theorem}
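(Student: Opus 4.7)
The plan is to reduce the non-stationary problem to a sequence of stationary instances via the restart-reset scheme and then invoke the stationary regret guarantee of OPKB on each block. Partition $[T]$ into $B \coloneqq \lceil T/H \rceil$ consecutive blocks $\mB_1, \ldots, \mB_B$ of length at most $H$; at the start of each block all past observations are discarded and OPKB is rerun from scratch. For block $b$ spanning times $[s_b, e_b]$, fix the anchor function $\bar f_b \coloneqq f_{s_b}$ and define the local variation $V^{(b)} \coloneqq \sum_{t=s_b+1}^{e_b}\|f_t - f_{t-1}\|_\infty$, so that $\sum_b V^{(b)} \leq V_T$ and $\|f_t - \bar f_b\|_\infty \leq V^{(b)}$ for all $t \in \mB_b$.

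A standard $\pm \bar f_b$ decomposition of the per-block regret gives
\[
\sum_{t \in \mB_b}\rbr{f_t(\bx_t^\ast) - f_t(\bx_t)} = \sum_{t \in \mB_b}\rbr{\bar f_b(\bx_t^\ast) - \bar f_b(\bx_t)} + \sum_{t \in \mB_b}\sbr{\rbr{f_t - \bar f_b}(\bx_t^\ast) + \rbr{\bar f_b - f_t}(\bx_t)},
\]
and the second sum is bounded by $2 H V^{(b)}$, leaving the first as the ``in-block stationary'' regret. Rewriting the observations as $y_t = \bar f_b(\bx_t) + \tilde\epsilon_t$ with $\tilde\epsilon_t \coloneqq \epsilon_t + (f_t - \bar f_b)(\bx_t)$ exhibits a $\rho$-sub-Gaussian noise plus a uniformly bounded bias of magnitude at most $V^{(b)}$. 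Adapting the confidence-set and G-optimal-design analysis of OPKB \citep{hong2023optimization} to absorb this bias into a slightly inflated confidence width yields, with probability at least $1 - \delta/B$,
\[
\sum_{t \in \mB_b}\rbr{\bar f_b(\bx_t^\ast) - \bar f_b(\bx_t)} = \tilde{\mO}\rbr{\sqrt{H\gamma_H}} + \mO\rbr{H V^{(b)}}.
\]
For the Mat\'ern kernel, $\gamma_H = \tilde{\mO}(H^{d/(2\nu+d)})$, so the block bound becomes $\tilde{\mO}(H^{(\nu+d)/(2\nu+d)}) + \mO(H V^{(b)})$.

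Summing over blocks and union-bounding,
\[
R_T = \tilde{\mO}\rbr{\frac{T}{H}\cdot H^{\frac{\nu+d}{2\nu+d}} + H V_T} = \tilde{\mO}\rbr{T H^{-\frac{\nu}{2\nu+d}} + H V_T},
\]
and balancing the two terms gives $H \asymp (T/V_T)^{(2\nu+d)/(3\nu+d)}$, which matches the theorem's choice up to the polylog factor and produces $R_T = \tilde{\mO}(T^{(2\nu+d)/(3\nu+d)} V_T^{\nu/(3\nu+d)})$. The hypothesis $V_T \geq T^{-\nu/(2\nu+d)} \ln^{(\nu+d)/(2\nu+d)} T$ ensures $H \leq T$ so the blockwise scheme is non-trivial.

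The main obstacle is the rigorous verification that OPKB's stationary regret analysis survives the per-round bias $|f_t(\bx_t) - \bar f_b(\bx_t)| \leq V^{(b)}$: one must show the perturbation contributes only an additive $\mO(H V^{(b)})$ to the block regret, rather than being multiplied by the OPKB confidence width and inflating the $\sqrt{H\gamma_H}$ leading term. This exploits the fact that OPKB employs a non-adaptive G-optimal design inside each of its internal batches, so the bias enters the least-squares estimator linearly and propagates cleanly to the regret bound; the argument is essentially the bias-handling step already present in the ADA-OPKB analysis of \citet{hong2023optimization}, reused here on each restart interval with the sharper within-block complexity $\gamma_H$ in place of $\gamma_T$.
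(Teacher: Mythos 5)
Your overall architecture is the same as the paper's: restart-reset into length-$H$ blocks, bound the drift within each block additively by $\mO(HV^{(b)})$, invoke OPKB's stationary per-interval analysis for the $\tilde{\mO}(\sqrt{H\gamma_H})$ term, and balance $HV_T$ against $T\sqrt{\gamma_H/H}$; the paper merely anchors each of OPKB's internal batches to the running \emph{average} function $\overline{f}_{\mC(j-1)}$ (for which OPKB's estimator is exactly unbiased) and splits the drift into four terms, rather than anchoring the whole block to $f_{s_b}$ and folding the drift into the noise. One caution on your crux step: the reason the bias contributes only an additive $\mO(HV^{(b)})$ is \emph{not} that "the bias enters the least-squares estimator linearly" — for a kernel ridge / least-squares estimator a uniformly bounded perturbation of the responses is amplified by the $\ell_1$-norm of the estimator weights, which is exactly the $\sqrt{H\gamma_H}$ factor that makes R-GP-UCB and SW-GP-UCB suboptimal, so that argument as stated would fail. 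The additive propagation holds for OPKB because its inverse-propensity-score estimator has $\mO(1)$-bounded weights (Remark~1 and Lemma~4.5 of \citet{hong2023optimization}), which is the property the paper's proof of \thmref{thm:mod_opkb} actually invokes; with that correction your proof goes through and is essentially the paper's.
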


See \appref{sec:mod_opkb} for details.
This result also justifies the tightness of our lower bounds.
Namely, our $\Omega(V_T^{\frac{\nu}{3\nu+d}} T^{\frac{2\nu + d}{3\nu + d}})$ lower bound for the Mat\'ern kernel has no room for improvement in polynomial factors.

On the other hand, as discussed in \secref{sec:intro}, the OPKB algorithm suffers from the huge computational complexity $\mO(M|\mX|^3)$.
Therefore, in the next section, we propose yet another near-optimal algorithm, which enjoys both ease-computation and the near-optimal regret upper bound.

\section{Phased Elimination with Random Permutation}
\label{sec:rperp}

\begin{algorithm}[t!]
    \caption{Restarting Phased Elimination with Random Permutation (R-PERP)}
    \label{alg:rperp}
    \begin{algorithmic}[1]
        \REQUIRE Total step size $T$, confidence width parameter $\beta_T^{1/2} > 0$, reset interval $H \in [T]$, finite input set $\mathcal{X}$.
        \FOR {$i = 1, \ldots, \left\lceil \frac{T}{H} \right\rceil$}
            \STATE Compute the $i$-th interval size: $T^{(i)} \leftarrow \min\{H, T - (i-1)H\}$.
            \STATE Initialize the potential maximizer $\mathcal{X}_{1}^{(i)} \leftarrow \mathcal{X}$ and $N_{0}^{(i)} \leftarrow 1$.
            \FOR {$j = 1, 2, \ldots$}
                \STATE Compute the $j$-th batch size $N_{j}^{(i)} \leftarrow \min\left\{ \left\lceil \sqrt{T^{(i)} N_{j-1}^{(i)}} \right\rceil, T^{(i)} - \sum_{\tilde{j}=1}^{j-1} N_{\tilde{j}}^{(i)} \right\}$.
                \STATE Initialize the query candidate set $\mathcal{S}_{j}^{(i)} \leftarrow \emptyset$.
                \FOR {$m = 1, \ldots, N_{j}^{(i)}$}
                    \STATE $\tilde{\bm{x}}_{j, m}^{(i)} \leftarrow \mathrm{arg~max}_{\bm{x} \in \mathcal{X}_{j}^{(i)}} \sigma^2(\bm{x}; \mathcal{S}_{j}^{(i)})$.
                    \STATE $\mathcal{S}_{j}^{(i)} \leftarrow \mathcal{S}_{j}^{(i)} \cup \left\{\tilde{\bm{x}}_{j, m}^{(i)}\right\}$.
                \ENDFOR
                \STATE Obtain $\rbr{\bm{x}_{j, 1}^{(i)}, \ldots, \bm{x}_{j, N_j^{(i)}}^{(i)}}$ as an uniform permutation of $\mathcal{S}_{j}^{(i)}$.
                \FOR {$m = 1, \ldots, N_{j}^{(i)}$} 
                \STATE Observe $y_{j, m}^{(i)} = f_{j, m}^{(i)}\left(\bm{x}_{j, m}^{(i)}\right) + \epsilon_{j, m}^{(i)}$.
                \ENDFOR
                \IF {$\sum_{\tilde{j}=1}^{j} N_{\tilde{j}}^{(i)} = T^{(i)}$}
                \STATE Move to next $(i+1)$-th interval.
                \ENDIF
                \STATE $\bm{y}_j^{(i)} \leftarrow \sbr{y_{j,m}^{(i)}}_{m \leq N_j^{(i)}}$, $\bm{X}_j^{(i)} \leftarrow \sbr{\bx_{j,m}^{(i)}}_{m \leq N_j^{(i)}}$
                \STATE Calculate $\text{lcb}_{j}^{(i)}(\cdot)$ and $\text{ucb}_{j}^{(i)}(\cdot)$ as
                \begin{align}
                    \label{eq:lcb}
                    \text{lcb}_{j}^{(i)}(\bm{x}) &= \mu(\bm{x}; \bX_j^{(i)}, \by_j^{(i)}) -\beta_T^{1/2} \sigma(\bm{x}; \bX_j^{(i)}), \\
                    \label{eq:ucb}
                    \text{ucb}_{j}^{(i)}(\bm{x}) &= \mu(\bm{x}; \bX_j^{(i)}, \by_j^{(i)}) + \beta_T^{1/2} \sigma(\bm{x}; \bX_j^{(i)}).
                \end{align}
                \STATE $\mathcal{X}_{j+1}^{(i)} \leftarrow \left\{ \bm{x} \in \mathcal{X}_{j}^{(i)} ~\middle|~ \text{ucb}_{j}^{(i)}(\bm{x}) \geq \max\limits_{\tilde{\bm{x}} \in \mathcal{X}_{j}^{(i)}} \text{lcb}_{j}^{(i)}(\tilde{\bm{x}}) \right\}$
            \ENDFOR
        \ENDFOR
    \end{algorithmic}
\end{algorithm}

In this section, we show our R-PERP algorithm.
For simplicity, we assume that the input set $\mathcal{X}$ is finite without loss of generality.
That is, by relying on the discretization arguments~\citep{li2022gaussian} of the input space with the Lipschitz assumption of reward functions\footnote{Specifically, \asmpref{asmp:func} implies Lipschitz assumption when the underlying RKHS is induced by SE or Mat\'ern kernel. See, e.g., Lemma~2 in \citet{li2022gaussian}.}, the same guarantees described in this section are obtained under compact continuous input set $\mathcal{X}$.

\paragraph{Algorithm Construction}
The pseudo-code of the R-PERP algorithm is shown in \algoref{alg:rperp}. 
The algorithm construction relies on the restarting 
strategy, which reset some base KB algorithm with the prespecified interval $H$.
For the construction of the base algorithm, we leverage the PE-based algorithm~\citep{li2022gaussian}, which successively keeps and eliminates the potential maximizer based on the CBs of the reward functions.

Hereafter, we denote $\bx_{j,m}^{(i)}$ as the query point corresponding to $m$-th 
observation in $j$-th batch of PE. Furthermore, the superscript index $i$ indicate that $\bx_{j,m}^{(i)}$ is in $i$-th interval of restarting strategy.
Namely, $\bx_{j,m}^{(i)}$ denotes the query point at time step $(i-1)H + \sum_{\tilde{j} = 1}^{j-1} N_{\tilde{j}}^{(i)} + m$, where $N_{\tilde{j}}^{(i)}$ denotes 
the $\tilde{j}$-th batch size of interval $i$ (Line~5 in \algoref{alg:rperp}). 
We define $y_{j,m}^{(i)}$, $\epsilon_{j,m}^{(i)}$, and $f_{j,m}^{(i)}$ 
as the similar way to $\bx_{j,m}^{(i)}$.

\subsection{Confidence Bounds with Random Permutation}
The main challenge in adapting the PE-based algorithm is the 
construction of the CBs under non-stationary rewards.
As pointed out in Remark~1 of \citet{hong2023optimization}, the existing 
regression-based CBs~(e.g., \citet{zhou2021no}) have 
additional $\sqrt{H \gamma_H}$ factor compared with the inverse propensity score-based estimate proposed in \citet{hong2023optimization}. This $\sqrt{H\gamma_H}$ degeneration 
leads to sub-optimal regret; therefore, constructing tighter regression-based CB under non-stationarity is essential.
Our key idea is to construct CBs for the average function $\overline{f}_j^{(i)}(\cdot) \coloneqq \sum_{m=1}^{N_j^{(i)}} f_{j,m}^{(i)}(\cdot) / N_j^{(i)}$ by leveraging the random permutation of the query candidate set, instead of directly constructing CBs for some future reward functions. 

\begin{figure*}[tb]
    \centering
    \includegraphics[width=\linewidth]{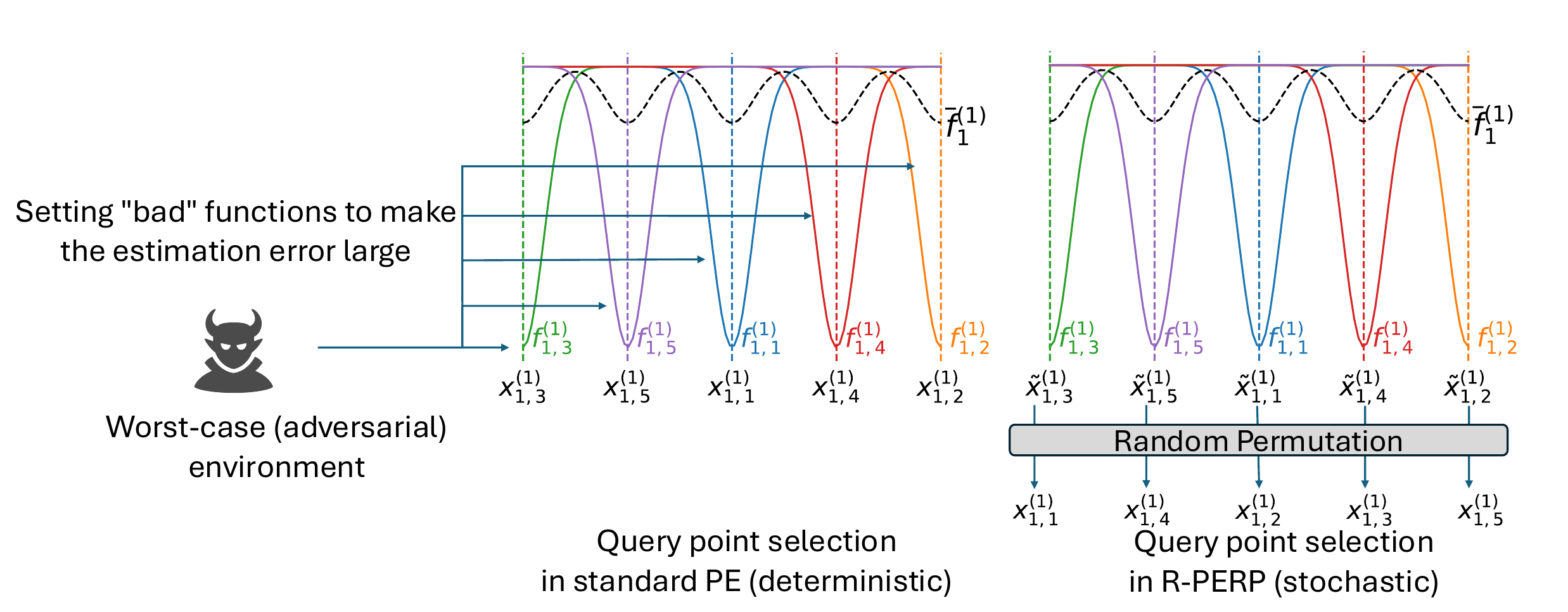}
    \caption{Illustrative image of the R-PERP algorithm in the first batch at the first interval with $N_1^{(1)} = 5$. The standard PE algorithm (left) chooses the query points deterministically within each batch. Thus, intuitively, the environment can arbitrarily choose the reward function $f_{1,j}^{1}$ such that the learner's estimation error becomes large. The R-PERP algorithm (right) alleviates the effect of such worst-case selection of the reward functions by randomly permutating the query candidates of the standard PE.}
    \label{fig:perm_image}
\end{figure*}

Here, let $\mS_j^{(i)}$ be the query candidate set at batch $j$ of interval $i$.
As with the standard stationary PE, R-PERP chooses the query candidate based on the posterior variance (Line~7--9). After that, R-PERP chooses the order of observation within 
the query candidate set by uniformly permutating the elements of $\mS_j^{(i)}$ (Line~11).
Intuitively, this permutation procedure makes it robust against the environment's worst-case reward choice. \figref{fig:perm_image} shows the illustrative image.

Based on such construction of R-PERP, the following theorem shows the tighter CB for the average function without $\sqrt{H \gamma_H}$ additional factor.

\begin{theorem}[Confidence bounds for average functions.]
    \label{thm:ave_cb}
    Fix any $T \in \mathbb{N}_+$, $H \in [T]\setminus\{1\}$, and $\delta \in (0, 1)$.
    Suppose that Assumptions~\ref{asmp:noise} and \ref{asmp:func} hold. 
    Furthermore, set the confidence width parameter $\beta_T^{1/2}$ as 
    \begin{equation}
        \label{eq:beta}
        \beta_T^{1/2} = B \rbr{\frac{C}{\sqrt{\lambda}} \sqrt{\ln \frac{4 |\mX| Q_{T,H}}{\delta}} + 1}
        + \frac{\rho}{\sqrt{\lambda}} \sqrt{2 \ln \frac{4 |\mX| Q_{T,H}}{\delta}}, 
    \end{equation}
    where $Q_{T,H} = \lceil T/H \rceil (1 + \log_2 \log_2 H)$, and $C > 0$ is an absolute constant.
    Then, when running \algoref{alg:rperp}, the following event holds with probability at least $1 - \delta$:
    \begin{equation}
    \begin{split}
        \forall i \leq \left\lceil \frac{T}{H} \right\rceil,&~\forall j \leq Q^{(i)}-1,~\forall \bx \in \mX, 
        ~\mathrm{lcb}_j^{(i)}(\bx) \leq \overline{f}_j^{(i)}(\bx) \leq \mathrm{ucb}_j^{(i)}(\bx),
    \end{split}
    \end{equation}
    where $\mathrm{lcb}_j^{(i)}(\cdot)$ and $\mathrm{ucb}_j^{(i)}(\cdot)$ 
    are defined in Eq.~\eqref{eq:lcb} and Eq.~\eqref{eq:ucb}, respectively, and $\overline{f}_j^{(i)}(\cdot) \coloneqq \sum_{m=1}^{N_j^{(i)}} f_{j,m}^{(i)}(\cdot) / N_j^{(i)}$ is the average function in $j$-th batch at $i$-th interval. Furthermore, $Q^{(i)}$ represents the total number of batches over $i$-th interval.
    
\end{theorem}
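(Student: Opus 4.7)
The plan is to exploit the permutation-invariance of the kernel ridge regression estimator to decompose the estimation error $\mu(\bx; \bX_j^{(i)}, \by_j^{(i)}) - \overline{f}_j^{(i)}(\bx)$ into a ridge-regression bias term, an i.i.d.\ sub-Gaussian noise term, and a ``permutation-residual'' term controlled by the random permutation drawn on Line~11 of \algoref{alg:rperp}. Fix $i, j$, write $N \coloneqq N_j^{(i)}$, and let $\widetilde{\bX}_j^{(i)}$ be the candidate set $\mathcal{S}_j^{(i)}$ in the order it was constructed on Lines 6--10, so that $\bx_{j,m}^{(i)} = \widetilde{\bx}_{j,\pi(m)}^{(i)}$. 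Since the posterior-mean formula depends only on the set of observed pairs, we may re-express $\mu(\bx;\bX_j^{(i)},\by_j^{(i)})$ using $\widetilde{\bX}_j^{(i)}$. Defining the coefficients $\widetilde{\alpha}_{m'}(\bx) \coloneqq [(\bK(\widetilde{\bX}_j^{(i)}, \widetilde{\bX}_j^{(i)}) + \lambda \bI_N)^{-1} \bk(\bx, \widetilde{\bX}_j^{(i)})]_{m'}$ (measurable w.r.t.\ the history before $\pi$ is drawn) and residuals $\widetilde{h}_{m'} \coloneqq (f_{j,\pi^{-1}(m')}^{(i)} - \overline{f}_j^{(i)})(\widetilde{\bx}_{j,m'}^{(i)})$, this yields
\begin{equation*}
\mu(\bx;\bX_j^{(i)},\by_j^{(i)}) - \overline{f}_j^{(i)}(\bx) = \underbrace{\sum_{m'=1}^N \widetilde{\alpha}_{m'}(\bx)\overline{f}_j^{(i)}(\widetilde{\bx}_{j,m'}^{(i)}) - \overline{f}_j^{(i)}(\bx)}_{\mathrm{(I)}} + \underbrace{\sum_{m'=1}^N \widetilde{\alpha}_{m'}(\bx)\widetilde{h}_{m'}}_{\mathrm{(II)}} + \underbrace{\sum_{m'=1}^N \widetilde{\alpha}_{m'}(\bx)\epsilon_{j,\pi^{-1}(m')}^{(i)}}_{\mathrm{(III)}}.
\end{equation*}
Since $\|\overline{f}_j^{(i)}\|_{\mathcal{H}_k} \leq B$ by the triangle inequality, the feature-map identity $\sigma^2(\bx;\bX_j^{(i)}) = \lambda\,\phi(\bx)^{\top}(\Phi^{\top}\Phi + \lambda \bI_N)^{-1}\phi(\bx)$ with Cauchy--Schwarz in $\mathcal{H}_k$ gives $|\mathrm{(I)}| \leq B\,\sigma(\bx;\bX_j^{(i)})$, while the permuted noises remain i.i.d.\ $\rho$-sub-Gaussian (because $\pi$ is independent of $(\epsilon_t)$), so combined with the identity $\sum_{m'}\widetilde{\alpha}_{m'}(\bx)^2 \leq \sigma^2(\bx;\bX_j^{(i)})/\lambda$ a scalar sub-Gaussian tail bound gives $|\mathrm{(III)}| \leq (\rho/\sqrt{\lambda})\sqrt{2\ln(4/\delta')}\,\sigma(\bx;\bX_j^{(i)})$ at confidence $1-\delta'/2$.

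The main obstacle is term (II), and this is precisely where the random permutation is needed. Because $(f_t)_{t\in\N_+}$ is oblivious and $\widetilde{\bm{\alpha}}(\bx)$ is measurable w.r.t.\ the history before $\pi$, the residuals $\widetilde{h}_{m'}$ are functions of $\pi$ alone, with $|\widetilde{h}_{m'}| \leq 2B$ (since $\|f\|_\infty \leq B$) and $\mathbb{E}_\pi[\widetilde{h}_{m'}] = N^{-1}\sum_m (f_{j,m}^{(i)} - \overline{f}_j^{(i)})(\widetilde{\bx}_{j,m'}^{(i)}) = 0$, whence $\mathbb{E}_\pi[\mathrm{(II)}] = 0$. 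Rewriting $\mathrm{(II)} = \sum_{m=1}^N \widetilde{\alpha}_{\pi(m)}(\bx)\,(f_{j,m}^{(i)} - \overline{f}_j^{(i)})(\widetilde{\bx}_{j,\pi(m)}^{(i)})$ exposes it as a sum indexed by a uniformly random permutation. I would invoke a Hoeffding-type concentration inequality for such sums---either via the Doob-martingale construction that reveals $\pi(1),\pi(2),\ldots$ sequentially with bounded conditional differences, or via Hoeffding's classical reduction from sampling without replacement to sampling with replacement---to obtain a sub-Gaussian tail with variance proxy proportional to $(2B)^2 \sum_{m'}\widetilde{\alpha}_{m'}(\bx)^2 \leq 4B^2\sigma^2(\bx;\bX_j^{(i)})/\lambda$. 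This produces $|\mathrm{(II)}| \leq (CB/\sqrt{\lambda})\sqrt{\ln(4/\delta')}\,\sigma(\bx;\bX_j^{(i)})$ at confidence $1-\delta'/2$. The delicate point is that the variance-based bound must scale with $\|\widetilde{\bm{\alpha}}(\bx)\|_2^2$ rather than $\max_{m'}\widetilde{\alpha}_{m'}(\bx)^2$; a naive bounded-differences / McDiarmid bound on the symmetric group would have delivered only the latter, which is exactly what would reintroduce the $\sqrt{H\gamma_H}$ factor that the random permutation is designed to avoid.

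Summing (I)--(III) at a fixed $(i,j,\bx)$ yields the claimed pointwise CB with parameter $\beta_T^{1/2}$. The uniform bound follows by a union bound: the batch-size recursion $N_j^{(i)} = \lceil \sqrt{T^{(i)} N_{j-1}^{(i)}}\,\rceil$ starting from $N_0^{(i)} = 1$ satisfies $\log_2 N_j^{(i)} \geq (1 - 2^{-j})\log_2 T^{(i)}$, so $Q^{(i)} \leq 1 + \log_2\log_2 H$; since CBs are required only for $j \leq Q^{(i)} - 1$, the total number of $(i,j,\bx)$ triples is at most $|\mX|Q_{T,H}$. Setting $\delta' = \delta/(|\mX|Q_{T,H})$ and summing failure probabilities over all such triples completes the argument.
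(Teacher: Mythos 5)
Your decomposition is the same as the paper's: the paper splits the error into $|\overline{f}_j^{(i)}(\bx)-\mu(\bx;\bX_j^{(i)},\bm{f}_j^{(i)})| + |\mu(\bx;\bX_j^{(i)},\bm{\epsilon}_j^{(i)})|$ and then splits the first piece into its conditional expectation given the pre-permutation history (your term (I), bounded by $B\sigma(\bx;\bX_j^{(i)})$ via Proposition~1 of Vakili et al., exactly as in \lemref{lem:eperm}) plus the deviation from that conditional expectation (your term (II), handled in \lemref{lem:nl_err}); the noise term and the union bound over the $|\mX|Q_{T,H}$ events are also handled identically. The one place where your argument is genuinely incomplete is the step you yourself flag as delicate: concentrating term (II) with variance proxy proportional to $\|\widetilde{\bm{\alpha}}(\bx)\|_2^2$. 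Neither of the two tools you name delivers this as stated. Term (II) is a doubly-indexed combinatorial sum $\sum_m c_{m,\pi(m)}$ with $c_{m,k}=\widetilde{\alpha}_k\,\bigl(f_{j,m}^{(i)}-\overline{f}_j^{(i)}\bigr)\bigl(\widetilde{\bx}_{j,k}^{(i)}\bigr)$, so Hoeffding's without-replacement-to-with-replacement reduction, which concerns unweighted sums $\sum_m x_{\pi(m)}$ over a fixed population, does not apply; and the Doob martingale that reveals the assignment one coordinate at a time has conditional increments carrying pool-depletion corrections of order $(N-m)^{-1}\sum_{m''>m}|\widetilde{\alpha}_{m''}|$, so Azuma yields a variance proxy of order $B^2\|\widetilde{\bm{\alpha}}\|_2^2\ln N$ at best (an extra $\sqrt{\ln H}$ inside $\beta_T^{1/2}$, harmless for the final regret order but not matching Eq.~\eqref{eq:beta}), while the cruder bounded-differences bound gives exactly the $N\max_k\widetilde{\alpha}_k^2$ proxy you rightly reject. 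The paper closes this step with a Talagrand-type concentration inequality for convex Lipschitz functions of a uniform permutation (Theorem~3.1 of Adamczak et al., restated as \lemref{lem:perm_conc}), applied to the linear---hence convex---map $h(\bm{a})=\bk(\bx,\mS_j^{(i)})^{\top}(\bK(\mS_j^{(i)},\mS_j^{(i)})+\lambda\bI)^{-1}\bm{a}$, whose $\ell_2$-Lipschitz constant is $\lambda^{-1/2}\sigma(\bx;\bX_j^{(i)})$, after rescaling the function values into $[0,1]$; this is the missing ingredient needed to make your sketch precise. Everything else in your proposal matches the paper's proof.
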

The full proof of \thmref{thm:ave_cb} is given in \appref{sec:proof_sec_rperp}.

\paragraph{Proof Sketch of \thmref{thm:ave_cb}}
We decompose the error $|\overline{f}_j^{(i)}(\bx) - \mu(\bx; \bX_j^{(i)}, \by_j^{(i)})|$ as 
\begin{align}
    \label{eq:err}
    \begin{split}
    |\overline{f}_j^{(i)}(\bx) - \mu(\bx; \bX_j^{(i)}, \by_j^{(i)})| \leq 
    |\overline{f}_j^{(i)}(\bx) - \mu(\bx; \bX_j^{(i)}, \bm{f}_j^{(i)})| + |\mu(\bx; \bX_j^{(i)}, \bm{\epsilon}_j^{(i)})|,    
    \end{split}
\end{align}
where $\bm{f}_j^{(i)} = (f_{j,1}^{(i)}(\bx_{j,1}^{(i)}), \ldots, f_{j,N_j^{(i)}}^{(i)}(\bx_{j,N_j^{(i)}}^{(i)}) )^{\top}$ and $\bm{\epsilon}_j^{(i)} = (\epsilon_{j,1}^{(i)}, \ldots, \epsilon_{j,N_j^{(i)}}^{(i)})$.
The second term in r.h.s. represents the effect of noise and is bounded from above as with the proof of Theorem~1 in \citet{vakili2021optimal}. The remaining interest is the first term. Here, let us denote $\tilde{\bx}_{j,m}^{(i)}$ as the $m$-th element of query candidate set $\mS_j^{(i)}$ (see Line~8 of \algoref{alg:rperp}). Furthermore, we denote
permutation index $\psi(m)$ as the natural number such that $\tilde{\bx}_{j,m}^{(i)} = \bx_{j,\psi(m)}^{(i)}$ holds.
Then, we obtain the equivalent expression of $\mu(\bx; \bX_j^{(i)}, \bm{f}_j^{(i)})$ as $\mu(\bx; \bX_j^{(i)}, \bm{f}_j^{(i)}) = \mu(\bx; \mS_j^{(i)}, \tilde{\bm{f}}_j^{(i)})$, where $\tilde{\bm{f}}_j^{(i)} = \left[f_{j,\psi(m)}^{(i)}\left(\tilde{\bx}_{j,m}^{(i)}\right)\right]_{m \in [N_j^{(i)}]}$. 
From this representation and the linearity of the expectation, we observe that $\Ep[\mu(\bx; \bX_j^{(i)}, \bm{f}_j^{(i)}) \mid H_{j-1}^{(i)}] = \mu(\bx; \bX_j^{(i)}, \overline{\bm{f}}_j^{(i)})$ holds, where $\overline{\bm{f}}_j^{(i)} = [\overline{f}_{j}^{(i)}(\tilde{\bx}_{j,m}^{(i)})]_{m \in [N_j^{(i)}]}$, and 
$H_{j-1}^{(i)}$ is the history up to $(j-1)$-th batch at interval $i$.
Finally, by carefully combining the above fact with the concentration inequality of random permutation (\lemref{lem:perm_conc} in the appendix) and Proposition~1 in \citet{vakili2021optimal}, we 
obtain the high-probability upper bound of the first term in Eq.~\eqref{eq:err}.

\subsection{Regret Analysis}
By combining \thmref{thm:ave_cb} with the analysis of standard PE and the gap between average function $\overline{f}_j^{(i)}$ and the true non-stationary reward functions $f_{j,m}^{(i)}$, 
we obtain the following regret upper bound of R-PERP.

\begin{theorem}[Regret upper bound of R-PERP]
    \label{thm:regret}
    Fix any $\delta \in (0, 1)$. Suppose that Assumptions~\ref{asmp:noise}--\ref{asmp:nons} hold. Furthermore, set confidence width $\beta_T^{1/2}$ as in Eq.~\eqref{eq:beta}. Then, the following statements hold with probability at least $1-\delta$:
    \begin{itemize}
        \item If $k = k_{\mathrm{SE}}$ and $V_T \geq T^{-\frac{1}{2}} \ln^{\frac{d+2}{3}} T$, the regret of R-PERP satisfies $R_T = \tilde{\mO} (V_T^{\frac{1}{3}} T^{\frac{2}{3}})$ by setting $H = \lceil T^{\frac{2}{3}} V_T^{-\frac{2}{3}} \ln^{\frac{d+2}{3}} T\rceil$.
        \item If $k = k_{\mathrm{Mat\acute{e}rn}}$ with $\nu > 1/2$ and $V_T \geq T^{-\frac{\nu}{2\nu +d}} \ln^{\frac{\nu + d}{2\nu + d}} T$, the regret of R-PERP satisfies $R_T = \tilde{\mO} (V_T^{\frac{\nu}{3\nu+d}} T^{\frac{2\nu + d}{3\nu + d}})$  by setting $H = \lceil T^{\frac{2\nu + d}{3\nu + d}} V_T^{-\frac{2\nu + d}{3\nu + d}} \ln^{\frac{4\nu + d}{6\nu+2d}} T\rceil$.
    \end{itemize}
\end{theorem}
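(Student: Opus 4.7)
The plan is to combine the average-function confidence bounds of \thmref{thm:ave_cb} with the batched elimination calculus underlying the standard stationary PE analysis \citep{li2022gaussian,vakili2021optimal}, while paying a controlled drift penalty coming from \asmpref{asmp:nons}. Conditioning on the $(1-\delta)$-event of \thmref{thm:ave_cb}, I would first decompose each instantaneous regret as
\begin{align*}
f_{j,m}^{(i)}\rbr{\bx^{(i),*}_{j,m}} - f_{j,m}^{(i)}\rbr{\bx_{j,m}^{(i)}}
\le \overline{f}_j^{(i)}\rbr{\overline{\bx}_j^{(i),*}} - \overline{f}_j^{(i)}\rbr{\bx_{j,m}^{(i)}} + 2 V_j^{(i)},
\end{align*}
where $\overline{\bx}_j^{(i),*}\in\argmax_\bx \overline{f}_j^{(i)}(\bx)$ and $V_j^{(i)} \coloneqq \sum_{m=1}^{N_j^{(i)}-1}\|f_{j,m+1}^{(i)}-f_{j,m}^{(i)}\|_\infty$, so that $\|\overline{f}_j^{(i)}-f_{j,m}^{(i)}\|_\infty \le V_j^{(i)}$. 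Summed over batches and intervals the within-batch drifts contribute at most $\mO(HV_T)$ by \asmpref{asmp:nons}.

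Next, I would control the leading average-function term through the elimination structure. In the easy case $\overline{\bx}_j^{(i),*}\in\mX_j^{(i)}$, the non-elimination condition gives $\mathrm{ucb}_{j-1}^{(i)}(\bx_{j,m}^{(i)}) \ge \mathrm{lcb}_{j-1}^{(i)}(\overline{\bx}_j^{(i),*})$, which combined with the CBs of \thmref{thm:ave_cb} applied to $\overline{f}_{j-1}^{(i)}$ yields
\begin{align*}
\overline{f}_{j-1}^{(i)}\rbr{\overline{\bx}_j^{(i),*}} - \overline{f}_{j-1}^{(i)}\rbr{\bx_{j,m}^{(i)}}
\le 2\beta_T^{1/2}\sbr{\sigma\rbr{\bx_{j,m}^{(i)};\bX_{j-1}^{(i)}} + \sigma\rbr{\overline{\bx}_j^{(i),*};\bX_{j-1}^{(i)}}}.
\end{align*}
Because Lines~7--9 of \algoref{alg:rperp} implement non-adaptive maximum variance reduction over $\mX_{j-1}^{(i)}$, Proposition~1 of \citet{vakili2021optimal} bounds each posterior standard deviation uniformly by $\mO\rbr{\sqrt{\gamma_{N_{j-1}^{(i)}}/N_{j-1}^{(i)}}}$; transporting the inequality from $\overline{f}_{j-1}^{(i)}$ to $\overline{f}_j^{(i)}$ costs only the inter-batch drift, whose global sum is again $\mO(HV_T)$. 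For the harder case $\overline{\bx}_j^{(i),*}\notin\mX_j^{(i)}$, I would chain the argument back through the batch $j'<j$ in which $\overline{\bx}_j^{(i),*}$ was eliminated; at each step the reference point is replaced by one still alive at that batch and only the drift between the two batches is paid, so the extra cost telescopes against the interval-level total variation $V^{(i)}\coloneqq \sum_j V_j^{(i)}$.

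With the per-query bound in hand, the remaining work is the standard PE summation. The doubling schedule $N_j^{(i)}\approx \lceil\sqrt{T^{(i)}N_{j-1}^{(i)}}\rceil$ caps the number of batches by $Q^{(i)}=\mO(\log\log H)$ and, using monotonicity of $H\mapsto\gamma_H$, yields $\sum_j N_j^{(i)}\sqrt{\gamma_{N_{j-1}^{(i)}}/N_{j-1}^{(i)}} = \tilde{\mO}(\sqrt{H\gamma_H})$ per interval. Summing over the $\lceil T/H\rceil$ intervals and collecting the drift penalty gives
\begin{align*}
R_T = \tilde{\mO}\rbr{T\sqrt{\gamma_H/H} + H V_T}.
\end{align*}
Substituting $\gamma_H = \tilde{\mO}(1)$ for SE and $\gamma_H = \tilde{\mO}\rbr{H^{d/(2\nu+d)}}$ for Mat\'ern, and choosing the prescribed $H$ to balance the two terms (the stated lower bounds on $V_T$ guarantee $1\le H\le T$), delivers the claimed $\tilde{\mO}(T^{2/3}V_T^{1/3})$ and $\tilde{\mO}(T^{(2\nu+d)/(3\nu+d)}V_T^{\nu/(3\nu+d)})$ rates.

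The main obstacle I anticipate is the elimination argument in the harder case $\overline{\bx}_j^{(i),*}\notin\mX_j^{(i)}$: \algoref{alg:rperp} only supplies CBs for per-batch averages, so the proof must show that chaining back to a still-alive reference point costs at most $\mO(V^{(i)})$ in total per interval, rather than $\mO(Q^{(i)} V^{(i)})$ per batch. Preserving this single-$V^{(i)}$ inflation is precisely what holds the drift contribution at the optimal $HV_T$ level and, through the balance with the PE term, matches (up to log factors) the algorithm-independent lower bound of \thmref{thm:lb}.
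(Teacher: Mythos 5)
Your proposal is correct and follows essentially the same route as the paper's proof: decompose each per-query regret into drift terms plus an average-function elimination term, control the latter via \thmref{thm:ave_cb} together with the maximum-variance-reduction property and the doubling batch schedule, and balance the resulting $\tilde{\mO}(V_T H + T\sqrt{\gamma_H/H})$ bound by the prescribed choice of $H$. The ``main obstacle'' you flag is in fact a non-issue: the paper's \lemref{lem:of_oftx} simply pays up to $Q^{(i)} V_T^{(i)}$ per query for chaining back through the batch where $\overline{\bm{x}}_{j-1}^{(i)\ast}$ was eliminated (using that the alive-set maximizer of each batch average survives elimination under the confidence event), and this $Q^{(i)} = \mO(\log_2\log_2 H)$ factor is harmlessly absorbed into the $\tilde{\mO}$, so the tighter single-$V^{(i)}$ telescoping you aim for is not needed.
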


Comparing our lower bounds (\thmref{thm:lb}) with \thmref{thm:regret}, we can confirm that R-PERP achieves nearly optimal regret for SE and Mat\'ern RKHS.

\begin{remark}
High-probability regret guarantees provided in \thmref{thm:regret} imply the expected regret guarantees of the same order. This can be easily confirmed by setting $\delta = 1/T$ and noting that $f_t(\bx^\ast) - f_t(\bx_t) \leq 2B$ holds from \asmpref{asmp:func}.
\end{remark}

\paragraph{Proof Sketch of \thmref{thm:regret}}
To leverage the regret analysis 
technique of PE, we decompose the instantaneous regret $r_{j,m}^{(i)} \coloneqq f_{j,m}^{(i)}(\bx_{j,m}^{(i)\ast}) - f_{j,m}^{(i)}(\bx_{j,m}^{(i)})$ as follows:
\begin{align}
    \begin{split}
        r_{j,m}^{(i)} 
        =
        \sbr{f_{j,m}^{(i)} \left(\bx_{j,m}^{(i)\ast}\right) - \overline{f}_{j-1}^{(i)}\left(\tilde{\bm{x}}_{j-1}^{(i)\ast}\right)}
        + \sbr{\overline{f}_{j-1}^{(i)}\left(\tilde{\bm{x}}_{j-1}^{(i)\ast}\right) - \overline{f}_{j-1}^{(i)}\left(\bm{x}_{j, m}^{(i)}\right)} 
        + \sbr{\overline{f}_{j-1}^{(i)}\left(\bm{x}_{j, m}^{(i)}\right) - f_{j,m}^{(i)}\left(\bx_{j,m}^{(i)}\right)},
    \end{split}
\end{align}
where $\tilde{\bm{x}}_{j-1}^{(i)\ast} \in \argmax_{\bx \in \mX_{j-1}^{(i)}} \overline{f}_{j-1}^{(i)}(\bx)$ is the maximum 
over the potential maximizer $\mX_{j-1}^{(i)}$ of $(j-1)$-th batch of R-PERP. 
The cumulative regret that arises from the second term is bounded 
from above by combining our CB for the average function (\thmref{thm:ave_cb}) with the analysis of the standard PE. 
In each interval, the order of this term becomes $\tilde{O}(\sqrt{H \gamma_H})$. As for the cumulative regret that arises from the first and third terms, we can obtain $\tilde{O}(V_T^{(i)} H)$ upper bound in any $i$-th interval. Here, $V_T^{(i)}$ represents the total variation of the sequence of the underlying reward functions on the $i$-th interval. 
Therefore, aggregating the cumulative regret upper bounds of each interval, we have $R_T \leq \sum_{i=1}^{\lceil T/H \rceil} \tilde{O}(V_T^{(i)} H + \sqrt{H \gamma_H}) = \tilde{O}(V_T H + T \sqrt{\gamma_H / H})$. By setting $H$ to balance $V_T H$ and $T\sqrt{\gamma_H/H}$, we obtain the desired results.

\section{Numerical Experiments}
\label{sec:exp}

\begin{figure*}[t!]
    \centering
    \includegraphics[width=0.4\linewidth]{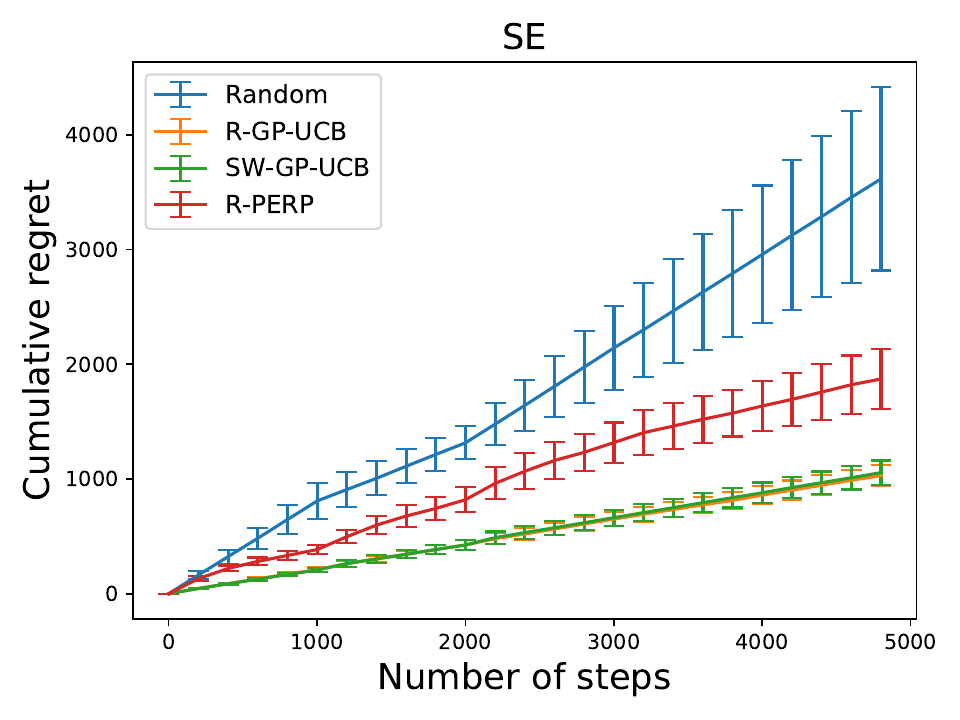}
    \includegraphics[width=0.4\linewidth]{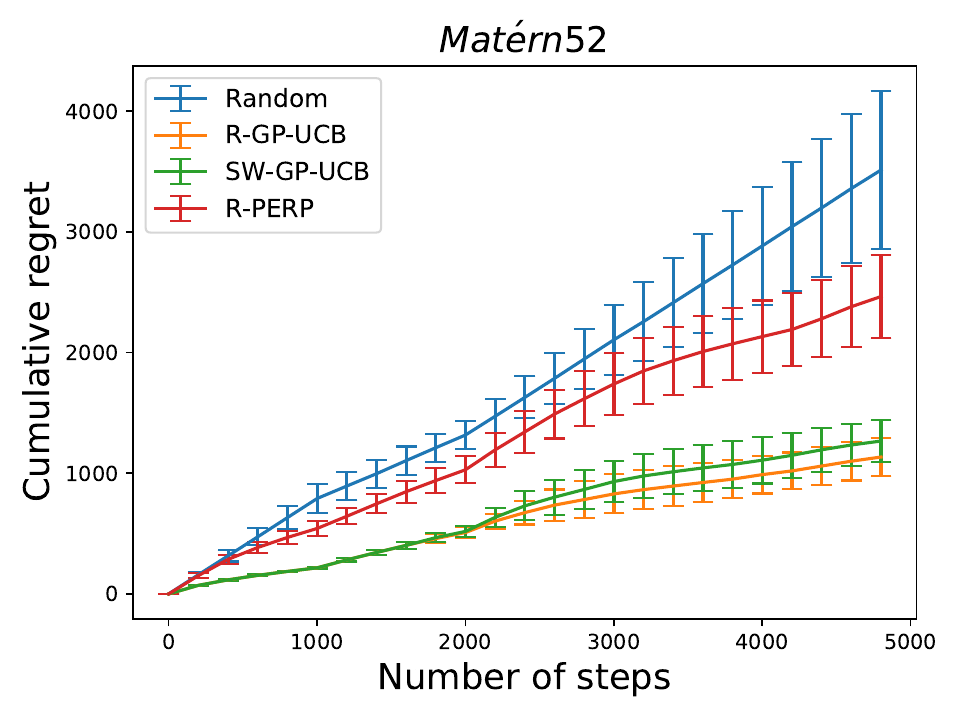}
    \caption{Numerical experiment results with $5$ different seeds. The plots show the average cumulative regret, and the error bars represent one standard error. 
    The left and right plots show the results with SE kernel and Mat\'ern kernel with $\nu = 5/2$, respectively.}
    \label{fig:experimental_result}
\end{figure*}
In this section, we confirm the empirical performance of our R-PERP algorithm. 
We would like to emphasize that we will not claim the state-of-the-art empirical performance of our R-PERP algorithm. Specifically, the existing works report that the empirical performance of PE-based algorithms tends to be inferior to UCB-like algorithms due to the large constant factor of the regret upper bound~\citep{li2022gaussian,bogunovic2022robust}. 
In our numerical experiments, we find that R-PERP inherits such deterioration of the empirical performance compared with existing UCB-like algorithms such as R-GP-UCB.
Filling the gap between practical and empirical performance is an important direction for our future work; however, we believe that the worse practical results of R-PERP do not diminish our theoretical contributions.

\paragraph{Setting}
We conduct the experiments with synthetic functions 
used in \citet{deng2022weighted}, which considers
abrupt changes in the reward functions. 
To construct such synthetic functions, we first generate
the base objective function $f_{\text{base}} \in \mH_k$ 
as $f_{\text{base}}(\bx) = \sum_{i=1}^U \alpha_i k(\cdot, \overline{\bx}_i)$, where $\alpha_i \in [-1, 1]$ 
and $\overline{\bx}_i \in [0, 1]^2$ are generated uniformly at random. Then, we define the underlying reward functions $(f_t)_{t \in [T]}$ as $f_t = f_{\text{base}}^{(1)}$ for $t \in [T/5]$; $f_t = f_{\text{base}}^{(2)}$ for $t \in [2T/5]\setminus [T/5]$; otherwise, $f_t = f_{\text{base}}^{(3)}$.
Here, $(f_{\text{base}}^{(i)})_{i \in [3]}$ are the 
base functions generated by the aforementioned procedures with different seeds. Our experiments were conducted with SE and Mat\'ern kernels with the lengthscale parameter $\ell = 0.5$ and the smoothness parameter $\nu = 5/2$. Furthermore, we set the input domain $\mathcal{X}$ as the $30 \times 30$ grid points obtained by evenly splitting $[0, 1]^2$. Finally, we set $U = 10$ and $T = 5000$.

\paragraph{Algorithms}
We consider the following four algorithms:
\begin{itemize}
    \item \textbf{Random}: Baseline algorithm whose query points are chosen uniformly at random 
    on $\mX$.
    \item \textbf{R-GP-UCB}: The GP-UCB algorithm with restart and resetting strategy~\citep{zhou2021no}. We set the restart interval $H$ based on the theoretically suggested value $H = \tilde{\gamma}_T^{1/4} (T/V_T)^{1/2}$, 
    where $\tilde{\gamma}_T = \ln^{d+1} T$ for $k = k_{\text{SE}}$ 
    and $\tilde{\gamma}_T = T^{\frac{d}{2\nu + d}} \ln^{\frac{2\nu}{2\nu + d}} T$ for $k = k_{\mathrm{Mat\acute{e}rn}}$.
    \item \textbf{SW-GP-UCB}: The sliding-window GP-UCB algorithm proposed in \citep{zhou2021no}. SW-GP-UCB constructs the GP model in each step using only past $W$ input-output pairs, where $W$ is the pre-specified window width. Then, SW-GP-UCB chooses the next query point based on the UCB score calculated by the GP model described above. As with the R-GP-UCB algorithm, we set $W$ based on theoretically suggested value $W = \tilde{\gamma}_T^{1/4} (T/V_T)^{1/2}$.
    \item \textbf{R-PERP}: Our proposed algorithm. We set the reset interval $H$ and confidence width parameter $\beta_T$ as described in \thmref{thm:regret}.
\end{itemize}
In all methods, we use the theoretically suggested confidence width $\beta_t^{1/2}$ with confidence level $\delta = 0.1$.
Here, the theoretically suggested confidence width of R-GP-UCB and SW-GP-UCB require the MIG $\gamma_T$. In our experiments, we compute the upper bound of the MIG by relying on the greedy sampling rule proposed in Section.~5.1 of \citet{srinivas10gaussian}; then, we use this upper bound as the proxy of the MIG.
Here, we would like to note that running the OPKB algorithm in our experimental setup of $\mX$ requires at least approximately $10^8$--$10^9$ order of computations; therefore, we exclude OPKB from our numerical experiments.

\paragraph{Results} 

\figref{fig:experimental_result} shows the results. 
We confirm that our R-PERP algorithm achieves superior performance to a simple random sampling baseline. Specifically, 
the average regret of R-PERP increases sub-linearly between $2000$ and $5000$ steps, even if two abrupt changes of reward functions occur at $1000$ and $2000$ steps. These results 
indicate that PE-based algorithms such as R-PERP work in non-stationary environments.
On the other hand, R-PERP has a worse practical performance than that of UCB-based algorithms. These practical performance gaps between PE-based algorithms and UCB-based algorithms in non-stationary environments are consistent with the results in the existing stationary environment settings~(e.g., \citep{bogunovic2022robust,li2022gaussian}).

\section{Conclusion}
In this paper, we study the near-optimal algorithms of the non-stationary kernelized bandit problem. We show the first lower bounds for the problem with the SE and Mat\'ern kernels.
Furthermore, we propose a novel nearly-optimal PE-based algorithm for a non-stationary environment. Our proposed algorithm is based on simple random permutation procedures on the query candidate sets of PE, which enables us to derive tighter confidence bounds tailored to non-stationary settings.

The important future direction of our research is to extend our method to 
the unknown total drift setting. Our works rely on the prior knowledge of the upper bound of the total variation $V_T$ of the underlying reward functions. Some existing works tackle the unknown $V_T$ setting by adaptively scheduling the restart-reset interval of the algorithm (e.g., \citet{wei2021non,hong2023optimization}).
Extending our proposed algorithm by incorporating their methods is an interesting and important direction for our future work.

\section*{Acknowledgements}
This work was supported by JSPS KAKENHI Grant Number (JP23K19967 and JP24K20847) and RIKEN Center for Advanced Intelligence Project.

\bibliographystyle{plainnat}
\bibliography{main}

\newpage
\appendix

\onecolumn

\section{Proof of Section~\ref{sec:lower_bound}}
\label{sec:proof_sec_lb}

\subsection{Proof of \thmref{thm:lb}}
\label{sec:proof_lb}
\begin{proof}
For any $H \in [T]$, we separate $[T]$ into the $\lceil T/H \rceil$ intervals.
Then, suppose that the length of the interval $i \in [\lceil T/H \rceil-1]$ and the 
last interval are chosen as $H$ and $T - H(\lceil T/H \rceil - 1)$, respectively.
For the notational convenience, we denote $\tilde{H}_i$ as the length of the $i$-th interval.
From \lemref{lem:lb_funcs}, given any algorithm, the sequence $(f_t)_{t \in [T]}$ 
that satisfies \ref{asmp:func} and the following properties exist:
\begin{enumerate}
    \item For any interval $i$, there exist $f \in \mF$ 
    such that $f_t = f$ for all $\sum_{j=1}^{i-1} \tilde{H}_j < t \leq \sum_{j=1}^{i} \tilde{H}_j$, where $\mF$ is the function set defined in \lemref{lem:lb_funcs}.
    \item The expected cumulative regret on the interval $i$ is at least $\tilde{H}_i \varepsilon(\tilde{H}_i)$, 
    where $\varepsilon(\cdot)$ is defined in \lemref{lem:lb_funcs}.
\end{enumerate}
Under such $(f_t)_{t \in [T]}$, we have
\begin{equation}
    \Ep[R_T] 
    \geq \sum_{i=1}^{\lceil T/H \rceil} \tilde{H}_i \varepsilon(\tilde{H}_i)
    \geq \lceil T/H \rceil H \varepsilon(H)
    \geq T \varepsilon(H).
\end{equation}
Furthermore, since the $f_t$ only changes $\lceil T/H \rceil - 1$ times from property $1$, the following 
inequality holds from \lemref{lem:lb_funcs}:
\begin{equation}
    \label{eq:upper_f_f}
    \sum_{t=2}^T \|f_t - f_{t-1}\|_{\infty}
    \leq 4 \varepsilon(H) \left(\left\lceil \frac{T}{H} \right\rceil -1\right)
    \leq 4 \varepsilon(H) \frac{T}{H}.
\end{equation}
The results in \thmref{thm:lb} are obtained by choosing $H$ such that the upper bound of \eqref{eq:upper_f_f} is equal or less than $V_T$.

\paragraph{Lower bound for the SE kernel}
From \lemref{lem:lb_funcs}, $\varepsilon(H) = C_{\mathrm{SE}}^{(1)} \sqrt{(\ln H)^{d/2} /H}$ when $k = k_{\mathrm{SE}}$. 
Then,
\begin{align}
    \sum_{t=2}^T \|f_t - f_{t-1}\|_{\infty} \leq 4 C_{\mathrm{SE}}^{(1)} (\ln T)^{d/4} H^{-3/2} T.
\end{align}
Here, we consider the setting $H = \lceil [4 C_{\mathrm{SE}}^{(1)} (\ln T)^{d/4}]^{2/3} T^{2/3} V_T^{-2/3} \rceil$
and check if the condition $4 C_{\mathrm{SE}}^{(1)} (\ln T)^{d/4} H^{-3/2} T \leq V_T$ is satisfied for such $H$. 
By aligning the condition 
$\tilde{C}_{\mathrm{SE}} T^{-1/2} \ln^{d/4} T \leq V_T < \tilde{C}_{\mathrm{SE}} T \ln^{d/4} T$ with 
$\tilde{C}_{\mathrm{SE}} = 4C_{\mathrm{SE}}^{(1)}$, we can easily 
confirm that $1 < [4 C_{\mathrm{SE}}^{(1)} (\ln T)^{d/4}]^{2/3} T^{2/3} V_T^{-2/3}$ 
and $H \leq T$ hold. Furthermore, we have 
\begin{align}
    &[4 C_{\mathrm{SE}}^{(1)} (\ln T)^{d/4}]^{2/3} T^{2/3} V_T^{-2/3} \leq H \\
    & \Leftrightarrow 4 C_{\mathrm{SE}}^{(1)} (\ln T)^{d/4} T V_T^{-1} \leq H^{3/2} \\
    & \Leftrightarrow 4 C_{\mathrm{SE}}^{(1)} (\ln T)^{d/4} H^{-3/2} T \leq V_T.
\end{align}
Therefore, for sufficiently large $T$, we have
\begin{align}
    \Ep[R_T] 
    &\geq T \varepsilon(H) \\
    &= T C_{\mathrm{SE}}^{(1)} (\ln H)^{d/4} H^{-1/2} \\
    &\geq T C_{\mathrm{SE}}^{(1)} (\ln H)^{d/4} \{2[4 C_{\mathrm{SE}}^{(1)} (\ln T)^{d/4}]^{2/3} T^{2/3} V_T^{-2/3}\}^{-1/2} \\
    &\geq T C_{\mathrm{SE}}^{(1)} \left(\ln \left\{ [4 C_{\mathrm{SE}}^{(1)} (\ln T)^{d/4}]^{2/3} \overline{C}^{-2/3} T^{2(1 - \overline{c})/3} \right\} \right)^{d/4} \{2[4 C_{\mathrm{SE}}^{(1)} (\ln T)^{d/4}]^{2/3} T^{2/3} V_T^{-2/3}\}^{-1/2} \\
    &\geq T C_{\mathrm{SE}}^{(1)} \left(-\frac{2}{3} \ln \overline{C} + \frac{2(1 - \overline{c})}{3} \ln T\right)^{d/4} 2^{-5/6} {C_{\mathrm{SE}}^{(1)}}^{-1/3} (\ln T)^{-d/12} T^{-1/3} V_T^{1/3} \\
    &\geq 2^{-5/6} {C_{\mathrm{SE}}^{(1)}}^{2/3} C_{\overline{C}, \overline{c}, d} (\ln T)^{d/6} T^{2/3} V_T^{1/3}, 
\end{align}
where $C_{\overline{C}, \overline{c}, d} > 0$ denote the constant 
that only depends on $\overline{C}, \overline{c}$, and $d$.
In the above inequalities,
\begin{itemize}
    \item the third line follows from $H \leq 2[4 C_{\mathrm{SE}}^{(1)} (\ln T)^{d/4}]^{2/3} T^{2/3} V_T^{-2/3}$ since $1 < [4 C_{\mathrm{SE}}^{(1)} (\ln T)^{d/4}]^{2/3} T^{2/3} V_T^{-2/3}$.
    \item the fourth line follows from 
    $H \geq [4 C_{\mathrm{SE}}^{(1)} (\ln T)^{d/4}]^{2/3} T^{2/3} V_T^{-2/3} \geq [4 C_{\mathrm{SE}}^{(1)} (\ln T)^{d/4}]^{2/3} \overline{C}^{-2/3} T^{2 (1 - \overline{c}) / 3}$.
    \item the fifth line follows from the fact that 
    $\ln [4 C_{\mathrm{SE}}^{(1)} (\ln T)^{d/4}]^{2/3} \geq 0$ holds for sufficiently large $T$.
    \item the last line follows from the fact 
    that there exists $C_{\overline{C}, \overline{c}, d} > 0$ 
    such that 
    $\left(-\frac{2}{3} \ln \overline{C} + \frac{2(1 - \overline{c})}{3} \ln T\right)^{d/4} \geq C_{\overline{C}, \overline{c}, d} (\ln T)^{d/4}$ for sufficiently large $T \in \mathbb{N}_+$.
\end{itemize}
Finally, defining the constant $C_{\mathrm{SE}}$ as $C_{\mathrm{SE}} = 2^{-5/6} {C_{\mathrm{SE}}^{(1)}}^{2/3} C_{\overline{C}, \overline{c}, d}$, 
we obtain the desired result.

\paragraph{Lower bound for the Mat\'ern kernel}
From \lemref{lem:lb_funcs}, $\varepsilon(H) = C_{\mathrm{Mat}}^{(1)} H^{-\frac{\nu}{2\nu + d}}$ when $k = k_{\mathrm{Mat}}$. 
Then,
\begin{align}
    \sum_{t=2}^T \|f_t - f_{t-1}\|_{\infty} \leq 4 C_{\mathrm{Mat}}^{(1)} H^{-\frac{3\nu + d}{2\nu + d}}T.
\end{align}

Here, we consider the setting $H = \lceil (4 C_{\mathrm{Mat}}^{(1)} T V_T^{-1})^{\frac{2\nu + d}{3\nu + d}} \rceil$
and check if the condition $4 C_{\mathrm{Mat}}^{(1)} H^{-\frac{3\nu + d}{2\nu + d}}T \leq V_T$ is satisfied for such $H$. 
By aligning the condition 
$\tilde{C}_{\mathrm{Mat}} T^{-\frac{\nu}{2\nu + d}} \leq V_T \leq 2^{\frac{3\nu + d}{2\nu + d}} \tilde{C}_{\mathrm{Mat}} T$ with 
$\tilde{C}_{\mathrm{Mat}} = 4C_{\mathrm{Mat}}^{(1)}$, we can easily 
confirm that $1/2 \leq (4 C_{\mathrm{Mat}}^{(1)} T V_T^{-1})^{\frac{2\nu + d}{3\nu + d}}$ 
and $H \leq T$ hold. Furthermore, we have 
\begin{align}
    &\frac{H}{2} \leq (4 C_{\mathrm{Mat}}^{(1)} T V_T^{-1})^{\frac{2\nu + d}{3\nu + d}} \leq H \\
    & \Leftrightarrow 2^{-\frac{3\nu + d}{2\nu +d}} H^{\frac{3\nu + d}{2\nu +d}} \leq 4 C_{\mathrm{Mat}}^{(1)} T V_T^{-1} \leq H^{\frac{3\nu + d}{2\nu +d}} \\
    & \Leftrightarrow 2^{-\frac{3\nu + d}{2\nu +d}} V_T \leq 4 C_{\mathrm{Mat}}^{(1)} H^{-\frac{3\nu + d}{2\nu +d}} T \leq V_T.
\end{align}
Therefore, we have
\begin{align}
    \Ep[R_T] 
    &\geq T \varepsilon(H) \\
    &= T C_{\mathrm{Mat}}^{(1)} H^{-\frac{\nu}{2\nu + d}} \\
    &= \frac{H}{4} \cdot 4C_{\mathrm{Mat}}^{(1)} H^{-\frac{3\nu + d}{2\nu + d}} T  \\
    &\geq 2^{-2 -\frac{3\nu + d}{2\nu +d}} V_T H \\
    &\geq 2^{-2 -\frac{3\nu + d}{2\nu +d}} (4 C_{\mathrm{Mat}}^{(1)})^{\frac{2\nu + d}{3\nu + d}} T^{\frac{2\nu + d}{3\nu + d}} V_T^{\frac{\nu}{3\nu + d}}.
\end{align}
Defining the constant $C_{\mathrm{Mat}}$ as $C_{\mathrm{Mat}} = 2^{-2 -\frac{3\nu + d}{2\nu +d}} (4 C_{\mathrm{Mat}}^{(1)})^{\frac{2\nu + d}{3\nu + d}}$, 
we obtain the desired result.
\end{proof}

\subsection{Proof of \lemref{lem:lb_funcs}}
\label{sec:proof_lb_funcs}
\lemref{lem:lb_funcs} can be obtained with minor modification of the proof of Theorem~2 in \citet{scarlett2017lower}. The required modifications are as follows:

\begin{itemize}
\item The upper bound of the total reward (Eq.~(70) on p16 in \citet{scarlett2017lower}). In the original paper, the upper bound of Eq.~(70) is given as the worst-case total reward for the elements over $\mathcal{H}_k(B) \coloneqq \{f \in \mathcal{H}_k \mid \|f\|_{\mH_k} \leq B\}$. This upper bound is derived from the upper bound of the average reward of a finite function family $\mathcal{F}$, which is constructed by the Fourier transform of bump functions (for details on $\mathcal{F}$, refer to Sec.~3 of \citet{scarlett2017lower} or Sec.~4 of \citet{cai2021lower}). Since the upper bound on the average reward over $\mathcal{F}$ implies that the upper bound holds for some element in $\mathcal{F}$, we can prove the lower bound of regret for such $f$ in the same manner as in the original paper.

\item The output range of an element of aforementioned $\mathcal{F}$ is, due to its construction, bounded in $[-2\varepsilon(T), 2\varepsilon(T)]$. From this property, it is trivial that for any $f, \tilde{f} \in \mathcal{F}$, $\|f - \tilde{f}\| \leq 4\varepsilon(T)$ holds. Moreover, the constructions of $\varepsilon(T)$ for SE kernels and Mat\'ern kernels are described on $p17$ in \citet{scarlett2017lower}.
\end{itemize}

\section{Proof of Section~\ref{sec:rperp}}
\label{sec:proof_sec_rperp}
\subsection{Proof of \thmref{thm:ave_cb}}
\begin{lemma}
    \label{lem:eperm}
    Fix any $i \in \lceil T/H \rceil$, $j \leq Q^{(i)} -1$, 
    and $\bx \in \mX$, where $Q^{(i)}$ is the total number of 
    batch on $i$-th interval. Under \asmpref{asmp:func}, the following inequality holds when running \algoref{alg:rperp}:
    \begin{equation}
        \left| \Ep\sbr{\overline{f}_j^{(i)}(\bx) - \mu\rbr{\bx; \bX_j^{(i)}, \bm{f}_j^{(i)}} ~\middle|~ H_{j-1}^{(i)}} \right| \leq B \sigma\rbr{\bx; \bX_j^{(i)}},
    \end{equation}
    where $H_{j-1}^{(i)} \coloneqq \bigcup_{\tilde{i}\leq i, \tilde{j} \leq j-1} \{(\bx_{\tilde{j}, m}^{(\tilde{i})}, y_{\tilde{j}, m}^{(\tilde{i})})\}_{m \leq N_{\tilde{j}}^{\tilde{i}}}$ represents the history up to $j-1$-th batch on $i$-th interval.
\end{lemma}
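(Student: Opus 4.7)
The plan is to exploit two structural features of \algoref{alg:rperp}: permutation-invariance of kernel ridge regression, and the uniform random permutation in Line~11, which turns each observed function value into an unbiased draw from the average $\overline{f}_j^{(i)}$ evaluated at the matching candidate. First I would introduce $\tilde{\bx}_{j,m}^{(i)}$ for the $m$-th element of $\mS_j^{(i)}$ in the deterministic order given by Lines~7--9, and let $\psi\colon [N_j^{(i)}] \to [N_j^{(i)}]$ be the uniform random permutation implicitly realised on Line~11, so that $\tilde{\bx}_{j,m}^{(i)} = \bx_{j,\psi(m)}^{(i)}$. Because $\bX_j^{(i)}$ is merely a relabelling of $\mS_j^{(i)}$ and both the posterior mean and posterior variance are invariant under row-permutations of the design, I obtain the identities $\mu(\bx; \bX_j^{(i)}, \bm{f}_j^{(i)}) = \mu(\bx; \mS_j^{(i)}, \tilde{\bm{f}}_j^{(i)})$ and $\sigma(\bx; \bX_j^{(i)}) = \sigma(\bx; \mS_j^{(i)})$, where $\tilde{\bm{f}}_{j,m}^{(i)} \coloneqq f_{j,\psi(m)}^{(i)}(\tilde{\bx}_{j,m}^{(i)})$. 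Conditional on $H_{j-1}^{(i)}$, the potential maximiser $\mX_j^{(i)}$, the candidate set $\mS_j^{(i)}$, and the oblivious functions $(f_{j,k}^{(i)})_k$ are all deterministic; only $\psi$ remains random.

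Next I would compute the conditional expectation. Since $\psi$ is a uniform random permutation, each marginal $\psi(m)$ is uniform on $[N_j^{(i)}]$, and the evaluation points $\tilde{\bx}_{j,m}^{(i)}$ are deterministic under the conditioning, hence
\begin{equation}
\Ep\sbr{f_{j,\psi(m)}^{(i)}\rbr{\tilde{\bx}_{j,m}^{(i)}} ~\middle|~ H_{j-1}^{(i)}} = \frac{1}{N_j^{(i)}} \sum_{k=1}^{N_j^{(i)}} f_{j,k}^{(i)}\rbr{\tilde{\bx}_{j,m}^{(i)}} = \overline{f}_j^{(i)}\rbr{\tilde{\bx}_{j,m}^{(i)}}.
\end{equation}
By linearity of the posterior mean in its response vector (the kernel matrix and the cross-kernel vector being deterministic under the conditioning), this yields
\begin{equation}
\Ep\sbr{\mu\rbr{\bx; \bX_j^{(i)}, \bm{f}_j^{(i)}} ~\middle|~ H_{j-1}^{(i)}} = \mu\rbr{\bx; \mS_j^{(i)}, \overline{\bm{f}}_j^{(i),\mS}}, \qquad \overline{\bm{f}}_{j,m}^{(i),\mS} \coloneqq \overline{f}_j^{(i)}\rbr{\tilde{\bx}_{j,m}^{(i)}}.
\end{equation}

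Finally I would invoke the standard noise-free RKHS regression inequality (Proposition~1 of \citet{vakili2021optimal}) applied to the function $\overline{f}_j^{(i)}$ with design $\mS_j^{(i)}$. Noting that $\overline{f}_j^{(i)} \in \mH_k$ with $\|\overline{f}_j^{(i)}\|_{\mH_k} \leq B$ by the triangle inequality applied to \asmpref{asmp:func}, this inequality delivers
\begin{equation}
\abs{\overline{f}_j^{(i)}(\bx) - \mu\rbr{\bx; \mS_j^{(i)}, \overline{\bm{f}}_j^{(i),\mS}}} \leq B\, \sigma\rbr{\bx; \mS_j^{(i)}} = B\, \sigma\rbr{\bx; \bX_j^{(i)}}.
\end{equation}
Pulling the deterministic quantity $\overline{f}_j^{(i)}(\bx)$ through the conditional expectation and combining with the expectation identity from the previous step yields the claim. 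I expect the main obstacle to be purely notational: keeping the bijection between the deterministic candidate order $\tilde{\bx}_{j,m}^{(i)}$ and the randomised realised order $\bx_{j,m}^{(i)}$ straight, and verifying that, once we condition on $H_{j-1}^{(i)}$, the response vector $\tilde{\bm{f}}_j^{(i)}$ depends on randomness only through $\psi$ — after which the uniformity of $\psi$ does all the work.
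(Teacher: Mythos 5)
Your proposal is correct and follows essentially the same route as the paper's proof: permutation-invariance of the posterior mean, uniformity of each marginal $\psi(m)$ to replace $\tilde{\bm{f}}_j^{(i)}$ by $\overline{\bm{f}}_j^{(i)}$ in expectation, and then Proposition~1 of \citet{vakili2021optimal} applied to $\overline{f}_j^{(i)}$ with $\|\overline{f}_j^{(i)}\|_{\mH_k} \leq B$. Your version is if anything slightly more explicit than the paper's (e.g., spelling out the triangle-inequality bound on the norm of the average function and the conditional determinism of $\mS_j^{(i)}$), and no step is missing.
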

\begin{proof}
Note that the query candidate points $\tilde{\bx}_{j,m}^{(i)}$ are fixed 
given $H_{j-1}^{(i)}$. 
Furthermore, since the permutation index is chosen uniformly, 
we have $\Ep[f_{j, \psi(m)}^{(i)}(\tilde{\bx}_{j, m}^{(i)})] = \overline{f}_j^{(i)}(\tilde{\bx}_{j, m}^{(i)})$. 
Therefore,
\begin{align}
    &\Ep\sbr{\overline{f}_j^{(i)}(\bx) - \mu\rbr{\bx; \bX_j^{(i)}, \bm{f}_j^{(i)}} ~\middle|~ H_{j-1}^{(i)}} \\
    &= \overline{f}_j^{(i)}(\bx) - \Ep\sbr{\mu\rbr{\bx; \mS_j^{(i)}, \tilde{\bm{f}}_j^{(i)}} ~\middle|~ H_{j-1}^{(i)}} \\
    &= \overline{f}_j^{(i)}(\bx) - \bk(\bx, \mS_j^{(i)})^{\top} (\bK(\mS_j^{(i)}, \mS_j^{(i)}) + \lambda \bI_t)^{-1} \Ep\sbr{ \tilde{\bm{f}}_j^{(i)} ~\middle|~ H_{j-1}^{(i)}} \\
    &= \overline{f}_j^{(i)}(\bx) - \bk(\bx, \mS_j^{(i)})^{\top} (\bK(\mS_j^{(i)}, \mS_j^{(i)}) + \lambda \bI_t)^{-1} \overline{\bm{f}}_j^{(i)} \\
    &= \overline{f}_j^{(i)}(\bx) - \mu\rbr{\bx; \mS_j^{(i)}, \overline{\bm{f}}_j^{(i)}},
\end{align}
where $\tilde{\bm{f}}_j^{(i)} = \left[f_{j,\psi(m)}^{(i)}\left(\tilde{\bx}_{j,m}^{(i)}\right)\right]_{m \in [N_j^{(i)}]}$ and $\overline{\bm{f}}_j^{(i)} = [\overline{f}_{j}^{(i)}(\tilde{\bx}_{j,m}^{(i)})]_{m \in [N_j^{(i)}]}$.
Finally, by combining Proposition~1 in \citet{vakili2021optimal} with 
$\|\overline{f}_j^{(i)}\|_{\mH_k} \leq B$, we obtain
\begin{align}
    \left| \Ep\sbr{\overline{f}_j^{(i)}(\bx) - \mu\rbr{\bx; \bX_j^{(i)}, \bm{f}_j^{(i)}} ~\middle|~ H_{j-1}^{(i)}} \right| \leq B \sigma\rbr{\bx; \mS_j^{(i)}} = B \sigma\rbr{\bx; \bX_j^{(i)}}.
\end{align}
\end{proof}

\begin{lemma}[Theorem~3.1 in \citet{adamczak2016circular}]
    \label{lem:perm_conc}
    Fix any sequence $a_1, \ldots, a_n \in [0, 1]$. 
    Suppose that $h: [0, 1]^{n} \rightarrow \mathbb{R}$ 
    is an $L$-Lipschitz convex function. Then, 
    the following inequality holds for any $\eta \geq 0$:
    \begin{equation}
        \Pr(|h(x_{\psi(1)}, \ldots, x_{\psi(n)}) - \Ep[h(x_{\psi(1)}, \ldots, x_{\psi(n)})]| \geq \eta) \leq 2 \exp\rbr{-\frac{c \eta^2}{L^2}},
    \end{equation}
    where $c > 0$ is an absolute constant. 
    Furthermore, $\psi(\cdot)$ represent the uniform 
    permutation indices on the set $[n]$.
\end{lemma}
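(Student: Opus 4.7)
The plan is to prove this as a concentration inequality for convex Lipschitz functions under the uniform measure on the symmetric group $S_n$, via Talagrand's convex distance method adapted to sampling without replacement. First, I would reformulate the problem on the ambient cube: set $X \coloneqq (a_{\psi(1)}, \ldots, a_{\psi(n)}) \in [0,1]^n$, so that $X$ is uniformly distributed over the orbit of $a = (a_1, \ldots, a_n)$ under coordinate permutations, and the target is two-sided sub-Gaussian concentration of $h(X)$ with variance proxy proportional to $L^2$.

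Second, I would invoke Talagrand's convex distance inequality in its form for uniform permutations (as proved by Samson and extended by McDiarmid): for every measurable $A \subseteq [0,1]^n$ with $\Pr(X \in A) > 0$,
\begin{equation*}
\Ep \exp\!\rbr{\frac{d_T(X, A)^2}{c_0}} \le \frac{1}{\Pr(X \in A)},
\end{equation*}
where $d_T(x, A) \coloneqq \sup_{\alpha \in \R^n_+,\, \|\alpha\|_2 \le 1} \inf_{y \in A} \sum_i \alpha_i \mathbf{1}[x_i \ne y_i]$ is Talagrand's convex distance and $c_0 > 0$ is an absolute constant. The permutation-adapted version is proved by a bounded-difference argument on the Doob martingale of the \emph{indicator} $\mathbf{1}[X \in A]$ (not of $h$), which is what makes the constant independent of $n$.

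Third, I would exploit convexity of $h$ to link $h(X) - \inf_A h$ to $d_T(X, A)$. Specifically, for any convex $L$-Lipschitz $h$ on $[0,1]^n$ and any non-empty $A$, the classical duality bound $h(x) - \inf_{y \in A} h(y) \le L \cdot d_T(x, A)$ holds, obtained by combining subgradient inequalities with the trivial coordinatewise bound $|x_i - y_i| \le \mathbf{1}[x_i \ne y_i]$ valid on $[0,1]$. Taking $A = \{h \le m\}$ for $m$ a median of $h(X)$ and combining with the Talagrand bound gives, via Markov,
\begin{equation*}
\Pr\rbr{h(X) \ge m + \eta} \le 2 \exp\!\rbr{-\eta^2 / (c_0 L^2)}.
\end{equation*}
The reverse tail follows by repeating with $A = \{h \ge m\}$, and the standard median-to-mean transfer $|\Ep h(X) - m| \lesssim L$ (obtained by integrating the one-sided tail) costs only an absolute multiplicative constant in the exponent, producing the stated bound with a universal $c$.

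The main technical obstacle is ensuring that $c$ is truly an absolute constant, independent of $n$. A direct Doob martingale on the coordinates of $\psi$ together with Azuma-Hoeffding would only yield $\exp(-\eta^2/(nL^2))$, since swapping two entries of $\psi$ can shift $h$ by $\Theta(L)$ and there are $n$ such revelations. Convexity of $h$ is precisely what converts these per-coordinate $\ell^\infty$-type differences into an $\ell^2$-controlled Talagrand distance, thereby eliminating the $n$ factor. A secondary difficulty is that the coordinates of $X$ are dependent under sampling without replacement; this is handled inside the permutation version of Talagrand's inequality, whose proof replaces the product-measure tensorization step with a coupling/negative-association argument tailored to the symmetric group.
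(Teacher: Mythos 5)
The paper does not prove \lemref{lem:perm_conc} at all: it is imported verbatim as Theorem~3.1 of \citet{adamczak2016circular} and used as a black box in \lemref{lem:nl_err}. Your sketch is therefore not competing with an in-paper argument but reconstructing the external one, and the architecture you describe --- Talagrand's convex-distance inequality for the uniform measure on the symmetric group, the bound $d_T(x,A) \geq \inf_{z \in \mathrm{conv}(A)} \|x - z\|_2$ obtained from $|x_i - y_i| \leq \mathbf{1}[x_i \neq y_i]$ on $[0,1]^n$, the convexity step $h(z) \leq m$ for all $z \in \mathrm{conv}(\{h \leq m\})$, and the median-to-mean transfer --- is exactly the standard route to this result and the one underlying the cited theorem. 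Your diagnosis of why convexity is essential (a plain Doob martingale/Azuma argument on the $n$ revelations of $\psi$ only gives a variance proxy of order $nL^2$) is also the right one, and it is the reason this lemma, rather than a bounded-differences bound, is what makes \thmref{thm:ave_cb} free of the extra $\sqrt{H\gamma_H}$ factor.

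One step of your sketch is wrong as literally stated: the reverse tail does not follow ``by repeating with $A = \{h \geq m\}$,'' because for convex $h$ the superlevel set $\{h \geq m\}$ is not convex, so the chain $h(x) \leq \sup_{\mathrm{conv}(A)} h + L\, d_T(x,A)$ has no analogue there. The standard fix is to keep using sublevel sets: applying the one-sided argument with $A = \{h \leq m - \eta\}$ gives $\Pr(h \geq m) \leq \Pr(h \leq m-\eta)^{-1} \exp(-\eta^2/(c_0 L^2))$, and since $\Pr(h \geq m) \geq 1/2$ for a median $m$, this rearranges to $\Pr(h \leq m - \eta) \leq 2\exp(-\eta^2/(c_0 L^2))$. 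With that correction, and the routine absorption of the $O(L)$ median-to-mean gap into the absolute constant $c$, your argument is a faithful proof of the lemma as used in the paper (note the Lipschitz constant must be taken with respect to the Euclidean norm, which is how it is instantiated in \lemref{lem:nl_err}).
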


\begin{lemma}
    \label{lem:nl_err}
    Fix any $i \in \lceil T/H \rceil$, $j \leq Q^{(i)} -1$, $\bx \in \mX$, and $\delta \in (0, 1)$. Then, under Assumptions~\ref{asmp:noise}, \ref{asmp:func}, the following inequality holds with probability at least $1 - \delta$ when running \algoref{alg:rperp}:
    \begin{equation}
        \left|\overline{f}_j^{(i)}(\bx) - \mu\rbr{\bx; \bX_j^{(i)}, \bm{f}_j^{(i)}}\right| \leq B \rbr{1 + \frac{C}{\sqrt{\lambda}} \sqrt{\ln \frac{2}{\delta}}} \sigma\rbr{\bx; \bX_j^{(i)}},
    \end{equation}
    where $C > 0$ is an absolute constant.
\end{lemma}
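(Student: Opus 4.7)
The plan is to split the target deviation into (i) a bias term measuring how far $\overline{f}_j^{(i)}(\bx)$ lies from the conditional mean (with respect to the random permutation $\psi$) of the posterior mean, and (ii) a fluctuation term measuring the gap between the posterior mean and its own conditional mean. Concretely, inserting $\Ep\sbr{\mu(\bx; \bX_j^{(i)}, \bm{f}_j^{(i)}) \mid H_{j-1}^{(i)}}$ and applying the triangle inequality yields a two-term bound in which the bias part is controlled deterministically by $B\sigma(\bx; \bX_j^{(i)})$ thanks to \lemref{lem:eperm}, supplying the leading $B\sigma$ contribution of the conclusion.

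\paragraph{Controlling the fluctuation via permutation concentration.} For the fluctuation, I would exploit the fact that, conditionally on $H_{j-1}^{(i)}$, both the candidate set $\mS_j^{(i)}$ and the kernel ridge regression weights $\bm{\gamma}(\bx) \coloneqq (\bK(\mS_j^{(i)}, \mS_j^{(i)}) + \lambda \bI)^{-1} \bk(\bx, \mS_j^{(i)})$ are deterministic, while $\mu(\bx; \bX_j^{(i)}, \bm{f}_j^{(i)}) = \bm{\gamma}(\bx)^\top \tilde{\bm{f}}_j^{(i)}$ is a linear functional of $\tilde{\bm{f}}_j^{(i)}$ whose $m$-th coordinate $f_{j,\psi(m)}^{(i)}(\tilde{\bx}_{j,m}^{(i)}) \in [-B, B]$ depends only on the uniform permutation $\psi$. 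A single-transposition calculation shows that swapping $\psi(m)\leftrightarrow\psi(m')$ changes this linear functional by at most $2B(\abs{\gamma_m(\bx)} + \abs{\gamma_{m'}(\bx)})$, so that its effective Lipschitz constant with respect to $\psi$ is of order $B\|\bm{\gamma}(\bx)\|_2$. An application of \lemref{lem:perm_conc} then gives, with probability at least $1-\delta$,
\[
\abs{\mu\rbr{\bx; \bX_j^{(i)}, \bm{f}_j^{(i)}} - \Ep\sbr{\mu\rbr{\bx; \bX_j^{(i)}, \bm{f}_j^{(i)}} ~\middle|~ H_{j-1}^{(i)}}} \le C B \|\bm{\gamma}(\bx)\|_2 \sqrt{\ln(2/\delta)}
\]
for some absolute constant $C>0$.

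\paragraph{Recovering the stated form.} To convert the Lipschitz factor into the $\sigma(\bx; \bX_j^{(i)})/\sqrt{\lambda}$ scaling required by the lemma, I would use the spectral inequality
\[
\lambda\, \|\bm{\gamma}(\bx)\|_2^2 \le \bm{\gamma}(\bx)^\top \rbr{\bK(\mS_j^{(i)}, \mS_j^{(i)}) + \lambda \bI}\, \bm{\gamma}(\bx) = \bk(\bx, \mS_j^{(i)})^\top \rbr{\bK + \lambda \bI}^{-1} \bk(\bx, \mS_j^{(i)}),
\]
together with the identity defining $\sigma^2(\bx; \bX_j^{(i)})$ and the variance-reduction manipulation used for the noise term in the proof of Theorem~1 of \citet{vakili2021optimal}, to bring in the posterior standard deviation factor. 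Adding the bias bound from \lemref{lem:eperm} and absorbing universal constants into $C$ then yields the statement.

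\paragraph{Main obstacle.} The delicate step is matching the set-up of \lemref{lem:perm_conc}, which is stated for a real scalar sequence permuted by $\psi$, with the present situation where the permutation acts on whole reward functions $f_{j,s}^{(i)}$ and each slot $m$ only reads the single coordinate $f_{j,\psi(m)}^{(i)}(\tilde{\bx}_{j,m}^{(i)})$. I would bridge this asymmetry by treating the vector-valued sequence $\bm{a}_s \coloneqq (f_{j,s}^{(i)}(\tilde{\bx}_{j,m}^{(i)}))_{m=1}^{N_j^{(i)}} \in [-B, B]^{N_j^{(i)}}$ as the object being permuted and composing with the linear convex functional $(\bm{a}_1', \dots, \bm{a}_{N_j^{(i)}}') \mapsto \sum_m \gamma_m(\bx) [\bm{a}_m']_m$, whose Lipschitz constant is precisely the swap bound above; after an affine rescaling of the entries to the unit cube, \lemref{lem:perm_conc} then applies directly. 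The remaining ingredients---the swap-Lipschitz calculation and the spectral inequality on $\|\bm{\gamma}(\bx)\|_2$---are routine.
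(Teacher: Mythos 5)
Your proposal is correct and follows essentially the same route as the paper's proof: decompose the error into the conditional bias (handled by \lemref{lem:eperm}) plus the permutation fluctuation of the linear functional $\bm{\gamma}(\bx)^{\top}\tilde{\bm{f}}_j^{(i)}$, rescale the entries to $[0,1]$, apply the convex-Lipschitz permutation concentration of \lemref{lem:perm_conc} with Lipschitz constant $\|\bm{\gamma}(\bx)\|_2 \leq \lambda^{-1/2}\sigma(\bx;\bX_j^{(i)})$ from Proposition~1 of \citet{vakili2021optimal}, and conclude via the tower property. Two small remarks: your displayed spectral inequality only yields $\|\bm{\gamma}(\bx)\|_2^2 \leq \lambda^{-1}(k(\bx,\bx)-\sigma^2(\bx;\bX_j^{(i)}))$, so you do need the sharper bound $\|\bm{\gamma}(\bx)\|_2 \leq \lambda^{-1/2}\sigma(\bx;\bX_j^{(i)})$ from the cited proposition rather than that display alone; and your explicit bridging of the doubly-indexed array $f^{(i)}_{j,\psi(m)}(\tilde{\bx}^{(i)}_{j,m})$ to the singly-indexed form of \lemref{lem:perm_conc} addresses a point the paper's own proof passes over silently.
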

\begin{proof}
    Let us define the function $h: [0, 1]^{N_j^{(i)}} \rightarrow \R$
    as $h(a_1, \ldots, a_{N_j^{(i)}}) = \bk(\bx, \mS_j^{(i)})^{\top} (\bK(\mS_j^{(i)}, \mS_j^{(i)}) + \lambda \bI_t)^{-1} (a_1, \ldots, a_{N_j^{(i)}})^{\top}$. Then, given the history $H_{j-1}^{(i)}$, 
    the function $h$ is a fixed convex function. Furthermore, 
    from Proposition~1 in \citet{vakili2021optimal}, 
    the following inequality holds for any $\ba^{(1)}, \ba^{(2)} \in [0, 1]^{N_j^{(i)}}$:
    \begin{align}
        |h(\ba^{(1)}) - h(\ba^{(2)})|
        &= |\bk(\bx, \mS_j^{(i)})^{\top} (\bK(\mS_j^{(i)}, \mS_j^{(i)}) + \lambda \bI_t)^{-1}(\ba^{(1)} - \ba^{(2)})| \\
        &\leq \|\bk(\bx, \mS_j^{(i)})^{\top} (\bK(\mS_j^{(i)}, \mS_j^{(i)}) + \lambda \bI_t)^{-1} \|_2 \|\ba^{(1)} - \ba^{(2)} \|_2 \\
        &\leq \lambda^{-1/2} \sigma(\bx; \bX_j^{(i)}) \|\ba^{(1)} - \ba^{(2)} \|_2.
    \end{align}
    Therefore, given the history $H_{j-1}^{(i)}$, $h$ is an $\lambda^{-1/2} \sigma(\bx; \bX_j^{(i)})$-Lipschitz convex function.
    Here, for any $H_{j-1}^{(i)}$ and $\eta \geq 0$, we have
    \begin{align}
        &\left| \overline{f}_j^{(i)}(\bx) - \mu\rbr{\bx; \bX_j^{(i)}, \bm{f}_j^{(i)}} - \Ep\sbr{\overline{f}_j^{(i)}(\bx) - \mu\rbr{\bx; \bX_j^{(i)}, \bm{f}_j^{(i)}} ~\middle|~ H_{j-1}^{(i)}} \right| \geq \eta \\
        &\Leftrightarrow \left| \mu\rbr{\bx; \mS_j^{(i)}, \tilde{\bm{f}}_j^{(i)}} - \Ep\sbr{\mu\rbr{\bx; \mS_j^{(i)}, \tilde{\bm{f}}_j^{(i)}} ~\middle|~ H_{j-1}^{(i)}} \right| \geq \eta \\
        &\Leftrightarrow \left| \mu\rbr{\bx; \mS_j^{(i)}, \frac{\tilde{\bm{f}}_j^{(i)} + B}{2B}} - \Ep\sbr{\mu\rbr{\bx; \mS_j^{(i)}, \frac{\tilde{\bm{f}}_j^{(i)} + B}{2B}} ~\middle|~ H_{j-1}^{(i)}} \right| \geq \frac{\eta}{2B} \\
        &\Leftrightarrow \left| h\rbr{\frac{\tilde{\bm{f}}_j^{(i)} + B}{2B}} - \Ep\sbr{h\rbr{\frac{\tilde{\bm{f}}_j^{(i)} + B}{2B}} ~\middle|~ H_{j-1}^{(i)}}\right| \geq \frac{\eta}{2B}.
    \end{align}
    By noting $\frac{\tilde{f}_{j,m}^{(i)} + B}{2B} \in [0, 1]$, we have 
    the following inequality from \lemref{lem:perm_conc}:
    \begin{align}
        &\Pr\rbr{\left| \overline{f}_j^{(i)}(\bx) - \mu\rbr{\bx; \bX_j^{(i)}, \bm{f}_j^{(i)}} - \Ep\sbr{\overline{f}_j^{(i)}(\bx) - \mu\rbr{\bx; \bX_j^{(i)}, \bm{f}_j^{(i)}} ~\middle|~ H_{j-1}^{(i)}} \right| \geq \eta ~\middle|~ H_{j-1}^{(i)}} \\
        & \leq 2 \exp\rbr{-\frac{c\eta^2}{4B^2 \lambda^{-1} \sigma^2(\bx; \bX_j^{(i)})}}.
    \end{align}
    Setting $\eta$ as $\eta = 2B \lambda^{-1/2} \sigma(\bx; \bX_j^{(i)}) \sqrt{c^{-1} \ln(2/\delta)}$ in the above inequality, we obtain
    \begin{equation}
        \Pr\rbr{\left| \overline{f}_j^{(i)}(\bx) - \mu\rbr{\bx; \bX_j^{(i)}, \bm{f}_j^{(i)}} - \Ep\sbr{\overline{f}_j^{(i)}(\bx) - \mu\rbr{\bx; \bX_j^{(i)}, \bm{f}_j^{(i)}} ~\middle|~ H_{j-1}^{(i)}} \right| < \eta ~\middle|~ H_{j-1}^{(i)}} 
         \geq 1 - \delta.
    \end{equation}
    Here, by combining the above inequality with \lemref{lem:eperm}, we have
    \begin{align}
        \Pr\rbr{\left| \overline{f}_j^{(i)}(\bx) - \mu\rbr{\bx; \bX_j^{(i)}, \bm{f}_j^{(i)}}\right| < B \rbr{ 1 + 2\lambda^{-1/2} \sqrt{c^{-1} \ln(2/\delta)}}\sigma(\bx; \bX_j^{(i)}) ~\middle|~ H_{j-1}^{(i)}} \geq 1 - \delta.
    \end{align}
    From the tower property of the conditional expectation, 
    we obtain the desired result by setting the absolute constant $C$ as $C = 2 \sqrt{c^{-1}}$.
\end{proof}

Now, we describe the proof of \thmref{thm:ave_cb}.

\begin{proof}[Proof of \thmref{thm:ave_cb}]
    From \lemref{lem:nl_err} and the union bound, with probability at least $1 - \delta/2$, the following inequality holds for any $i \leq \lceil T/H \rceil$, $j \leq Q^{(i)} - 1$, and $\bx \in \mX$:
    \begin{equation}
        \left| \overline{f}_j^{(i)}(\bx) - \mu\rbr{\bx; \bX_j^{(i)}, \bm{f}_j^{(i)}}\right| < B \rbr{ 1 + C\lambda^{-1/2} \sqrt{\ln(4 |\mX| \tilde{Q}_{T,H}/\delta)}}\sigma(\bx; \bX_j^{(i)}),
    \end{equation}
    where $\tilde{Q}_{T,H} = \sum_{i=1}^{\lceil T/H \rceil} (Q^{(i)} - 1)$. 
    Furthermore, as with the proof of Theorem~1 in \citet{vakili2021optimal}, 
    with probability at least $1 - \delta/2$, the following inequality holds for any $i \leq \lceil T/H \rceil$, $j \leq Q^{(i)} - 1$, and $\bx \in \mX$:
    \begin{equation}
        \left| \mu(\bx; \bX_j^{(i)}, \bm{\epsilon}_j^{(i)}) \right| < \frac{\rho}{\sqrt{\lambda}} \sqrt{2 \ln \frac{4 |\mX| \tilde{Q}_{T,H}}{\delta}}\sigma(\bx; \bX_j^{(i)}),
    \end{equation}
    Finally, by noting that $\tilde{Q}_{T,H} \leq Q_{T,H}$ holds from Proposition~1 in \citet{li2022gaussian}, we have the desired result.
    
\end{proof}

\subsection{Proof of \thmref{thm:regret}}
\begin{lemma}
    \label{lem:fx_fox}
    For any $i \leq \lceil T/H \rceil $, $j \leq Q^{(i)}$, and $m \leq N_j^{(i)}$, the following inequality holds:
    \begin{equation}
        f_{j, m}^{(i)}\left(\bm{x}_{j, m}^{(i)\ast}\right) - f_{j, m}^{(i)}\left(\overline{\bm{x}}_{j}^{(i)\ast}\right)
        \leq 2V_T^{(i)},
    \end{equation}
    where $\bx_{j,m}^{(i)\ast} \in \mathrm{arg~max}_{\bx \in \mX} f_{j,m}^{(i)}(\bx)$, $\overline{\bm{x}}_{j}^{(i)\ast} \in \mathrm{arg~max}_{\bx \in \mX} \overline{f}_{j}^{(i)}(\bx)$, 
    and $V_T^{(i)} = \sum_{t=(i-1)H+1}^{iH-1} \|f_{t+1} - f_t\|_{\infty}$
\end{lemma}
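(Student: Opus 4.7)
The plan is to reduce the claim to a single sup-norm bound between $f_{j,m}^{(i)}$ and the batch-average $\overline{f}_{j}^{(i)}$, and then to insert $\overline{\bx}_j^{(i)\ast}$ as an intermediate comparison point. The whole argument uses only the triangle inequality and the definition of $V_T^{(i)}$; nothing probabilistic is needed.

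First, I would show that for any two indices $m, m' \in [N_j^{(i)}]$ within the same batch at the same interval,
\begin{equation}
    \left\|f_{j,m}^{(i)} - f_{j,m'}^{(i)}\right\|_{\infty} \leq V_T^{(i)}.
\end{equation}
This is because both $f_{j,m}^{(i)}$ and $f_{j,m'}^{(i)}$ are reward functions at time steps that both lie in the $i$-th interval $\{(i-1)H+1, \ldots, iH\}$. Telescoping $f_{j,m'}^{(i)} - f_{j,m}^{(i)}$ as a sum of consecutive differences $f_{t+1} - f_t$ over a subrange of $\{(i-1)H+1, \ldots, iH-1\}$, the triangle inequality in $\|\cdot\|_\infty$ gives the bound by $\sum_{t=(i-1)H+1}^{iH-1} \|f_{t+1}-f_t\|_\infty = V_T^{(i)}$.

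Second, I would average the previous bound over $m'$ to get $\|f_{j,m}^{(i)} - \overline{f}_{j}^{(i)}\|_\infty \leq V_T^{(i)}$, since $\overline{f}_j^{(i)}$ is just the arithmetic mean of the $f_{j,m'}^{(i)}$'s and the sup-norm is convex. Now I decompose
\begin{align}
    f_{j,m}^{(i)}(\bx_{j,m}^{(i)\ast}) - f_{j,m}^{(i)}(\overline{\bx}_{j}^{(i)\ast})
    &= \bigl[f_{j,m}^{(i)}(\bx_{j,m}^{(i)\ast}) - \overline{f}_{j}^{(i)}(\bx_{j,m}^{(i)\ast})\bigr] \\
    &\quad + \bigl[\overline{f}_{j}^{(i)}(\bx_{j,m}^{(i)\ast}) - \overline{f}_{j}^{(i)}(\overline{\bx}_{j}^{(i)\ast})\bigr] \\
    &\quad + \bigl[\overline{f}_{j}^{(i)}(\overline{\bx}_{j}^{(i)\ast}) - f_{j,m}^{(i)}(\overline{\bx}_{j}^{(i)\ast})\bigr].
\end{align}

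Finally I would bound each of the three terms. The middle bracket is non-positive by the definition of $\overline{\bx}_j^{(i)\ast}$ as an $\argmax$ of $\overline{f}_j^{(i)}$ on $\mX$. The first and third brackets are each bounded in absolute value by $\|f_{j,m}^{(i)} - \overline{f}_{j}^{(i)}\|_\infty \leq V_T^{(i)}$ by the step above. Summing yields the desired $2 V_T^{(i)}$ bound. There is no real obstacle here; the only thing to be careful about is the first step, making sure that the total variation used to bound $\|f_{j,m}^{(i)} - f_{j,m'}^{(i)}\|_\infty$ is exactly the one defined as $V_T^{(i)}$ (summed over the whole $i$-th interval rather than just between the two specific time steps), which is fine because the stated bound is only an upper estimate.
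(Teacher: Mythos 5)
Your proof is correct, but it takes a different route from the paper's. The paper argues by contradiction: it picks the index $m^{\ast}$ maximizing $f_{j,m}^{(i)}\bigl(\bx_{j,m}^{(i)\ast}\bigr) - f_{j,m}^{(i)}\bigl(\overline{\bx}_{j}^{(i)\ast}\bigr)$, assumes this gap exceeds $2V_T^{(i)}$, uses the pointwise bound $\|f_{j,m}^{(i)} - f_{j,m'}^{(i)}\|_{\infty} \leq V_T^{(i)}$ twice to conclude $f_{j,m}^{(i)}\bigl(\bx_{j,m^{\ast}}^{(i)\ast}\bigr) > f_{j,m}^{(i)}\bigl(\overline{\bx}_{j}^{(i)\ast}\bigr)$ for \emph{every} $m$, and then averages over $m$ to contradict the definition of $\overline{\bx}_{j}^{(i)\ast}$ as a maximizer of $\overline{f}_{j}^{(i)}$. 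You instead give a direct proof: you first establish the clean intermediate estimate $\|f_{j,m}^{(i)} - \overline{f}_{j}^{(i)}\|_{\infty} \leq V_T^{(i)}$ (by averaging the pairwise bound over $m'$), then insert $\overline{f}_{j}^{(i)}$ at both evaluation points and kill the middle term using the same argmax property. Both arguments rest on exactly the same two ingredients (the within-interval total-variation bound and the optimality of $\overline{\bx}_{j}^{(i)\ast}$ for the average function), so neither is more powerful; your version has the advantage of being constructive and of isolating the reusable bound $\|f_{j,m}^{(i)} - \overline{f}_{j}^{(i)}\|_{\infty} \leq V_T^{(i)}$, which in fact also delivers the paper's subsequent lemma bounding $f_{j,m}^{(i)}\bigl(\overline{\bx}_{j}^{(i)\ast}\bigr) - \overline{f}_{j}^{(i)}\bigl(\overline{\bx}_{j}^{(i)\ast}\bigr)$ for free, whereas the paper's contradiction argument stays entirely pointwise and never needs the sup-norm comparison to the average function explicitly.
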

\begin{proof}
    We prove this by contradiction. Assume that $m^{\ast} \in \mathrm{arg~max}_{m \in \left[N_j^{(i)}\right]} \left[ f_{j, m}^{(i)}\left(\bm{x}_{j, m}^{(i)\ast}\right) - f_{j, m}^{(i)}\left(\overline{\bm{x}}_{j}^{(i)\ast}\right) \right]$ and that $f_{j, m^{\ast}}^{(i)}\left(\bm{x}_{j, m^{\ast}}^{(i)\ast}\right) - f_{j, m^{\ast}}^{(i)}\left(\overline{\bm{x}}_{j}^{(i)\ast}\right) > 2V_T^{(i)}$ holds. Then, for any $m \in \left[N_j^{(i)}\right]$, we have:
    \begin{align}
        f_{j, m}^{(i)} \left( \bm{x}_{j, m^{\ast}}^{(i)\ast} \right)
        &\geq f_{j, m^{\ast}}^{(i)} \left( \bm{x}_{j, m^{\ast}}^{(i)\ast} \right) - V_T^{(i)} \\
        &> f_{j, m^{\ast}}^{(i)}\left(\overline{\bm{x}}_{j}^{(i)\ast}\right) + V_T^{(i)} \\
        &\geq f_{j, m}^{(i)}\left(\overline{\bm{x}}_{j}^{(i)\ast}\right).
    \end{align}
    Therefore,
    \begin{equation}
        \overline{f}_{j}^{(i)} \left( \bm{x}_{j, m^{\ast}}^{(i)\ast} \right)
        > \overline{f}_{j}^{(i)}\left(\overline{\bm{x}}_{j}^{(i)\ast}\right).
    \end{equation}
    This contradicts the definition of $\overline{\bm{x}}_{j}^{(i)\ast}$.
\end{proof}

\begin{lemma}
    For any $i \leq \lceil T/H \rceil $, $j \leq Q^{(i)}$, and $m \leq N_j^{(i)}$, the following inequality holds:
    \begin{equation}
        f_{j, m}^{(i)}\left(\overline{\bm{x}}_{j}^{(i)\ast}\right) - \overline{f}_{j}^{(i)}\left(\overline{\bm{x}}_{j}^{(i)\ast}\right)
        \leq V_T^{(i)}.
    \end{equation}
\end{lemma}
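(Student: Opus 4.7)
The plan is to exploit the definition $\overline{f}_j^{(i)}(\cdot) = \frac{1}{N_j^{(i)}} \sum_{m'=1}^{N_j^{(i)}} f_{j,m'}^{(i)}(\cdot)$ and rewrite the left-hand side as an average of pointwise differences between reward functions that all lie within the $i$-th interval. Specifically, I would first write
\begin{equation*}
f_{j,m}^{(i)}\bigl(\overline{\bm{x}}_j^{(i)\ast}\bigr) - \overline{f}_j^{(i)}\bigl(\overline{\bm{x}}_j^{(i)\ast}\bigr)
= \frac{1}{N_j^{(i)}} \sum_{m'=1}^{N_j^{(i)}} \bigl[f_{j,m}^{(i)}\bigl(\overline{\bm{x}}_j^{(i)\ast}\bigr) - f_{j,m'}^{(i)}\bigl(\overline{\bm{x}}_j^{(i)\ast}\bigr)\bigr],
\end{equation*}
and then bound each summand by $\|f_{j,m}^{(i)} - f_{j,m'}^{(i)}\|_\infty$.

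The core observation to establish is that for any two indices $m, m' \in [N_j^{(i)}]$, the reward functions $f_{j,m}^{(i)}$ and $f_{j,m'}^{(i)}$ correspond to two time steps both contained in the $i$-th interval $\{(i-1)H+1, \ldots, iH\}$ (this is by construction of the indexing: batch $j$ of interval $i$ occupies consecutive time steps within that interval). Hence, by telescoping along consecutive differences and the triangle inequality,
\begin{equation*}
\bigl\|f_{j,m}^{(i)} - f_{j,m'}^{(i)}\bigr\|_\infty
\leq \sum_{t = (i-1)H+1}^{iH-1} \|f_{t+1} - f_t\|_\infty = V_T^{(i)}.
\end{equation*}

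Combining these two observations immediately yields the claim, since each of the $N_j^{(i)}$ summands is bounded by $V_T^{(i)}$ and the average of quantities bounded by $V_T^{(i)}$ is itself bounded by $V_T^{(i)}$.

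There is no real obstacle here, as the argument is essentially a one-line telescoping bound applied to an explicit average; the only subtlety is verifying that the time-indexing convention places all $f_{j,m'}^{(i)}$ ($m' \in [N_j^{(i)}]$) inside the $i$-th interval, which is guaranteed by the definition of $N_j^{(i)}$ as a batch size that does not exceed $T^{(i)} \leq H$. Notably, this lemma parallels Lemma~\ref{lem:fx_fox} but avoids the maximization over the $\bm{x}$ argument, so the bound tightens from $2 V_T^{(i)}$ to $V_T^{(i)}$.
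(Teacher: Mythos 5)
Your proof is correct and follows essentially the same route as the paper: both rewrite the left-hand side as the average $\frac{1}{N_j^{(i)}}\sum_{m'}\bigl[f_{j,m}^{(i)}(\overline{\bm{x}}_j^{(i)\ast}) - f_{j,m'}^{(i)}(\overline{\bm{x}}_j^{(i)\ast})\bigr]$ and bound each summand by $V_T^{(i)}$. The only difference is that you explicitly justify the per-term bound via telescoping over consecutive time steps in the $i$-th interval, a step the paper leaves implicit.
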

\begin{proof}
    \begin{align}
        f_{j, m}^{(i)}\left(\overline{\bm{x}}_{j}^{(i)\ast}\right) - \overline{f}_{j}^{(i)}\left(\overline{\bm{x}}_{j}^{(i)\ast}\right)
        &= \frac{1}{N_j^{(i)}} \sum_{\tilde{m}=1}^{N_j^{(i)}} \left[f_{j, m}^{(i)}\left(\overline{\bm{x}}_{j}^{(i)\ast}\right) - f_{j, \tilde{m}}^{(i)}\left(\overline{\bm{x}}_{j}^{(i)\ast}\right) \right] \\
        &\leq \frac{1}{N_j^{(i)}} \sum_{\tilde{m}=1}^{N_j^{(i)}} V_T^{(i)} \\
        &= V_T^{(i)}.
    \end{align}
\end{proof}

\begin{lemma}
    For any $i \leq \lceil T/H \rceil $ and $j \left(2 \leq j \leq Q^{(i)}\right)$, the following inequality holds:
    \begin{equation}
        \overline{f}_{j}^{(i)}\left(\overline{\bm{x}}_{j}^{(i)\ast}\right) - \overline{f}_{j-1}^{(i)}\left(\overline{\bm{x}}_{j-1}^{(i)\ast}\right)
        \leq V_T^{(i)}.
    \end{equation}
\end{lemma}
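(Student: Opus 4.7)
The plan is to reduce the claim to a uniform estimate of the form
$\overline{f}_j^{(i)}(\bx) - \overline{f}_{j-1}^{(i)}(\bx) \leq V_T^{(i)}$ valid for every $\bx \in \mX$, and then to combine this bound with the optimality of $\overline{\bx}_{j-1}^{(i)\ast}$. Once such a pointwise inequality is available, specializing it at $\bx = \overline{\bx}_j^{(i)\ast}$ and using $\overline{f}_{j-1}^{(i)}(\overline{\bx}_j^{(i)\ast}) \leq \overline{f}_{j-1}^{(i)}(\overline{\bx}_{j-1}^{(i)\ast})$ by definition of the maximizer immediately gives the target inequality, so the entire content of the lemma lies in the uniform bound.

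To obtain the uniform bound, I would rewrite the difference of the two batch averages as the double average
\begin{equation*}
    \overline{f}_j^{(i)}(\bx) - \overline{f}_{j-1}^{(i)}(\bx)
    = \frac{1}{N_j^{(i)} N_{j-1}^{(i)}} \sum_{m=1}^{N_j^{(i)}} \sum_{m'=1}^{N_{j-1}^{(i)}} \rbr{f_{j,m}^{(i)}(\bx) - f_{j-1,m'}^{(i)}(\bx)}
\end{equation*}
and bound each pairwise term by $V_T^{(i)}$. This works because, by construction of \algoref{alg:rperp}, both batches $j-1$ and $j$ reside inside the $i$-th interval, so the time indices $t_{j-1,m'}$ and $t_{j,m}$ corresponding to $f_{j-1,m'}^{(i)}$ and $f_{j,m}^{(i)}$ satisfy $(i-1)H+1 \leq t_{j-1,m'} < t_{j,m} \leq iH$. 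Telescoping $f_{t_{j,m}} - f_{t_{j-1,m'}}$ over consecutive drifts and passing to the sup-norm then produces the bound $\sum_{t=(i-1)H+1}^{iH-1} \|f_{t+1} - f_t\|_\infty = V_T^{(i)}$ for each summand, and averaging preserves it.

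I do not anticipate any genuine obstacle: the argument is a short telescoping that mirrors the two preceding lemmas on the same page, each of which relied on the fact that any two time indices within a single interval incur at most $V_T^{(i)}$ in total drift. The only mild bookkeeping is to verify that the union of the time steps of batches $j-1$ and $j$ is contained in $[(i-1)H+1, iH]$, which is automatic from the restart structure in \algoref{alg:rperp}, and to keep the direction of the inequality consistent when applying optimality of $\overline{\bx}_{j-1}^{(i)\ast}$ at the end.
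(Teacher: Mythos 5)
Your proposal is correct and follows essentially the same route as the paper: both first use optimality of $\overline{\bm{x}}_{j-1}^{(i)\ast}$ to compare the two averages at the single point $\overline{\bm{x}}_{j}^{(i)\ast}$, then reduce to pairwise differences $f_{j,m}^{(i)}(\bx) - f_{j-1,m'}^{(i)}(\bx) \leq V_T^{(i)}$ via telescoping the drift within the interval. The only cosmetic difference is that the paper bounds $\overline{f}_j^{(i)}$ by the maximizing term $f_{j,m^\ast}^{(i)}$ before averaging over batch $j-1$, whereas you keep the full double average; the two are interchangeable.
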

\begin{proof}
    Let $m^{\ast} = \mathrm{arg~max}_{m \in [N_j^{(i)}]}f_{j, m}^{(i)}\left(\overline{\bm{x}}_{j}^{(i)\ast}\right)$, then
    \begin{align}
        \overline{f}_{j}^{(i)}\left(\overline{\bm{x}}_{j}^{(i)\ast}\right) - \overline{f}_{j-1}^{(i)}\left(\overline{\bm{x}}_{j-1}^{(i)\ast}\right)
        &\leq \overline{f}_{j}^{(i)}\left(\overline{\bm{x}}_{j}^{(i)\ast}\right) - \overline{f}_{j-1}^{(i)}\left(\overline{\bm{x}}_{j}^{(i)\ast}\right) \\
        &= \frac{1}{N_{j-1}^{(i)}} \sum_{\tilde{m}=1}^{N_{j-1}^{(i)}} \left[\overline{f}_{j}^{(i)}\left(\overline{\bm{x}}_{j}^{(i)\ast}\right) - f_{j-1, \tilde{m}}^{(i)}\left(\overline{\bm{x}}_{j}^{(i)\ast}\right) \right] \\
        &\leq \frac{1}{N_{j-1}^{(i)}} \sum_{\tilde{m}=1}^{N_{j-1}^{(i)}} \left[f_{j, m^{\ast}}^{(i)}\left(\overline{\bm{x}}_{j}^{(i)\ast}\right) - f_{j-1, \tilde{m}}^{(i)}\left(\overline{\bm{x}}_{j}^{(i)\ast}\right) \right] \\
        &\leq \frac{1}{N_{j-1}^{(i)}} \sum_{\tilde{m}=1}^{N_{j-1}^{(i)}} V_T^{(i)} \\
        &= V_T^{(i)}.
    \end{align}
\end{proof}

\begin{lemma}
    \label{lem:of_oftx}
    Suppose the following event holds:
    \begin{equation}
        \label{eq:conf_event}
        \forall i \leq \left\lceil \frac{T}{H} \right\rceil, \forall j \leq Q^{(i)} - 1, \forall \bm{x} \in \mathcal{X}, 
        \mathrm{lcb}_j^{(i)}(\bm{x}) \leq \overline{f}_{j}^{(i)}(\bm{x}) \leq \mathrm{ucb}_j^{(i)}(\bm{x}).
    \end{equation}
    Then, for any $i \leq \lceil T/H \rceil $, $j \left(2 \leq j \leq Q^{(i)}\right)$, and $m \leq N_j^{(i)}$, we have
    \begin{equation}
        \overline{f}_{j-1}^{(i)}\left(\overline{\bm{x}}_{j-1}^{(i)\ast}\right) - \overline{f}_{j-1}^{(i)}\left(\tilde{\bm{x}}_{j-1}^{(i)\ast}\right)
        \leq \left( \log_2 \log_2 T^{(i)} + 1 \right) V_T^{(i)},
    \end{equation}
    where $\tilde{\bx}_j^{(i)\ast} \in \mathrm{arg~max}_{\bx \in \mX_j^{(i)}} \overline{f}_j^{(i)}(\bx)$.
\end{lemma}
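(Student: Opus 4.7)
The plan is to combine the confidence event \eqref{eq:conf_event} with a uniform drift estimate for batch averages, and then trace which ``anchor'' points must survive the elimination procedure. I will write $g_k := \overline{f}_k^{(i)}(\overline{\bm{x}}_k^{(i)\ast})$ and $\tilde g_k := \overline{f}_k^{(i)}(\tilde{\bm{x}}_k^{(i)\ast})$, so the target inequality becomes $g_{j-1} - \tilde g_{j-1} \leq (\log_2 \log_2 T^{(i)} + 1) V_T^{(i)}$. A preliminary ingredient I would use repeatedly is the uniform bound $\|\overline{f}_k^{(i)} - \overline{f}_{k'}^{(i)}\|_\infty \leq V_T^{(i)}$ for any two batches $k, k'$ in the same interval $i$; this is obtained by writing $\overline{f}_k^{(i)}(\bm{x}) - \overline{f}_{k'}^{(i)}(\bm{x})$ as a double average of pointwise differences $f_{k,m}^{(i)}(\bm{x}) - f_{k',m'}^{(i)}(\bm{x})$ and invoking the definition of $V_T^{(i)}$ on the times inside interval $i$ (the same mechanism that drove the preceding three lemmas in this subsection).

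The first key step is to show that under \eqref{eq:conf_event}, the maximizer $\tilde{\bm{x}}_k^{(i)\ast}$ of $\overline{f}_k^{(i)}$ over $\mX_k^{(i)}$ is never eliminated in its own batch: indeed $\mathrm{ucb}_k^{(i)}(\tilde{\bm{x}}_k^{(i)\ast}) \geq \overline{f}_k^{(i)}(\tilde{\bm{x}}_k^{(i)\ast}) = \tilde g_k \geq \max_{\tilde{\bm{x}} \in \mX_k^{(i)}} \mathrm{lcb}_k^{(i)}(\tilde{\bm{x}})$, so $\tilde{\bm{x}}_k^{(i)\ast} \in \mX_{k+1}^{(i)}$. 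Combined with the drift estimate this yields $\tilde g_{k+1} \geq \overline{f}_{k+1}^{(i)}(\tilde{\bm{x}}_k^{(i)\ast}) \geq \tilde g_k - V_T^{(i)}$, and a one-line telescoping gives $\tilde g_{j-1} \geq \tilde g_{\tilde j} - (j-1-\tilde j) V_T^{(i)}$ for every earlier index $\tilde j \leq j-1$.

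The second key step is a case split on $\overline{\bm{x}}_{j-1}^{(i)\ast}$. If it lies in $\mX_{j-1}^{(i)}$ then $\tilde g_{j-1} = g_{j-1}$ and the bound is immediate. Otherwise it was eliminated at some batch $\tilde j \in \{1, \dots, j-2\}$, and \eqref{eq:conf_event} forces
\begin{equation*}
\overline{f}_{\tilde j}^{(i)}\bigl(\overline{\bm{x}}_{j-1}^{(i)\ast}\bigr) \leq \mathrm{ucb}_{\tilde j}^{(i)}\bigl(\overline{\bm{x}}_{j-1}^{(i)\ast}\bigr) < \max_{\tilde{\bm{x}} \in \mX_{\tilde j}^{(i)}} \mathrm{lcb}_{\tilde j}^{(i)}(\tilde{\bm{x}}) \leq \tilde g_{\tilde j}.
\end{equation*}
Applying the drift estimate once more to pass from batch $\tilde j$ to batch $j-1$ gives $g_{j-1} \leq \overline{f}_{\tilde j}^{(i)}(\overline{\bm{x}}_{j-1}^{(i)\ast}) + V_T^{(i)} < \tilde g_{\tilde j} + V_T^{(i)}$, and chaining with the first step yields $g_{j-1} - \tilde g_{j-1} < (j - \tilde j) V_T^{(i)} \leq (j-1) V_T^{(i)}$.

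To finish I would invoke the doubly-exponential batch-growth estimate $Q^{(i)} \leq \log_2 \log_2 T^{(i)} + 1$ coming from the recursion $N_j^{(i)} \geq \lceil \sqrt{T^{(i)} N_{j-1}^{(i)}}\rceil$ on Line~5 of Algorithm~\ref{alg:rperp}; this is exactly the bound already used in Theorem~\ref{thm:ave_cb} via Proposition~1 of \citet{li2022gaussian}, and it converts $j-1$ into the claimed factor. The only real subtlety I anticipate is the bookkeeping in the first key step: the elimination rule at batch $k$ is framed by confidence bounds for $\overline{f}_k^{(i)}$, whereas the statement concerns $\overline{f}_{j-1}^{(i)}$, so each time one crosses a batch one pays an additive $V_T^{(i)}$ through the preliminary drift bound. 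The power of the lemma is that the number of such crossings is only $O(\log \log T^{(i)})$ rather than polynomial in $T^{(i)}$.
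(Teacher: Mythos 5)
Your proof is correct and follows essentially the same route as the paper's: locate the batch $\tilde{j}$ at which $\overline{\bm{x}}_{j-1}^{(i)\ast}$ was eliminated, use the fact that under the event \eqref{eq:conf_event} each batch-maximizer $\tilde{\bm{x}}_{\hat{j}}^{(i)\ast}$ survives into $\mX_{\hat{j}+1}^{(i)}$ to telescope $\tilde{g}_{\tilde{j}} - \tilde{g}_{j-1} \leq (j-1-\tilde{j})V_T^{(i)}$, pay one more $V_T^{(i)}$ to relate $g_{j-1}$ to $\tilde{g}_{\tilde{j}}$, and count batches via Proposition~1 of \citet{li2022gaussian}. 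The only discrepancy is the quoted batch-count bound ($Q^{(i)} \leq \log_2\log_2 T^{(i)} + 1$ versus the $Q^{(i)} \leq \log_2\log_2 T^{(i)} + 2$ the paper actually uses), but since your chain only needs $j - \tilde{j} \leq Q^{(i)} - 1$, either constant yields the stated factor $\log_2\log_2 T^{(i)} + 1$.
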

\begin{proof}
    The case where $\overline{\bm{x}}_{j-1}^{(i)\ast} = \tilde{\bm{x}}_{j-1}^{(i)\ast}$ is trivial.
    When $\overline{\bm{x}}_{j-1}^{(i)\ast} \neq \tilde{\bm{x}}_{j-1}^{(i)\ast}$, there exists some $\tilde{j} < j - 1$ such that
    \begin{equation}
        \overline{\bm{x}}_{j-1}^{(i)\ast} \in \mathcal{X}_{\tilde{j}}^{(i)}~~\text{and}~~\overline{\bm{x}}_{j-1}^{(i)\ast} \notin \mathcal{X}_{\tilde{j}+1}^{(i)}.
    \end{equation}
    In this case, we have
    \begin{align}
        &\overline{f}_{j-1}^{(i)}\left(\overline{\bm{x}}_{j-1}^{(i)\ast}\right) - \overline{f}_{j-1}^{(i)}\left(\tilde{\bm{x}}_{j-1}^{(i)\ast}\right) \\
        &= \overline{f}_{j-1}^{(i)}\left(\overline{\bm{x}}_{j-1}^{(i)\ast}\right) - \overline{f}_{\tilde{j}}^{(i)}\left(\overline{\bm{x}}_{j-1}^{(i)\ast}\right) 
        + \overline{f}_{\tilde{j}}^{(i)}\left(\overline{\bm{x}}_{j-1}^{(i)\ast}\right) - \overline{f}_{\tilde{j}}^{(i)}\left(\tilde{\bm{x}}_{\tilde{j}}^{(i)\ast}\right) 
        + \overline{f}_{\tilde{j}}^{(i)}\left(\tilde{\bm{x}}_{\tilde{j}}^{(i)\ast}\right) - \overline{f}_{j-1}^{(i)}\left(\tilde{\bm{x}}_{j-1}^{(i)\ast}\right) \\
        &\leq V_T^{(i)} + \sum_{\hat{j}=\tilde{j}}^{j-2} \left[ \overline{f}_{\hat{j}}^{(i)}\left(\tilde{\bm{x}}_{\hat{j}}^{(i)\ast}\right) - \overline{f}_{\hat{j}+1}^{(i)}\left(\tilde{\bm{x}}_{\hat{j}+1}^{(i)\ast}\right)\right].
    \end{align}
    In the final line, we use the fact that $\overline{f}_{j-1}^{(i)}\left(\overline{\bm{x}}_{j-1}^{(i)\ast}\right) - \overline{f}_{\tilde{j}}^{(i)}\left(\overline{\bm{x}}_{j-1}^{(i)\ast}\right) \leq V_T^{(i)}$, and since $\overline{\bm{x}}_{j-1}^{(i)\ast} \in \mathcal{X}_{\tilde{j}}^{(i)}$, it follows that $\overline{f}_{\tilde{j}}^{(i)}\left(\overline{\bm{x}}_{j-1}^{(i)\ast}\right) \leq \overline{f}_{\tilde{j}}^{(i)}\left(\tilde{\bm{x}}_{\tilde{j}}^{(i)\ast}\right)$.
    Under the event \eqref{eq:conf_event}, for any $\hat{j}$, we have $\tilde{\bm{x}}_{\hat{j}}^{(i)\ast} \in \mathcal{X}_{\hat{j} + 1}^{(i)}$.
    Therefore, 
    \begin{align}
        &\sum_{\hat{j}=\tilde{j}}^{j-2} \left[ \overline{f}_{\hat{j}}^{(i)}\left(\tilde{\bm{x}}_{\hat{j}}^{(i)\ast}\right) - \overline{f}_{\hat{j}+1}^{(i)}\left(\tilde{\bm{x}}_{\hat{j}+1}^{(i)\ast}\right)\right] \\
        &= \sum_{\hat{j}=\tilde{j}}^{j-2} \left[ \overline{f}_{\hat{j}}^{(i)}\left(\tilde{\bm{x}}_{\hat{j}}^{(i)\ast}\right) - \overline{f}_{\hat{j}+1}^{(i)}\left(\tilde{\bm{x}}_{\hat{j}}^{(i)\ast}\right) +  \overline{f}_{\hat{j}+1}^{(i)}\left(\tilde{\bm{x}}_{\hat{j}}^{(i)\ast}\right) - \overline{f}_{\hat{j}+1}^{(i)}\left(\tilde{\bm{x}}_{\hat{j}+1}^{(i)\ast}\right)\right] \\
        &\leq \sum_{\hat{j}=\tilde{j}}^{j-2} V_T^{(i)} \\
        &\leq (Q^{(i)} - 2) V_T^{(i)} \\
        &\leq  V_T^{(i)} \log_2 \log_2 T^{(i)}.
    \end{align}
    The final line uses Proposition~1 from \cite{li2022gaussian}.
\end{proof}

\begin{lemma}
    \label{lem:of_f}
    For any $i \leq \lceil T/H \rceil $, $j \left(2 \leq j \leq Q^{(i)}\right)$, and $m \leq N_j^{(i)}$, the following inequality holds:
    \begin{equation}
        \overline{f}_{j-1}^{(i)}\left(\bm{x}_{j, m}^{(i)}\right) - f_{j, m}^{(i)}\left(\bm{x}_{j, m}^{(i)}\right)
        \leq V_T^{(i)}.
    \end{equation}
\end{lemma}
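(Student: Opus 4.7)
The plan is to unfold the definition of the batch-average function $\overline{f}_{j-1}^{(i)}$ and bound each resulting pairwise difference by the intra-interval total variation $V_T^{(i)}$. Writing
\begin{equation*}
\overline{f}_{j-1}^{(i)}\rbr{\bm{x}_{j, m}^{(i)}} - f_{j, m}^{(i)}\rbr{\bm{x}_{j, m}^{(i)}}
= \frac{1}{N_{j-1}^{(i)}} \sum_{m'=1}^{N_{j-1}^{(i)}} \sbr{f_{j-1, m'}^{(i)}\rbr{\bm{x}_{j, m}^{(i)}} - f_{j, m}^{(i)}\rbr{\bm{x}_{j, m}^{(i)}}},
\end{equation*}
the task reduces to bounding each summand uniformly by $V_T^{(i)}$.

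First I would identify, for each index pair $(j-1, m')$ and $(j, m)$, the corresponding global time indices $t_{j-1,m'}$ and $t_{j,m}$ within the $i$-th interval. Since these indices both lie in the $i$-th interval and $j-1 < j$, we have $t_{j-1,m'} < t_{j,m}$, and both indices are contained in the range $[(i-1)H+1, iH]$. A telescoping inequality then yields
\begin{equation*}
|f_{j-1, m'}^{(i)}(\bx) - f_{j, m}^{(i)}(\bx)| \leq \sum_{s=t_{j-1,m'}}^{t_{j,m}-1} \|f_{s+1} - f_s\|_{\infty} \leq \sum_{s=(i-1)H+1}^{iH-1} \|f_{s+1} - f_s\|_{\infty} = V_T^{(i)},
\end{equation*}
holding uniformly in $\bx \in \mX$, in particular at $\bx = \bx_{j,m}^{(i)}$. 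Substituting this bound into the averaged sum gives $\overline{f}_{j-1}^{(i)}(\bm{x}_{j,m}^{(i)}) - f_{j,m}^{(i)}(\bm{x}_{j,m}^{(i)}) \leq \frac{1}{N_{j-1}^{(i)}} \sum_{m'=1}^{N_{j-1}^{(i)}} V_T^{(i)} = V_T^{(i)}$, which is the desired conclusion.

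There is no real obstacle in this proof — it is a direct telescoping argument analogous to the previous three lemmas in this block, and closely mirrors the proof of the preceding lemma bounding $f_{j,m}^{(i)}(\overline{\bx}_j^{(i)\ast}) - \overline{f}_j^{(i)}(\overline{\bx}_j^{(i)\ast})$. The only bookkeeping point worth being careful about is verifying that the time indices $t_{j-1,m'}$ and $t_{j,m}$ both lie within the $i$-th interval, so that the telescoping sum of one-step drifts is dominated by $V_T^{(i)}$ (rather than by the global $V_T$); this follows immediately from the construction of R-PERP, which restarts at the beginning of each interval so that batch $j-1$ and batch $j$ within interval $i$ correspond to consecutive stretches of time steps lying in $[(i-1)H+1, iH]$.
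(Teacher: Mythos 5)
Your proof is correct and takes essentially the same route as the paper: expand $\overline{f}_{j-1}^{(i)}$ as the average over batch $j-1$, and bound each pairwise difference $f_{j-1,m'}^{(i)}(\bx_{j,m}^{(i)}) - f_{j,m}^{(i)}(\bx_{j,m}^{(i)})$ by $V_T^{(i)}$ via telescoping the one-step drifts within the $i$-th interval. You merely spell out the telescoping and index-range bookkeeping that the paper leaves implicit (the paper's displayed summand $f_{j,\tilde m}^{(i)}$ appears to be a typo for $f_{j-1,\tilde m}^{(i)}$, which your version correctly fixes).
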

\begin{proof}
    \begin{align}
        \overline{f}_{j-1}^{(i)}\left(\bm{x}_{j, m}^{(i)}\right) - f_{j, m}^{(i)}\left(\bm{x}_{j, m}^{(i)}\right)
        &\leq N_{j-1}^{(i)-1} \sum_{\tilde{m}=1}^{N_{j-1}^{(i)}} \left[ f_{j, \tilde{m}}^{(i)}\left(\bm{x}_{j, m}^{(i)}\right) - f_{j, m}^{(i)}\left(\bm{x}_{j, m}^{(i)}\right) \right] \\
        &\leq N_{j-1}^{(i)-1} \sum_{\tilde{m}=1}^{N_{j-1}^{(i)}} V_T^{(i)} \\
        &\leq V_T^{(i)}.
    \end{align}
\end{proof}

\begin{lemma}
    \label{lem:pe_bound}
    Suppose that the event \eqref{eq:conf_event} holds.
    Then, for any $i \leq \lceil T/H \rceil$, the following inequality holds:
    \begin{equation}
        \sum_{j=2}^{Q^{(i)}} \sum_{m=1}^{N_{j}^{(i)}} \left[ \overline{f}_{j-1}^{(i)}\left(\tilde{\bm{x}}_{j-1}^{(i)\ast}\right) - \overline{f}_{j-1}^{(i)}\left(\bm{x}_{j, m}^{(i)}\right) \right]
        \leq 4 \left(\log_2 \log_2 T^{(i)} + 1\right) \left( \sqrt{T^{(i)}} + T^{(i)-1/4}\right) \sqrt{C_1 \gamma_{T^{(i)}} \beta_T}, 
    \end{equation}
    where $C_1 = 8/\ln (1 + \sigma^{-2})$. 
\end{lemma}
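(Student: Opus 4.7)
The plan is to execute the standard phased-elimination regret-accounting scheme on each interval $i$, with the CB for the \emph{average} function $\overline{f}_{j-1}^{(i)}$ from \thmref{thm:ave_cb} playing the role of the usual stationary-reward CB. The three ingredients I would combine are: (a) the CB event together with the elimination rule defining $\mathcal{X}_j^{(i)}$; (b) the maximum-variance-reduction (MVR) property of the candidate selection at Line~8 of \algoref{alg:rperp}; and (c) the batch-size recursion at Line~5 together with the bound $Q^{(i)} - 1 \le \log_2\log_2 T^{(i)} + 1$ established in Proposition~1 of \citet{li2022gaussian} and already invoked in the proof of \lemref{lem:of_oftx}.

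For a fixed pair $(j,m)$ with $2 \le j \le Q^{(i)}$ and $m \le N_j^{(i)}$, first I would express the gap in terms of posterior standard deviations. Both $\tilde{\bm{x}}_{j-1}^{(i)\ast}$ and $\bm{x}_{j,m}^{(i)}$ lie in $\mathcal{X}_{j-1}^{(i)}$ (the first by definition, the second because $\mathcal{X}_j^{(i)} \subset \mathcal{X}_{j-1}^{(i)}$), so under the CB event,
\begin{equation}
  \overline{f}_{j-1}^{(i)}\bigl(\tilde{\bm{x}}_{j-1}^{(i)\ast}\bigr) - \overline{f}_{j-1}^{(i)}\bigl(\bm{x}_{j,m}^{(i)}\bigr)
  \le \mathrm{ucb}_{j-1}^{(i)}\bigl(\tilde{\bm{x}}_{j-1}^{(i)\ast}\bigr) - \mathrm{lcb}_{j-1}^{(i)}\bigl(\bm{x}_{j,m}^{(i)}\bigr).
\end{equation}
Since $\bm{x}_{j,m}^{(i)} \in \mathcal{X}_j^{(i)}$, the elimination rule yields $\mathrm{ucb}_{j-1}^{(i)}(\bm{x}_{j,m}^{(i)}) \ge \mathrm{lcb}_{j-1}^{(i)}(\tilde{\bm{x}}_{j-1}^{(i)\ast})$. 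Splitting the right-hand side above at $\mathrm{lcb}_{j-1}^{(i)}(\tilde{\bm{x}}_{j-1}^{(i)\ast})$ and using $\mathrm{ucb} - \mathrm{lcb} = 2\beta_T^{1/2}\sigma$ upgrades it to $2\beta_T^{1/2}\bigl[\sigma(\tilde{\bm{x}}_{j-1}^{(i)\ast}; \bX_{j-1}^{(i)}) + \sigma(\bm{x}_{j,m}^{(i)}; \bX_{j-1}^{(i)})\bigr]$.

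Next, the \emph{unordered} set of queries $\bX_{j-1}^{(i)}$ coincides with the MVR candidate set $\mathcal{S}_{j-1}^{(i)}$ constructed in $\mathcal{X}_{j-1}^{(i)}$, because the random permutation at Line~11 only reorders the observations and $\sigma(\cdot;\cdot)$ depends only on the conditioning set. The standard MVR guarantee (cf.\ \citealt{vakili2021optimal}) therefore yields $\max_{\bm{x} \in \mathcal{X}_{j-1}^{(i)}} \sigma^2(\bm{x}; \bX_{j-1}^{(i)}) \le C_1 \gamma_{N_{j-1}^{(i)}}/N_{j-1}^{(i)} \le C_1 \gamma_{T^{(i)}}/N_{j-1}^{(i)}$. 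Summing the per-sample bound $4\beta_T^{1/2}\sqrt{C_1 \gamma_{T^{(i)}}/N_{j-1}^{(i)}}$ over $m \in [N_j^{(i)}]$ gives a per-batch bound of $4\beta_T^{1/2}\sqrt{C_1 \gamma_{T^{(i)}}}\,N_j^{(i)}/\sqrt{N_{j-1}^{(i)}}$. Plugging in $N_j^{(i)} \le \sqrt{T^{(i)} N_{j-1}^{(i)}} + 1$ together with $N_{j-1}^{(i)} \ge \sqrt{T^{(i)}}$---which holds by induction from $N_1^{(i)} = \lceil\sqrt{T^{(i)}}\rceil$ once one notes that only the terminal batch can be truncated by the $T^{(i)} - \sum N$ clipping---gives $N_j^{(i)}/\sqrt{N_{j-1}^{(i)}} \le \sqrt{T^{(i)}} + T^{(i)-1/4}$. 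Summing over $j = 2,\ldots,Q^{(i)}$ with $Q^{(i)} - 1 \le \log_2\log_2 T^{(i)} + 1$ then yields exactly the stated bound.

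I do not anticipate a serious obstacle. The one point worth being careful about is the applicability of the MVR guarantee to $\bX_{j-1}^{(i)}$ \emph{after} the random permutation, which is settled by permutation-invariance of $\sigma$; the remaining verification that the ceiling overhead and the non-truncation of non-terminal batches really produce the clean $\sqrt{T^{(i)}} + T^{(i)-1/4}$ factor is just elementary algebra with the batch-size recursion.
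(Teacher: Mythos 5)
Your proposal is correct and follows essentially the same route as the paper's proof: bound the gap by $\mathrm{ucb}-\mathrm{lcb}$, use the elimination rule to convert it to $2\beta_T^{1/2}[\sigma(\tilde{\bm{x}}_{j-1}^{(i)\ast};\bX_{j-1}^{(i)})+\sigma(\bm{x}_{j,m}^{(i)};\bX_{j-1}^{(i)})]$, apply the maximum-variance-reduction bound $\max_{\bx\in\mX_{j-1}^{(i)}}\sigma(\bx;\bX_{j-1}^{(i)})\le\sqrt{C_1\gamma_{T^{(i)}}/N_{j-1}^{(i)}}$, and sum using $N_j^{(i)}/\sqrt{N_{j-1}^{(i)}}\le\sqrt{T^{(i)}}+T^{(i)-1/4}$ and $Q^{(i)}-1\le\log_2\log_2 T^{(i)}+1$. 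The only differences are cosmetic (you split the ucb--lcb gap at $\mathrm{lcb}_{j-1}^{(i)}(\tilde{\bm{x}}_{j-1}^{(i)\ast})$ rather than at $\mathrm{ucb}_{j-1}^{(i)}(\bm{x}_{j,m}^{(i)})$, and you spell out the batch-size recursion and the permutation-invariance of $\sigma$ that the paper leaves implicit).
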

\begin{proof}
    Under the event \eqref{eq:conf_event},
    \begin{align}
        &\overline{f}_{j-1}^{(i)}\left(\tilde{\bm{x}}_{j-1}^{(i)\ast}\right) - \overline{f}_{j-1}^{(i)}\left(\bm{x}_{j, m}^{(i)}\right) \\
        &\leq \text{ucb}_{j-1}^{(i)}\left(\tilde{\bm{x}}_{j-1}^{(i)\ast}\right) - \text{lcb}_{j-1}^{(i)}\left(\bm{x}_{j, m}^{(i)}\right) \\
        &= \text{ucb}_{j-1}^{(i)}\left(\tilde{\bm{x}}_{j-1}^{(i)\ast}\right) - \text{ucb}_{j-1}^{(i)}\left(\bm{x}_{j, m}^{(i)}\right) + 2\beta_T^{1/2} \sigma \left(\bm{x}_{j, m}^{(i)}; \bX_{j-1}^{(i)}\right)  \\
        &\leq \text{ucb}_{j-1}^{(i)}\left(\tilde{\bm{x}}_{j-1}^{(i)\ast}\right) - \max_{\bm{x} \in \mathcal{X}_{j-1}^{(i)}} \text{lcb}_{j-1}^{(i)}\left(\bm{x}\right) + 2\beta_T^{1/2} \sigma \left(\bm{x}_{j, m}^{(i)}; \bX_{j-1}^{(i)}\right)\\
        &\leq \text{ucb}_{j-1}^{(i)}\left(\tilde{\bm{x}}_{j-1}^{(i)\ast}\right) - \text{lcb}_{j-1}^{(i)}\left(\tilde{\bm{x}}_{j-1}^{(i)\ast}\right) + 2\beta_T^{1/2} \sigma \left(\bm{x}_{j, m}^{(i)}; \bX_{j-1}^{(i)}\right) \\
        &\leq 4 \beta_T^{1/2} \max_{\bm{x} \in \mathcal{X}_{j-1}^{(i)}} \sigma \left(\bm{x}; \bX_{j-1}^{(i)}\right) \\
        &\leq 4 \sqrt{\frac{C_1 \beta_T \gamma_{N_{j-1}^{(i)}}}{N_{j-1}^{(i)}}} \\
        &\leq 4 \sqrt{\frac{C_1 \beta_T \gamma_{T^{(i)}}}{N_{j-1}^{(i)}}}.
    \end{align}
    Therefore, 
    \begin{align}
        &\sum_{j=2}^{Q^{(i)}} \sum_{m=1}^{N_{j}^{(i)}} \left[ \overline{f}_{j-1}^{(i)}\left(\tilde{\bm{x}}_{j-1}^{(i)\ast}\right) - \overline{f}_{j-1}^{(i)}\left(\bm{x}_{j, m}^{(i)}\right) \right] \\
        &\leq 4 \sum_{j=2}^{Q^{(i)}} N_{j}^{(i)} \sqrt{\frac{C_1 \beta_T \gamma_{T^{(i)}}}{N_{j-1}^{(i)}}} \\
        &\leq 4 \left(Q^{(i)} - 1\right) \left(\sqrt{T^{(i)}} + T^{(i)-1/4}\right) \sqrt{C_1 \beta_T \gamma_{T^{(i)}}} \\
        &\leq 4 \left(\log_2 \log_2 T^{(i)} + 1\right) \left(\sqrt{T^{(i)}} + T^{(i)-1/4}\right) \sqrt{C_1 \beta_T \gamma_{T^{(i)}}}.
    \end{align}
    In the second line from the bottom, we used the fact that $\frac{N_{j}^{(i)}}{\sqrt{N_{j-1}^{(i)}}} \leq \sqrt{T^{(i)}} + T^{(i)-1/4}$ holds (for example, see the proof of Theorem~1 in \cite{li2022gaussian}).
    The last line follows from Proposition~1 in \cite{li2022gaussian}.
\end{proof}

\begin{proof}[Proof of \thmref{thm:regret}]
    Let $R_T^{(i)} \coloneqq \sum_{j=1}^{Q^{(i)}} \sum_{m=1}^{N_j^{(i)}} f_{j,m}^{(i)}(\bx_{j,m}^{(i)\ast}) - f_{j,m}^{(i)}(\bx_{j,m}^{(i)})$ be the regret incurred over $i$-th interval. 
    Then, we decompose the regret $R_T^{(i)}$ as
    \begin{align}
        \label{eq:thm:regret-first}
        R_T^{(i)} 
        =& \sum_{m=1}^{N_1^{(i)}} f_{1,m}^{(i)}\left(\bx_{1,m}^{(i)\ast}\right) - f_{1,m}^{(i)}\left(\bx_{1,m}^{(i)}\right)
        + \sum_{j=2}^{N_j^{(i)}} \sum_{m=1}^{N_j^{(i)}} f_{j,m}^{(i)} \left(\bx_{j,m}^{(i)\ast}\right) - \overline{f}_{j-1}^{(i)}\left(\tilde{\bm{x}}_{j-1}^{(i)\ast}\right) \\
        +& \sum_{j=2}^{N_j^{(i)}} \sum_{m=1}^{N_j^{(i)}} \overline{f}_{j-1}^{(i)}\left(\tilde{\bm{x}}_{j-1}^{(i)\ast}\right) - \overline{f}_{j-1}^{(i)}\left(\bm{x}_{j, m}^{(i)}\right)
        + \sum_{j=2}^{N_j^{(i)}} \sum_{m=1}^{N_j^{(i)}} \overline{f}_{j-1}^{(i)}\left(\bm{x}_{j, m}^{(i)}\right) - f_{j,m}^{(i)}\left(\bx_{j,m}^{(i)}\right).
    \end{align}
    By using the aforementioned lemmas, we can obtain the upper bound of each term as follows:
    \begin{itemize}
        \item Since $N_1^{(i)} \leq \sqrt{T^{(i)}} + 1$, the first term is bounded from above by $2B (\sqrt{T^{(i)}} + 1)$.
        \item From Lemmas~\ref{lem:fx_fox}--\ref{lem:of_oftx}, the 
        second term is bounded from above by $(6 + \log_2 \log_2 T^{(i)}) T^{(i)} V_T^{(i)}$.
        \item From \lemref{lem:pe_bound}, the third term is bounded from above by $4 \left(\log_2 \log_2 T^{(i)} + 1\right) \left( \sqrt{T^{(i)}} + T^{(i)-1/4}\right) \sqrt{C_1 \gamma_{T^{(i)}} \beta_T}$.
        \item From \lemref{lem:of_f}, the fourth term is bounded from above by $T^{(i)} V_T^{(i)}$.
    \end{itemize}
    Combining the above upper bounds with $T^{(i)} \leq H$ and $V_T = \sum_{i=1}^{\lceil T/H \rceil} V_T^{(i)}$, we have
    \begin{align}
        R_T 
        =& \sum_{i=1}^{\lceil T/H \rceil} R_T^{(i)} \\
        \begin{split}
            \label{eq:total_regret_upper}
            \leq& (7 + \log_2 \log_2 H) V_T H \\
            &+ \left\lceil \frac{T}{H} \right\rceil \sbr{2B \rbr{\sqrt{H}+ 1} + 4\rbr{\log_2 \log_2 H + 1} \rbr{\sqrt{H} + 1} \sqrt{C_1 \gamma_{H} \beta_T} }.
        \end{split}
    \end{align}
    The desired results are obtained by choosing the proper $H$ in the above inequality.
\paragraph{For SE kernel}
When $k = k_{\text{SE}}$, 
    $\gamma_H = \mathcal{O}\left(\ln^{d+1} H\right)$.
    Therefore, we have
    \begin{align}
        \label{eq:se_reg_fixed_H}
        R_T = \mathcal{O}\left( V_T H \log_2 \log_2 T + T \left( \log_2 \log_2 T\right) \left(\ln T\right)^{1/2}\sqrt{\frac{\ln^{d+1}H}{H}}\right).
    \end{align}
    Here,
    \begin{align}
        \label{eq:se_H_org}
        &H \geq T^{2/3} V_T^{-2/3} (\ln T)^{(d+2)/3} \\
        &\Leftrightarrow H^{3/2} \geq V_T^{-1} T (\ln T)^{1/2} (\ln T)^{(d+1)/2} \\
        &\Rightarrow H^{3/2} \geq V_T^{-1} T (\ln T)^{1/2} (\ln H)^{(d+1)/2} \\
        \label{eq:se_H_res}
        &\Rightarrow V_T H \log_2 \log_2 T \geq T \left( \log_2 \log_2 T\right) \left(\ln T\right)^{1/2}\sqrt{\frac{\ln^{d+1}H}{H}}.
    \end{align}
    Therefore, by setting 
    $H = \left\lceil T^{2/3} V_T^{-2/3} (\ln T)^{(d+2)/3} \right\rceil$,
    \begin{align}
        &V_T H \log_2 \log_2 T + T \left( \log_2 \log_2 T\right) \left(\ln T\right)^{1/2}\sqrt{\frac{\ln^{d+1}H}{H}} \\
        &\leq 2 V_T \left( T^{2/3} V_T^{-2/3} (\ln T)^{(d+2)/3} + 1 \right) \left( \log_2 \log_2 T \right) \\
        & = \tilde{O}\rbr{V_T^{\frac{1}{3}} T^{\frac{2}{3}}}.
        \label{eq:se_resulting_reg}
    \end{align}
\paragraph{For Mat\'ern kernel} When $k = k_{\text{Mat\'ern}}$, $\gamma_H = \mathcal{O}\left(H^{\frac{d}{2\nu + d}} \ln^{\frac{2\nu}{2\nu + d}} H\right)$.
    Therefore, 
    \begin{equation}
        \label{eq:mat_reg_fixed_H}
        R_T = \mathcal{O}\left( V_T H \log_2 \log_2 T + T \left( \log_2 \log_2 T\right) \left(\ln T\right)^{1/2}\sqrt{\frac{H^{\frac{d}{2\nu + d}} \ln^{\frac{2\nu}{2\nu + d}} H }{H}}\right).
    \end{equation}
    Now,
    \begin{align}
        \label{eq:mat_H_org}
        &H \geq V_T^{-\frac{2\nu+d}{3\nu + d}} T^{\frac{2\nu+d}{3\nu+d}} (\ln T)^{\frac{4\nu + d}{2(3\nu+d)}} \\
        &\Leftrightarrow H^{\frac{3\nu + d}{2\nu + d}} \geq V_T^{-1} T (\ln T)^{1/2} (\ln T)^{\frac{2\nu}{2(2\nu + d)}} \\
        &\Rightarrow H^{\frac{3\nu + d}{2\nu + d}} \geq V_T^{-1} T (\ln T)^{1/2} (\ln H)^{\frac{2\nu}{2(2\nu + d)}} \\
        \label{eq:mat_H_res}
        &\Leftrightarrow  V_T H \log_2 \log_2 T \geq T \left( \log_2 \log_2 T\right) \left(\ln T\right)^{1/2}\sqrt{\frac{H^{\frac{d}{2\nu + d}} \ln^{\frac{2\nu}{2\nu + d}} H }{H}}.
    \end{align}
    Therefore, by setting $H = \left\lceil V_T^{-\frac{2\nu+d}{3\nu + d}} T^{\frac{2\nu+d}{3\nu+d}} (\ln T)^{\frac{4\nu + d}{6\nu+2d}} \right\rceil$, we have
    \begin{align}
        &V_T H \log_2 \log_2 T + T \left( \log_2 \log_2 T\right) \left(\ln T\right)^{1/2}\sqrt{\frac{H^{\frac{d}{2\nu + d}} \ln^{\frac{2\nu}{2\nu + d}} H }{H}} \\
        &\leq 2 V_T H \log_2 \log_2 T \\
        &= \tilde{O}\left(T^{\frac{2\nu + d}{3\nu + d}}V_T^{\frac{\nu}{3\nu + d}} \right).
        \label{eq:mat_resulting_reg}
    \end{align}
\end{proof}

\section{Near-Optimal Version of OPKB for Mat\'ern RKHS}
\label{sec:mod_opkb}
We show that the OPKB algorithm with the restart-reset strategy 
can achieve near-optimal regret upper bound even in the Mat\'ern RKHS by properly selecting the restarting interval.
The formal statement is described in the following \thmref{thm:mod_opkb}.

\begin{theorem}[The modified version of OPKB algorithm with the restart-reset strategy.]
\label{thm:mod_opkb}
Assume that the underlying kernel is Mat\'ern kernel with smoothness parameter $\nu > 1/2$.
Furthermore, suppose that Assumptions~\ref{asmp:noise}--\ref{asmp:nons} and $V_T \geq T^{-\frac{\nu}{2\nu +d}} \ln^{\frac{\nu + d}{2\nu + d}} T$ hold. 
Then, if we set the restarting interval $H$ as $H = \lceil T^{\frac{2\nu + d}{3\nu + d}} V_T^{-\frac{2\nu + d}{3\nu + d}} \ln^{\frac{4\nu + d}{6\nu+2d}} T\rceil$, 
the OPKB algorithm (Algorithm 2 in \citet{hong2023optimization}) with the restart-reset strategy 
achieve $R_T = \tilde{\mO} (V_T^{\frac{\nu}{3\nu+d}} T^{\frac{2\nu + d}{3\nu + d}})$ with probability 
at least $1 - \delta$.
\end{theorem}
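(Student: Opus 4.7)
The plan is the standard restart-reset reduction: partition $[T]$ into $N\coloneqq\lceil T/H\rceil$ disjoint blocks $I_1,\dots,I_N$ of length at most $H$, reset and rerun the stationary OPKB (Algorithm~2 of \citet{hong2023optimization}) on each block, and then optimize the block length $H$ against the Mat\'ern MIG rate. Writing $V_T^{(i)}\coloneqq\sum_{t\in I_i,\,t\geq 2}\|f_t-f_{t-1}\|_\infty$ so that $\sum_i V_T^{(i)}\leq V_T$, the key step is to upper bound the regret $R_T^{(i)}$ incurred over block $i$ by the sum of (i) the stationary regret of OPKB against a fixed reference function $\bar f^{(i)}$ in the block (say $\bar f^{(i)}\coloneqq f_{t_0^{(i)}}$ at the block's start), and (ii) the additional regret caused by the in-block drift $V_T^{(i)}$.

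Since $\|f_t-\bar f^{(i)}\|_\infty\leq V_T^{(i)}$ throughout $I_i$, comparing arg-maxes gives $f_t(\bx_t^*)-f_t(\bx_t)\leq [\bar f^{(i)}(\bar{\bx}^{(i)*})-\bar f^{(i)}(\bx_t)]+2V_T^{(i)}$. Moreover the observation $y_t=f_t(\bx_t)+\epsilon_t$ can be rewritten as $\bar f^{(i)}(\bx_t)+\epsilon_t+\xi_t$ with $|\xi_t|\leq V_T^{(i)}$ a deterministic bias; absorbing this bias into OPKB's confidence bounds (widening the radius by $\mO(V_T^{(i)})$) preserves the stationary guarantee and adds at most $\mO(|I_i|V_T^{(i)})$ to the regret. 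Hong et al.'s stationary analysis then yields $\tilde{\mO}(\sqrt{|I_i|\gamma_{|I_i|}})$ for the zero-bias baseline, so
\[
R_T^{(i)}=\tilde{\mO}\bigl(\sqrt{H\gamma_H}+HV_T^{(i)}\bigr),\qquad R_T=\sum_{i=1}^N R_T^{(i)}=\tilde{\mO}\bigl(T\sqrt{\gamma_H/H}+HV_T\bigr).
\]
Using the Mat\'ern MIG bound $\gamma_H=\tilde{\mO}(H^{d/(2\nu+d)})$ for $\nu>1/2$, the first term becomes $\tilde{\mO}(TH^{-\nu/(2\nu+d)})$; balancing it against $HV_T$ gives $H\asymp(T/V_T)^{(2\nu+d)/(3\nu+d)}$ up to the stated polylogarithmic factor, and substituting back yields $R_T=\tilde{\mO}(T^{(2\nu+d)/(3\nu+d)}V_T^{\nu/(3\nu+d)})$. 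The lower-bound assumption $V_T\geq T^{-\nu/(2\nu+d)}\ln^{(\nu+d)/(2\nu+d)}T$ ensures $1\leq H\leq T$, and a union bound over the $N$ blocks (with confidence budget $\delta/N$ per block) upgrades the per-block high-probability guarantee to an overall $1-\delta$ guarantee.

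The main obstacle will be justifying the ``bias-into-confidence-bound'' step cleanly, because OPKB is an optimization-based algorithm whose near-optimal $\tilde{\mO}(\sqrt{T\gamma_T})$ regret is proved under a strictly stationary assumption and whose estimator may a priori be sensitive to per-observation deterministic perturbations. I would address this by re-tracing the proof of Algorithm~2 of \citet{hong2023optimization} while carrying the bias term through their design-dependent concentration inequality, verifying that adding $V_T^{(i)}$ to every confidence radius still allows the potential-maximizer elimination step to succeed and yields only the additive $\mO(HV_T^{(i)})$ extra regret claimed above---this is the non-stationary counterpart to the within-batch drift-handling argument used in our R-PERP proof (see Lemmas~\ref{lem:fx_fox}--\ref{lem:of_f} in \appref{sec:proof_sec_rperp}). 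A minor bookkeeping task is tracking the exact log exponent $(4\nu+d)/(6\nu+2d)$ in $H$, which arises from inserting the $\ln^{2\nu/(2\nu+d)}H$ factor hidden in $\gamma_H$ into the balance equation.
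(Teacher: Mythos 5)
Your overall architecture (restart--reset into $\lceil T/H\rceil$ blocks, a per-block bound of the form $\tilde{\mO}(\sqrt{H\gamma_H}+HV_T^{(i)})$, the Mat\'ern MIG rate, and the balancing of $H$, with the condition on $V_T$ guaranteeing $1\le H\le T$) matches the paper's. The gap sits exactly at the step you flag as the ``main obstacle,'' and the fix you propose is the one that does not work. You compare each block against the \emph{fixed} reference function $\bar f^{(i)}=f_{t_0^{(i)}}$, rewrite $y_t=\bar f^{(i)}(\bx_t)+\epsilon_t+\xi_t$ with $|\xi_t|\le V_T^{(i)}$, and claim that widening OPKB's confidence radius by $\mO(V_T^{(i)})$ costs only $\mO(HV_T^{(i)})$ extra regret. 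But a deterministic per-observation perturbation of size $V$ does not pass through a kernel-ridge or inverse-propensity estimator with only an $\mO(V)$ effect: for the regression estimator the induced shift of the posterior mean is bounded only by $V\sum_t\bigl|[\bk(\bx,\bX)^\top(\bK+\lambda\bI)^{-1}]_t\bigr|\lesssim V\sqrt{H\gamma_H/\lambda}$, and for OPKB's IPS estimate the Cauchy--Schwarz bound gives an amplification by the design-dependent leverage $\|\phi(\bx)\|_{\Sigma_\pi^{-1}}$, which is $\mathrm{poly}(\gamma_H)$ rather than $O(1)$. This is precisely the ``$\sqrt{H\gamma_H}$ degeneration'' the paper cites (Remark~1 of \citet{hong2023optimization}) as the reason naive regression-based confidence bounds are sub-optimal in the non-stationary setting, and it would destroy the $T^{\frac{2\nu+d}{3\nu+d}}V_T^{\frac{\nu}{3\nu+d}}$ rate after balancing.

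The paper's proof (\appref{sec:mod_opkb}) avoids introducing any estimator bias by changing the comparator rather than the confidence radius: after an Azuma--Hoeffding step, it decomposes $f_t(\bx_t^\ast)-\Ep_t[f_t(\bx_t)]$ within OPKB's batch $j$ around the \emph{average} function $\overline f_{\mC(j-1)}$ over the preceding batches. OPKB's estimator is (conditionally) unbiased for this average function, so Lemma~4.5 of \citet{hong2023optimization} bounds the ``stationary-like'' term $\Ep_t[\overline f_{\mC(j-1)}(\overline\bx^\ast_{\mC(j-1)})-\overline f_{\mC(j-1)}(\bx_t)]$ with no inflation, while the drift enters only through direct sup-norm comparisons of function values (the terms $A_1,A_2,A_4$), each trivially $\mO(V_T^{(i)})$ per step --- the analogue of Lemmas~\ref{lem:fx_fox}--\ref{lem:of_f} that you correctly point to. If you replace your fixed reference $f_{t_0^{(i)}}$ by the running average function, your argument essentially becomes the paper's; as written, the ``bias-into-confidence-bound'' step is a genuine gap.
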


To show the above statement, it is enough to show that $R_T^{(i)} = \tilde{O}(V_T H + \sqrt{\gamma_H H})$, 
where $R_T^{(i)}$ denote the cumulative regret among $i$-th interval (also defined in the proof of \thmref{thm:regret}).
This is because that $\tilde{O}(V_T H + \sqrt{\gamma_H H})$ interval regret implies the same order of the cumulative regret as that of R-PERP (as shown in Eq.~\eqref{eq:total_regret_upper}); therefore, if we once obtain the 
$\tilde{O}(V_T H + \sqrt{\gamma_H H})$ interval regret, we can obtain the $\tilde{\mO} (V_T^{\frac{\nu}{3\nu+d}} T^{\frac{2\nu + d}{3\nu + d}})$ cumulative regret by following final part of the proof of \thmref{thm:regret}.

Note that the stationary base algorithm (without restart-reset strategy) with $\tilde{O}(V_T T + \sqrt{\gamma_T T})$ cumulative regret under non-stationary environment achieve $\tilde{O}(V_T H + \sqrt{\gamma_H H})$ interval regret when restart-reset strategy is applied.
Therefore, we show that $\tilde{O}(V_T T + \sqrt{\gamma_T T})$ cumulative regret of the standard OPKB algorithm
under Assumptions~\ref{asmp:noise}--\ref{asmp:nons}.

Hereafter, we use the same notation as those of \citet{hong2023optimization} for our proof unless otherwise specified.

\begin{proof}[Proof of \thmref{thm:mod_opkb}]
    As with the proof of Theorem~4.6 in \citet{hong2023optimization}, 
    we have the following inequality by leveraging Azuma-Hoeffding inequality\footnote{The setting of \citet{hong2023optimization} assumes the boundness assumption $f_t(\bx) \in [0, 1]$ of the reward function. 
    On the other hand, \asmpref{asmp:func} implies $f_t(\bx) \in [-B, B]$ in our setting. This difference in the magnitude of the reward function does not break the validity of the proof and appears as the difference of the constant factor.
    }:
    \begin{equation}
        R_{\mB(j)} \leq \tilde{O}(\sqrt{2^j E}) + \sum_{t \in \mB(j)} f_t(\bx_t^{\ast}) - \Ep_t[f_t(\bx_t)],
    \end{equation}
    where $\mB(j) \in [T]$ is the $j$-th batch index set of OPKB, and $R_{\mB(j)}$
    denote the cumulative regret incurred over $\mB(j)$.
    Furthermore, we decompose the second term into the following four terms:
    \begin{align}
        &f_t(\bx_t^{\ast}) - \Ep_t[f_t(\bx_t)] \\
        &= \underbrace{f_t(\bx_t^{\ast}) - f_t(\overline{\bx}_{\mC(j-1)}^{\ast})}_{A_1}  
        + \underbrace{f_t(\overline{\bx}_{\mC(j-1)}^{\ast}) - \overline{f}_{\mathcal{C}(j-1)}(\overline{\bx}_{\mC(j-1)}^{\ast})}_{A_2} \\
        &+ \underbrace{\Ep_t[\overline{f}_{\mathcal{C}(j-1)}(\overline{\bx}_{\mC(j-1)}^{\ast}) - \overline{f}_{\mathcal{C}(j-1)}(\bx_t)]}_{A_3} + \underbrace{\Ep_t[\overline{f}_{\mathcal{C}(j-1)}(\bx_t) - f_t(\bx_t)]}_{A_4},
    \end{align}
    where $\mC(j) \coloneqq \bigcup_{\tilde{j} \leq j} \mB(\tilde{j})$ is the cumulative time step set until the end of the batch $j$. Furthermore, $\overline{f}_{\mathcal{C}(j-1)}(\bx) \coloneqq \sum_{t \in \mC(j-1)} f_t(\bx) / |\mC(j-1)|$ and $\overline{\bx}_{\mC(j-1)}^{\ast}$ represent the average function over $\mC(j-1)$ and its maximizer, respectively. 
    As with the proof of stationary OPKB (Theorem~4.6 in \citet{hong2023optimization}), the upper bound of $A_3$ is obtained as $O(V_{\mC(j)} + \mu_j)$ using Lemma~4.5 in \citet{hong2023optimization}. 
    As for $A_1, A_2$, and $A_4$, we can easily confirm that the sum of those terms is $O(V_T)$ (e.g., 
    by relying on the same arguments as those of \lemref{lem:fx_fox}--\ref{lem:of_oftx}).
    By aggregating and arranging the above upper bounds, the cumulative regret $R_{\mB(j)}$ over $\mB(j)$ is bounded from above by $O(V_T |\mB(j)| + E\sqrt{2^j})$.
    Therefore, we have
    \begin{align}
        R_T 
        &= \sum_{j} R_{\mB(j)} \\
        &= O(V_T T) + \sum_{j} \tilde{O}(E\sqrt{2^j}) \\
        &= \tilde{O}(V_T T + E\sqrt{T/E}) \\
        &= \tilde{O}(V_T T + \sqrt{\gamma_T T \ln |\mX|}),
    \end{align}
    where the third line follows from Schwarz's inequality. 
    As described before the proof, the above upper bound implies the 
    $\tilde{O}(V_T H + \sqrt{\gamma_H H})$ interval regret when the OPKB algorithm is used as the base algorithm 
    of the restart-reset-based procedures. 
    Therefore, the proof is completed.
\end{proof}

\begin{table}[t]
    \centering
    \caption{The comparison of existing and our algorithms for regrets and computational costs under the setting where $V_T$ is unknown. For the regret upper bound described in the table, we assume that $V_T$ satisfies $V_T > c$, where $c > 0$ is any fixed constant.}
    \label{tab:comp_alg_unknown_vt}
    \begin{tabular}{c|c|c|c}
        Algorithm & Regret (SE) & Regret (Mat\'ern) &
        \begin{tabular}{c}
             Computational cost  \\
             at step $t \leq T$
        \end{tabular} \\ \hline
        R/SW-GP-UCB & $\tilde{\mO}(T^{\frac{3}{4}} V_T)$ & $\tilde{\mO}(T^{\frac{12\nu + 13d}{16\nu + 8d}} V_T) $ & $\mO(|\mX|t^2)$ \\
        WGP-UCB & $\tilde{\mO}(T^{\frac{3}{4}} V_T)$ & 
        $\tilde{\mO}(T^{\frac{12\nu + 13d}{16\nu + 8d}} V_T) $ & $\mO(|\mX|t^2)$ \\
        OPKB & $\tilde{\mO}(T^{\frac{2}{3}} V_T^{\frac{1}{3}})$ 
        & $\tilde{\mO}(T^{\frac{4\nu + 3d}{6\nu + 3d}} V_T^{\frac{1}{3}})$ & $\mO(M |\mX|^3)$ \\
        R-PREP (Ours) & $\tilde{\mO}(T^{\frac{2}{3}} V_T)$ & 
$\tilde{\mO}(T^{\frac{2\nu + d}{3\nu + d}} V_T)$ & $\mO(|\mX|t^2)$ 
    \end{tabular}
\end{table}

\section{Regret Upper Bound for Unknown $V_T$}
\label{sec:unknown_vt}

In this section, we derive the regret upper bound for R-PERP in a setting where the total drift upper bound \( V_T \) is unknown. 
Importantly, in the proof in \thmref{thm:regret}, the prior information of $V_T$ is used only for tightening the regret upper bound.
That is, Lemmas~\ref{lem:fx_fox}-\ref{lem:pe_bound} hold for any $H \in [T] \setminus \{1 \}$.
Therefore, the derivation of Eqs.~\eqref{eq:thm:regret-first}-\eqref{eq:total_regret_upper} can be obtained even if $H$ is determined independently of $V_T$.
Hence, the following regret upper bound can be obtained by choosing \( H \) independently of \( V_T \) and slightly modifying the derivation of Eqs.~\eqref{eq:se_reg_fixed_H}-\eqref{eq:se_resulting_reg} and Eqs.~\eqref{eq:mat_reg_fixed_H}-\eqref{eq:mat_resulting_reg} for SE and Mat\'ern kernels, respectively:
%

\begin{itemize}
    \item \textbf{SE Kernel}: By setting \( H = \lceil T^{2/3} (\ln T)^{(d+2)/3}\rceil \), it can be shown in a similar manner to Eq.~\eqref{eq:se_H_org}--\eqref{eq:se_H_res} that 
    \begin{align}
        H \log_2 \log_2 T \geq T \left( \log_2 \log_2 T\right) \left(\ln T\right)^{1/2}\sqrt{\frac{\ln^{d+1}H}{H}}.
    \end{align}
    Therefore, $R_T = \tilde{O}(T^{2/3} (V_T+1))$ is satisfied by R-PERP.
    \item \textbf{Matérn Kernel}: By setting \( H = \lceil T^{\frac{2\nu + d}{3\nu + d}} (\ln T)^{\frac{4\nu + d}{6\nu + 2d}}\rceil \), we have
    \[
    H \log_2 \log_2 T \geq T \left( \log_2 \log_2 T\right) \left(\ln T\right)^{1/2}\sqrt{\frac{H^{\frac{d}{2\nu + d}} \ln^{\frac{2\nu}{2\nu + d}} H }{H}}
    \]
    in a similar manner to Eq.~\eqref{eq:mat_H_org}--\eqref{eq:mat_H_res}; thus, combining with Eq.~\eqref{eq:mat_reg_fixed_H}, it can be shown that R-PERP satisfies the regret upper bound $R_T = \tilde{O}(T^{\frac{2\nu + d}{3\nu + d}} (V_T+1))$.
\end{itemize}

\tabref{tab:comp_alg_unknown_vt} lists the regret and computational complexity of each method in the setting where \( V_T \) is unknown. In the setting where \( V_T \) is unknown, OPKB shows better theoretical performance than R-PERP in terms of the dependence of \( V_T \) by utilizing an appropriate adaptive reset scheduling. Extending R-PERP based on the ideas of adaptive reset scheduling of OPKB is an important future research direction. On the other hand, the computational complexity issue with OPKB exists regardless of whether \( V_T \) is known or unknown. Therefore, R-PERP shows the best regret guarantees among the algorithms applicable in scenarios where \( \mathcal{X} \) is huge, such as R/SW-GP-UCB and W-GP-UCB.

\end{document}